\def\@ACM@checkaffil{
    \if@ACM@instpresent\else
    \ClassWarningNoLine{\@classname}{No institution present for an affiliation}%
    \fi
    \if@ACM@citypresent\else
    \ClassWarningNoLine{\@classname}{No city present for an affiliation}%
    \fi
    \if@ACM@countrypresent\else
        \ClassWarningNoLine{\@classname}{No country present for an affiliation}%
    \fi
}
\newtheorem{theorem}{Theorem}
\newtheorem{corollary}{Corollary}
\newtheorem{lemma}{Lemma}
\newtheorem{claim}{Claim}
\newtheorem{observation}{Observation}
\newtheorem{proposition}{Proposition}
\newtheorem{assumption}{Assumption}
\newtheorem{axiom}{Axiom}
\newcommand{\pivec}{\boldsymbol{\pi}}
\newcommand{\inst}{\mathcal{I}}
\newcommand{\gold}{\textsc{Goldilocks} }
\newcommand{\linear}{\textsc{Linear} }
\newcommand{\maximin}{\textsc{Maximin}}
\newcommand{\leximin}{\textsc{Leximin} }
\newcommand{\minimax}{\textsc{Minimax} }
\newcommand{\nash}{\textsc{Nash} }
\newcommand{\dbelow}{\delta_{below}}
\newcommand{\dabove}{\delta_{above}}
\newcommand{\emdash}{\,---\,}
\newcommand{\mint}{\textsf{manip}_{int}}
\newcommand{\mcomp}
{\textsf{manip}_{comp}}
\newcommand{\manipinst}{\inst^{\to_C \boldsymbol{\tilde{w}}}}
\newcommand{\vecprobs}{\textsf{p}}
\newcommand{\panelcomp}{\textsf{K}}
\newcommand{\poolcomp}{\textsf{N}}
\newcommand{\algo}{\textsc{A}}
\newcommand{\pdist}{\mathbf{d}}
\author{Carmel Baharav}
\email{cbaharav@student.ethz.ch}
\affiliation{%
  \institution{ETH Zurich}
  \city{Zurich}
  \country{Switzerland, cbaharav@student.ethz.ch}
}
\author{Bailey Flanigan}
\email{bflaniga@andrew.cmu.edu}
\affiliation{%
  \institution{Carnegie Mellon University}
  \city{Pittsburgh}
  \state{PA}
  \country{USA, bflaniga@andrew.cmu.edu}
}
\title{Fair, Manipulation-Robust, and Transparent Sortition}
\begin{abstract}
\textit{Sortition}, the random selection of political representatives, is increasingly being used around the world to choose participants of deliberative processes like \textit{Citizens' Assemblies}. Motivated by the practical importance of sortition, there has been a recent flurry of computer science research on sortition algorithms, whose task it is to randomly select a panel that satisfies several quotas enforcing representation of key population subgroups. This existing work has contributed an algorithmic approach for sampling a quota-satisfying set of willing participants while ensuring their chances of selection are \textit{maximally equal}, as measured by any convex equality objective. The question, then, is \textit{which equality objective is the right one?} Past work has mainly studied the objectives \textit{Minimax} and \textit{Leximin}, which respectively minimize the maximum and maximize the minimum chance of selection given to any willing participant. Recent work showed that both of these objectives have key weaknesses: \textit{Minimax} is highly \textit{robust to manipulation}, but it is arbitrarily \textit{unfair}; and oppositely, \textit{Leximin} is highly fair but arbitrarily manipulable. 

In light of this gap, we propose a new equality objective, \textit{Goldilocks}, that aims to achieve these ideals simultaneously by ensuring that no potential participant receives \textit{too little or too much} chance of selection. We give tight theoretical bounds on the extent to which \textit{Goldilocks} achieves this, finding that in a very important sense, \textit{Goldilocks} recovers among the best available solutions in a given instance. We then extend these theoretical bounds to the case where the output of \textit{Goldilocks} is transformed to achieve a third goal, \textit{Transparency}. Our empirical analysis of \textit{Goldilocks} in real data is even more promising: we find that this objective achieves nearly instance-optimal minimum and maximum selection probabilities \textit{simultaneously} in most real instances\emdash an outcome not even guaranteed to be possible for any algorithm. 
In many respects, \textit{Goldilocks} closes the question of whether we can simultaneously achieve three key ideals of sortition\emdash \textit{Fairness}, \textit{Manipulation Robustness}, and \textit{Transparency}\emdash and contributes a practicable algorithm for doing so. 
\end{abstract}
\begin{document}

\maketitle

 \section{Introduction}
In a \textit{citizens' assembly}, a panel of \textit{randomly-selected} everyday people is convened to discuss and collectively weigh in on a policy issue. 
Each year, more and more cities, regions, countries, and even supranational bodies are turning to citizens' assemblies\footnote{\textit{Citizens' assemblies} belong to a broader category of closely-related methods called \textit{deliberative minipublics}, which consist also of citizens' juries, citizens' panels, deliberative polls, and other processes of similar form. We will discuss citizens' assemblies as the primary application domain of this paper.} to involve the public in policymaking; prominent recent examples include multiple national citizens' assemblies in France \cite{giraudet2022co,france2}, Scotland's national climate assembly \cite{scotland}, and a permanent assembly instated in the Ostbelgien government \cite{ostbelgian}.

The subject of this paper is the process used to randomly select the panel members, called \textit{sortition}. 
Broadly defined, sortition just means ``random selection'', and it is often thought of as a simple uniform lottery over the population. In practice, however, the task of sortition is more complicated: practitioners require the panel to satisfy custom \textit{quotas}, which enforce near-proportional representation of key population sub-groups. These groups are usually defined by individual features  (e.g., \textit{women} or \textit{right-leaning voters}), but can be defined by intersections of features as well. While representation of groups would in theory be achieved by a uniform lottery, there is \textit{selection bias}: different subgroups tend to agree to participate at very different rates, meaning that a simple lottery would produce a panel that is far from representative. To ensure representation despite selection bias, in practice panels are selected via the following two-stage process:

\begin{enumerate}
    \item[(1)] First, a uniform sample of the population is invited to participate. Those who respond affirmatively form the \textit{pool of volunteers}. Due to selection bias, this pool is typically very skewed compared to the population.
    \item [(2)] All pool members are asked to report their values of the features on which quotas will be imposed. Then, a \textit{selection algorithm} is used to find a panel within the pool, which must satisfy the practitioner-defined quotas and be of predetermined size $k$.
\end{enumerate}
Our focus is the design of the selection algorithm in Stage (2), whose task it is to sample a representative panel from the skewed pool. The skew of the pool relative to the panel prevents any selection algorithm from randomizing over the pool members perfectly uniformly, as would a simple lottery.\footnote{If a group is disproportionately overrepresented in the pool compared to their quota-allotted fraction of the panel, satisfying the quotas requires giving at least one member of this overrepresented group below-average chance of selection.} However, recent work by \citet{flanigan2021fair} has made it possible to randomize \textit{as equally as possible} over pool members: they introduce an algorithmic framework that can make volunteers' probabilities maximally equal subject to the quotas, as measured by any convex \textit{equality objective} $\mathcal{E}$ (i.e., any mapping from a vector of pool members' selection probabilities to a real number measuring how equal they are). 
The ability to make selection probabilities maximally equal is desirable because it offers hope of retaining\emdash at least to a maximum degree possible\emdash the normative ideals granted by a simple lottery, such as \textbf{\textit{Fairness}, \textit{Manipulation Robustness},} and \textbf{\textit{Transparency}} (to be defined shortly). The question is then: \textit{what equality objective $\mathcal{E}$ should we optimize, in order to maximally achieve these ideals}? Subsequent work, which we overview now, has revealed how the choice of $\mathcal{E}$ can have important consequences for these ideals.

The originally proposed equality objective was \textit{Leximin} (a refinement of \textit{Maximin}), which measures equality according to the minimum selection probability and thereby aims to ensure that no selection probability is too low. This choice of objective was motivated by the ideal of \textbf{\textit{Fairness}}: that every willing participant is entitled to their fair share of the chance to participate. 

\textit{Leximin} made substantial fairness gains over existing state-of-the-art algorithms, but subsequent work identified a major weakness of this objective: in theory and in practice, it allows pool members to ensure they are deterministically selected for the panel by misreporting their features at the beginning of Stage (2) \citep{flanigan2024manipulation}. In light of this finding, Flanigan \emph{et al.} defined a new ideal: a selection algorithm's \textit{\textbf{Manipulation Robustness}} is the extent to which it limits how much any agent can increase their chance of selection by misreporting. 

\citet{flanigan2024manipulation} then diagnose the reason \textit{Leximin} is so vulnerable to manipulation: it raises low probabilities without regard for high probabilities. Then, if a prospective participant can guess which identities are essential to raising the lowest probabilities, they can misreport these identities and the algorithm will push their selection probability all the way up to 1. They further show that the well-known equality objective \textit{Nash Welfare} (the geometric mean of selection probabilities) suffers the same problem for the same reason. Motivated by these negative results, they propose a new equality objective, \textit{Minimax}, which minimizes the maximum selection probability.\footnote{Technically, \citet{flanigan2024manipulation} study the $\ell_{\infty}$-norm rather than \textit{Minimax}; in our context, these objectives are essentially the same, so for simplicity we consider \textit{Minimax}.} They show that \textit{Minimax} minimizes agents' incentives for manipulating, thereby achieving optimal \textit{Manipulation Robustness}. Unfortunately, they find that \textit{Minimax} has essentially the opposite problem as \textit{Leximin}: because \textit{Minimax} does not control \textit{low} probabilities, it often gives many people zero selection probability, thereby performing unacceptably poorly with respect to \textit{Fairness}. 

From this related work, we distill three observations: low probabilities are a problem for \textit{Fairness}; high probabilities are a problem for \textit{Manipulation Robustness}; and no known objective controls both simultaneously. These observations motivate the first two questions we will tackle in this paper:\\[-0.75em]

\noindent \textit{\textbf{Question 1:}} Can we design an equality objective $\mathcal{E}$ that ensures optimal simultaneous lower and upper bounds on selection probabilities? and consequently,\\[-0.75em]

\noindent \textit{\textbf{Question 2:}} To what extent do these bounds on the minimum and maximum selection probability permit simultaneous guarantees on $\mathcal{E}$'s \textit{Fairness} and \textit{Manipulation Robustness}?\\[-0.75em]

Much of the paper will be dedicated to simultaneously achieving \textit{Fairness} and \textit{Manipulation Robustness}, because these ideals trade off with one another and are similarly determined by $\mathcal{E}$. Then, we will investigate whether we can achieve these two ideals alongside a third ideal of sortition algorithms considered in previous work: \textit{Transparency}. 

Colloqually, transparency means that the public should be able to confirm that the selection process is actually \textit{random}, and the organizers are not stacking the panel behind the scenes. \citet{flanigan2021transparent} proposed a more precise version of this definition: without reasoning in-depth about probability, the public should be able to observe all volunteers' chances of selection. \citet{flanigan2021fair} proposed a method for achieving this ideal: to round the output of a maximally equal algorithm into a \textit{uniform lottery} over $m$ (potentially duplicated) panels \cite{flanigan2021fair}, and then select the panel by performing this uniform lottery live. This approach permits transparency as follows: prior to the lottery, organizers release an (anonymized) list of panels containing each pool member, allowing any pool member's selection probability to be tabulated by simply counting how many panels they are on and dividing by the total number of panels (usually 1000 in practice). 

The catch, of course, is that transforming the output of a maximally equal algorithm into a uniform lottery over panels could require significant loss in equality. Fortunately, \citet{flanigan2021transparent} proved that there exist rounding procedures guaranteed to produce uniform lotteries that are near-optimal---at least with respect to \textit{Leximin} and \textit{Nash Welfare}, objectives which both target fairness alone.
Given that this uniform lottery approach is used in practice, for our new equality objective to be viable, we must ensure that rounding approximately preserves its \textit{Fairness} and \textit{Manipulation Robustness}. This motivates our third question:\\[-0.75em]

\noindent \textit{\textbf{Question 3:}} If we achieve \textit{Transparency} by rounding the output of our $\mathcal{E}$-optimal algorithm to a uniform lottery, to what extent does $\mathcal{E}$ still achieve \textit{Fairness} and \textit{Manipulation Robustness}?
\subsection{Approach and Contributions}
\textbf{Unification of existing models, and a new equality objective (\Cref{sec:model}).} First, we undertake the considerable task of unifying existing models of fairness, manipulation robustness, and transparency. 
As in past work \cite{flanigan2024manipulation}, we study three manipulation incentives: increasing one's own selection probability, decreasing someone else's, or misappropriating panel seats from other groups. We permit manipulating coalitions of up to linear size (in the pool size $n$), and we permit agents to misreport any features costlessly with full knowledge of the selection algorithm and the pool's composition.\footnote{While these assumptions may seem extremely pessimistic, we adopt them because our algorithm is to be implemented in practice, and we want guarantees that do not depend on any specific assumed behavioral model.} Given the insufficiency of known equality objectives, we propose a new one, \textit{Goldilocks}$_\gamma$, defined below. Here, (slightly informally for now) $\pivec$ is an assignment of selection probabilities to pool members, and $\max(\pivec), \min(\pivec),$ and $\text{avg}(\pivec)$ describe the maximum, minimum, and average selection probability, respectively.
\begin{center}
    $\textit{Goldilocks}_\gamma(\pivec):= \ \frac{\max(\pivec)}{\text{avg}(\pivec)} + \gamma \cdot \frac{\text{avg}(\pivec)}{\min(\pivec)}.$
\end{center}
In the style of multi-objective optimization, the first and second term respectively aim to control high and low selection probabilities, and $\gamma \in [0, \infty)$ is a scalar determining the extent to which the objective prioritizes controlling the minimum versus the maximum probability.
For intuition about this objective's behavior, consider $\gamma = 1$: minimizing $Goldilocks_1$ (essentially) minimizes the maximum multiplicative deviation of any probability from the average. Although we will occasionally consider other values of $\gamma$, our analysis will mainly focus on \textit{Goldilocks}$_1$.\\[-0.9em]


\noindent \textbf{Establishing the \textit{existence} of ``good'' solutions (\Cref{sec:existence}).} Guaranteeing \textit{Fairness} and \textit{Manipulation Robustness} depends on the ability to simultaneously ensure that no selection probability is too low (to ensure fairness) nor too high (to ensure manipulation robustness). In turn, the extent to which \textit{any} algorithm can do so depends on the quality of feasible solutions, \textit{which we observe can be diminished by manipulation.} In particular, we show that a coalition of agents can misreport their features in a way that creates or worsens fundamental gaps between agents' probabilities, thereby eliminating candidate-optimal solutions.\footnote{This issue does not arise when one aims only to control \textit{Manipulation Robustness}. To see this, see \citet{flanigan2024manipulation}, which shows that \textit{Minimax} optimally achieves manipulation robustness. This is possible, however, because when manipulating coalitions induce gaps between two groups' probabilities, \textit{Minimax} can simply give these groups zero selection probability in other words, when fairness is not a concern, manipulating coalitions do not affect the set of potentially \textit{optimal} solutions. In contrast, any objective controlling both high and low probabilities \textit{must respond} to such fundamental gaps in selection probabilities, so manipulating coalitions can affect the set of potentially optimal solutions.} This brings us to the fundamental challenge associated with ensuring both \textit{Fairness} and \textit{Manipulation Robustness}: we must first establish the quality of existing solutions, which may depend on both the original instance and the number of manipulators.
We address this challenge in \Cref{sec:existence}, giving matching upper and lower bounds on the extent to which manipulation can eliminate solutions with high minimum probabilities and/or low maximum probabilities. Among these results, we demonstrate a fundamental trade-off between controlling high and low probabilities.\\[-0.9em]

\noindent \textbf{\textit{Fairness}, \textit{Manipulation Robustness} and \textit{Transparency} of \textit{Goldilocks}$_1$ (Sections \ref{sec:gold-analysis} and  \ref{sec:transparency}).} 
To bound \textit{Goldilocks}$_1$'s fairness and manipulation robustness, we first show that \textit{Goldilocks}$_1$ guarantees lower and upper bounds on selection probabilities that scale naturally\emdash and in many relevant cases, tightly\emdash with the quality of available solutions. These bounds translate almost directly to bounds on fairness and manipulation robustness. For reasons described in \Cref{sec:existence}, our bounds will depend on two quantities: $c$, the number of agents who are willing to misreport, and $n_{min}$, the minimum number of agents with given vector of features in the pool. Algorithms can be best distinguished when $n_{min}$ is large (otherwise, all algorithms are subject to the same impossibility), so we recap our results here assuming that $n_{min}$ is large (our results handle general $n_{min}$, though). We are particularly interested in how our guarantees depend on $n$, as increasing $n$ is a practically-implementable way to potentially diminish manipulation incentives \cite{flanigan2024manipulation}.

Regarding \textit{Goldilocks}$_1$'s manipulation robustness, we find that no manipulating coalition of size $c$ can increase a single member's selection probability by more than order $\sqrt{c}/n$\emdash a quantity that diminishes quickly in $n$, even if $c$ grows linearly with $n$. We give similar bounds for the other two manipulation incentives. Regarding fairness, we find that \textit{Goldilocks}$_1$ guarantees \textit{Maximin} fairness (i.e., minimum probability) of at least order $1/(\sqrt{c} n)$. We then give a lower bound on the extent to which any algorithm can simultaneously guarantee fairness and manipulation robustness, revealing that our bounds are tight in a natural subset of regimes and nearly tight in the rest. Finally, in \Cref{sec:transparency} we bound the extent to which these results (approximately) hold after the output of \textit{Goldilocks}$_1$ is rounded to a uniform lottery, thereby establishing the extent to which fairness and manipulation robustness can be achieved alongside transparency. \\[-0.9em]

\noindent \textbf{Empirical study of \textit{Goldilocks}$_1$ (\Cref{sec:empirics}).} Finally, we analyze \textit{Goldilocks}$_1$ in real citizens' assembly datasets, and we find that it performs even better than our bounds guarantee. Our first key finding is that \textit{Goldilocks}$_1$ achieves near \textit{Leximin}-optimal minimum probabilities \textit{and} \textit{Minimax}-optimal maximum probabilities\emdash an outcome whose possibility by \textit{any} algorithm was not guaranteed. On our ideals, we compare \textit{Goldilocks}$_1$'s performance to the other previously-studied equality objectives, \emdash \textit{Leximin}, \textit{Nash Welfare}, and \textit{Minimax}. We additionally compare these algorithms to \textit{Legacy}, a heuristic standing in for the wide variety of heuristic selection algorithms still used in practice today. Our main finding is that \textit{Goldilocks}$_1$ performs nearly as well as \textit{Leximin} on fairness and \textit{Minimax} on manipulation robustness, and it far outperforms all other algorithms in its ability to achieve both these goals at once. In our evaluation, we find that our theoretical results translate to the more realistic case where manipulators are not worst-case, but rather use a natural heuristic to decide how to manipulate. Finally, we find that \textit{Goldilocks}$_1$ can be made transparent with little-to-no cost to the maximum and minimum selection probabilities.

\subsection{Related Work}
In addition to the existing work on fairness \cite{flanigan2021fair}, manipulation robustness \cite{flanigan2024manipulation}, and transparency \cite{flanigan2021transparent} on which we directly build, there is a growing body of work pursuing selection algorithms achieving similar ideals. There is especially a wealth of literature considering the interplay of two ideals: \textit{fairness} (as we define it), and proportional representation of the underlying population, which we enforce with quotas.
However, much of this work is done in the distinct model of sortition where it is possible to sample the population directly, and all chosen will participate (i.e., there is no selection bias). For example, \citet{ebadianboosting} study how to achieve exact fairness and deterministic proportional representation simultaneously; closely related is work by \citet{benade2019no}, which focuses on uniform-like stratified sampling while preserving subgroup-level representation. \citet{EKMP+22} ask richer questions about the nature of representation that can be achieved when individual people can serve as representatives for others to varying extents. Outside the uniform selection model, \citet{gkasiorowska2023sortition} does a qualitative survey across many selection process case studies, evaluating them on the basis of \textit{randomness} (closely related to our ideal of \textit{fairness}) as well as representation. Beyond related work on sortition, the existing theoretical results we build on in this paper use tools from across several fields, including randomized rounding \cite{gandhi2006dependent}, discrepancy theory \cite{beck1981integer}, and optimization of large linear programs \cite{bradley1977applied}.

    \section{Model} \label{sec:model}
    We use $\Delta(S)$ to represent the set of all distributions over the elements of set $S$. Let $[n]$ be the \textit{pool}, where $i \in [n]$ is an individual agent. The pool is formed by inviting a uniform sample of the population to participate; the agents in $[n]$ are those who responded affirmatively to this invitation.\\[-0.9em] 
    
    \noindent \textbf{Features, feature-values, and feature-vectors.} Let $F$ be a predefined set of \textit{features}, where each feature $f \in F$ can take on some predefined set of values $V_f$. For example, $F$ could be $\{$\textit{age}, \textit{gender}$\}$, and $V_{age}$ might be $\{18\,\text{-}\, 40,\, 41\,\text{-}\,60,\, 61\,+\}$. We call each $v \in V_f$ a \textit{feature-value} and each $f,v$ a \textit{feature-value pair}. $FV:=\{(f,v) | f \in F, v \in V_f\}$ is the set of all feature-value pairs. 
    
    We assume that for each feature $f$, its possible values $V_f$ are exhaustive and mutually exclusive, so every agent has exactly one value $v \in V_f$ for every feature $f$. We denote $i$'s value for feature $f$ as $f(i)$, thereby using each $f$ as a function $f: [n] \to V_f$. We let $i$'s \textit{feature vector} $w_i := (f(i) | f \in F)$ summarize their feature-values, and $\boldsymbol{w}:= (w_i | i \in [n])$ contains all agents' feature vectors. We let $\mathcal{W} := \prod_{f \in F} V_f$ be the set of all possible feature vectors; then, $w_i \in \mathcal{W}$ and $\boldsymbol{w} \in \mathcal{W}^n$.
    \\[-0.9em] 
    

    \noindent \textbf{The panel selection task.} Our task is to choose a \textit{panel} $K \subseteq [n]$ of some pre-chosen size $k \in \mathbb{N}$. The main constraint on $K$ is that it must satisfy \textit{upper and lower quotas} on all feature-values. Formally, for each $f,v \in FV$, we define lower and upper quotas $\ell_{f,v} \in \mathbb{N}^+$ and $u_{f,v} \in \mathbb{N}^+$ 
    We summarize these quotas in $\boldsymbol{\ell} = \{\ell_{f,v} | f,v \in FV\}$ and $\boldsymbol{u} = \{u_{f,v} | f,v \in FV\}$. The set of all \textit{valid panels}\emdash i.e., those satisfying all requirements\emdash is then
     \[\mathcal{K}:=\left\{K : K \subseteq [n] \ \  \land \ \  |K| = k \ \  \land \ \  \ell_{f,v} \leq |\{i \in K : f(i) = v\}| \leq u_{f,v} \ \forall f,v \in FV\right\}.\]
      %
    An \textit{instance} of the panel selection task is defined as $\mathcal{I}:=([n],\boldsymbol{w},k,\boldsymbol{\ell},\boldsymbol{u}).$ Given an instance, the panel selection task is to output a valid panel $K \in \mathcal{K}$.\\[-0.9em] 

     In a given instance $\inst$, we will sometimes refer to $\boldsymbol{w}$ as the pool's \textit{vector composition}, as it encodes how many times each feature vector $w \in \mathcal{W}$ appears in the pool, a number we denote as $n_w(\inst):= |\{i \in [n] | w_i = w\}|$. When $\inst$ is clear, we will simply write $n_w$. Because the pool will generally not contain all possible feature vectors, we let $\mathcal{W}_{\inst} \subseteq \mathcal{W}$ denote the set of unique feature vectors present in the pool in $\inst$ (so $n_w(\inst) > 0$ iff $w \in \mathcal{W}_{\inst}$). We assume that the instance is not degenerate in that $|\mathcal{W}_{\inst}| > 1$ (the pool contains more than one type of person). We let $n_{min}(\inst):=\min_{w \in \mathcal{W}_{\inst}}n_w(\inst)$ be the size of the smallest vector group present in the pool. Note that $|\mathcal{W}_{\inst}| > 1$ implies that $n_{min}(\inst) \leq n/2$.\\[-0.9em]
    
    \noindent \textbf{Panel distributions and selection probabilities.} In instance $\mathcal{I}$ with valid panels $\mathcal{K}$, $\Delta(\mathcal{K})$ is the set of all possible randomizations over valid panels. We call each $\mathbf{d} \in \Delta(\mathcal{K})$ a \textit{panel distribution}, where $d_K$ then denotes the probability of drawing $K$ from $\mathbf{d}$. Any given $\mathbf{d}$ must imply some \textit{selection probability} for each agent $i \in N$, defined as
    \begin{center}
       $ \pi_i(\mathbf{d}) := \sum_{K \in \mathcal{K} : i \in K} d_K \qquad \text{for all }i \in [n]. $
    \end{center}
     In words, $\pi_i(\mathbf{d})$ is the probability that $i$ is included on the panel when the panel is drawn from $\mathbf{d}$. We refer to $\boldsymbol{\pi}(\mathbf{d}):=(\pi_i(\mathbf{d})| i \in [n])$ as an \textit{assignment} of selection probabilities to all agents in the pool. A generic selection probability assignment will be $\pivec$. We use the shorthand $\max(\pivec):=\max_{i \in [n]} \pi_i$ and $\min(\pivec):=\min_{i \in [n]} \pi_i$ to respectively represent the maximum and minimum selection probability assigned by $\pivec$ to any agent.
     
     In any instance $\inst$, the space of all \textit{realizable} selection probability assignments is $\Pi(\inst):= \{\boldsymbol{\pi}(\mathbf{d}) : \mathbf{d} \in \Delta(\mathcal{K})\}.$
     In words, $\Pi(\inst)$ is the set of all selection probability assignments that are implied by some randomization over exclusively valid panels. Observe that for any $\pivec \in \Pi(\inst)$, $\sum_{i \in [n]}\pi_i=k$. Therefore, in any given instance, the selection probability assignment that gives all agents equal selection probability must be $\pivec = k/n\mathbf{1}^n$, the $n$-length vector in which every entry is $k/n$ (note that in most instances, this selection probability assignment will not be in $\Pi(\inst)$).

     Finally, we say that $\pivec$ is \textit{anonymous} iff it gives all agents with the same feature vector the same selection probability\emdash that is, for all $w \in \mathcal{W}$, there exists a constant $z_w$ such that $\pi_i = z_w$ for all $i$ : $w_i = w$. As we will typically work with anonymous selection probability assignments, we define \textit{vector-indexed selection probabilities} $\vecprobs_w(\pivec) = z_w$. Let $\vecprobs(\pivec) = (\vecprobs_w(\pivec)| w \in \mathcal{W})$. When $\pivec$ is clear from context or when we work with arbitrary vector-indexed probabilities, we simply write $\vecprobs$.\\[-0.9em]

    \noindent \textbf{Equality objectives.} Let an \textit{equality objective} $\mathcal{E} : [0,1]^n \to \mathbb{R}$ be a function that intakes a selection probability assignment and outputs a scalar measure of how \textit{equal} the selection probabilities within it are.
    All equality objectives we will consider are convex. They will also all have the property that $\pivec$ is ``more equal'' than $\pivec'$ according to $\mathcal{E}$ if $\mathcal{E}(\pivec) \leq \mathcal{E}(\pivec')$. Then, a selection probability assignment $\pivec$ is \textit{maximally equal} in $\inst$ iff
    $\pivec \in \text{arg}\inf_{\pivec \in \Pi(\mathcal{I})} \mathcal{E}(\pivec)$. The set of all maximally equal selection probability assignments in $\inst$, as measured by 
    $\mathcal{E}$, is 
    \begin{center}
        $\Pi^{\mathcal{E}}(\inst):= \text{arg}\inf_{\pivec \in \Pi(\mathcal{I})} \mathcal{E}(\pivec) \subseteq \Pi(\inst)$.
    \end{center}
    We will study the three equality objectives considered in past work on sortition \cite{flanigan2021fair,flanigan2021transparent,flanigan2024manipulation}, defined below.
    Here, \textit{Nash} is the \textit{Nash Welfare}.
    \begin{center}
        $Maximin(\pivec) := -\min(\pivec), \qquad Minimax(\pivec):= \max(\pivec), \qquad Nash(\pivec):= -\left(\prod_{i \in [n]}\pi_i\right)^{1/n}$.
    \end{center} 
    We also study \textit{Leximin}, which is not strictly an equality objective, but is a previously-studied refinement of \textit{Maximin}. \textit{Leximin} first maximizes the minimum selection probability (i.e., finds the \textit{Maximin}-optimal solution), then maximizes the \textit{second-lowest} selection probability, then the third-lowest, and so on. 
    Finally, we newly introduce the objective \textit{Goldilocks}$_\gamma$, which penalizes multiplicative deviations both above and below $k/n$ with $\gamma \in \mathbb{R}_{\geq 0}$ controlling the relative priority placed on either type of deviation. 
    \begin{center}
        $\textit{Goldilocks}_\gamma(\pivec):= \frac{\max(\pivec)}{k/n} + \gamma \,\frac{k/n}{\min(\pivec)}.$
    \end{center}

In addition to being convex (\Cref{prop:convexity}), all objectives we consider\footnote{Because \textit{Leximin} is not an equality objective, it cannot formally satisfy these properties. However, as will be clear throughout the paper, \textit{Leximin} \textit{effectively} satisfies these properties to the extent we need it to.} satisfy two other natural axioms\emdash \textit{conditional equitability} (\Cref{prop:ce}) and \textit{anonymity} (\Cref{prop:anon}). These axioms are both weak requirements reflecting that equality objectives truly measure the level of \textit{equality} of selection probabilities. In words, \textit{conditional equitability} requires $\mathcal{E}$ to consider $\pivec = k/n\mathbf{1}^n$ the most equal possible probability assignment, and \textit{anonymity} requires that $\mathcal{E}$ does not penalize giving identical agents identical selection probabilities.
    \begin{axiom}
        $\mathcal{E}$ is \textbf{conditionally equitable} iff for all $\inst$, $k/n\mathbf{1}^n \in \Pi(\inst) \implies k/n\mathbf{1}^n \in \Pi^\mathcal{E}(\inst)$.
    \end{axiom}
    \vspace{-1em}
    \begin{axiom}
        $\mathcal{E}$ is \textbf{anonymous} iff for all $\mathcal{I}$, there exists an anonymous $\pivec \in \Pi^\mathcal{E}(\inst)$.
    \end{axiom}
    Because all objectives $\mathcal{E}$ we consider satisfy anonymity, we will without loss of generality redefine $\Pi(\inst)$ and $\Pi^{\mathcal{E}}(\inst)$ to contain \textit{only anonymous selection probability assignments}.\\[-0.75em]

    \noindent{\textbf{Selection algorithms.}} A \textit{selection algorithm} $\textsc{A} : \mathcal{I} \to \mathcal{K}$ is any (potentially randomized) mapping from an instance to a valid panel $K \in \mathcal{K}$. Note that in a given instance, any selection algorithm must induce a panel distribution; we denote the panel distribution implied by $\textsc{A}$ in $\mathcal{I}$ as $\mathbf{d}^{\textsc{A}}(\inst) \in \Delta(\mathcal{K})$. Its implied selection probability assignment is then $\boldsymbol{\pi}(\mathbf{d}^{\textsc{A}}(\inst))$; for simplicity of notation, when the panel distribution is not directly relevant, we will shorten this to $\pivec^{\algo}(\inst)$.

     A selection algorithm $\textsc{A}$ is \textit{maximally equal} with respect to $\mathcal{E}$ iff $\pivec^{\algo}(\inst) \in \Pi^{\mathcal{E}}(\inst)$ for all $\mathcal{I}.$
     
    Fortunately, the optimization framework proposed by \citet{flanigan2021fair} gives an algorithmic implementation for any maximally equal selection algorithm whose corresponding equality objectives $\mathcal{E}$ is convex, which we will use to optimize the equality objectives defined above. 
    At a high level, their algorithmic approach works in two steps: first, it explicitly computes a panel distribution implying \textit{maximally equal} selection probabilities per $\mathcal{E}$; then, it draws the final panel from this panel distribution, thereby realizing those maximally equal selection probabilities. As shorthand, we will refer to the algorithm from this framework optimizing $\mathcal{E}$ as \textsc{E} (e.g., the algorithm optimizing \textit{Maximin} is called \maximin).

\subsection{Ideals: \textit{Manipulation Robustness}, \textit{Fairness}, and \textit{Transparency}}
    
    \textit{Manipulation Robustness}.
    To capture the fact that agents may misreport their feature-values to the algorithm, we denote $i$'s \textit{reported} feature vector as $\tilde{w}_{i} \in \mathcal{W}$, to distinguish it from $w_i$. We refer to the set of agents who may misreport their vectors as a \textit{coalition}, denoted $C \subseteq [n]$. We denote the vector of agents' \textit{reported} feature vectors as $\boldsymbol{\tilde{w}} := (\tilde{w}_i | i \in [n])$. We define the instance $\inst^{\to_C \tilde{\boldsymbol{w}}}:= ([n],\tilde{\boldsymbol{w}},k,\boldsymbol{\ell},\boldsymbol{u})$ as the instance created when a coalition $C$ misreports such that the vector composition of the pool changes from $\boldsymbol{w}$ to $\tilde{\boldsymbol{w}}$. Note that not all $\tilde{\boldsymbol{w}}$ can result from a given $\inst,C$ pair; we let $\boldsymbol{\mathcal{W}}_{\inst,C} \subseteq \mathcal{W}^{n}$ be the set of all possible $\boldsymbol{\tilde{w}}$ that result from any misreports of $C$ starting from $\inst$. Formally, fixing $\inst$ and $C \subseteq [n]$, $\tilde{\boldsymbol{w}} \in \boldsymbol{\mathcal{W}}_{\inst,C}$ iff $\tilde{w}_i = w_i$ for all $i \notin C$ (for all $i \in C$, it can be that $\tilde{w}_i \neq w_i \neq $ or $\tilde{w}_i = w_i$).

    As in \citet{flanigan2024manipulation}, we assume that agents or coalitions can costlessly misreport any feature vector in $\mathcal{W}$, and they do so with full information about the selection algorithm and pool. We consider three incentives for doing so: $\textsf{manip}_\text{int}$ captures how much a coalition can \textit{increase the selection probability of someone internal to the coalition}; $\textsf{manip}_\text{ext}$ measures how much a coalition can \textit{decrease the selection probability of someone external to the coalition}; and $\textsf{manip}_{\text{comp}}$ measures how many seats a coalition can, in expectation, misappropriate from another group. 
\begin{align*}
    \textsf{manip}_\text{int}  (\inst,\algo,c) &:= \max_{C \subseteq[n], |C| = c} \ \max_{\tilde{\boldsymbol{w}} \in \boldsymbol{\mathcal{W}}_{\inst,C}} \ \ \max_{i \in C} \ \ \pi^{\algo}_{i}(\inst^{\to_C \tilde{\boldsymbol{w}}}) - \pi^{\algo}_{i}(\inst),\\
    \textsf{manip}_{\text{ext}}(\inst,\algo,c) &:=\max_{C \subseteq[n], |C| = c} \ \max_{\tilde{\boldsymbol{w}} \in \boldsymbol{\mathcal{W}}_{\inst,C}} \  \ \max_{i \notin C} \ \ \pi^{\algo}_{i}(\inst) - \pi^{\algo}_{i}(\inst^{\to_C \tilde{\boldsymbol{w}}}),\\
    \textsf{manip}_{\text{comp}}(\inst,\algo,c)&:=  \max_{C \subseteq[n], |C| = c} \ \max_{\tilde{\boldsymbol{w}} \in \boldsymbol{\mathcal{W}}_{\inst,C}} \ \ \max_{(f,v) \in FV} \sum_{i : f(i) = v} \left(\pi^{\algo}_{i}(\inst^{\to_C \tilde{\boldsymbol{w}}}) - \pi^{\algo}_{i}(\inst) \right).
\end{align*}
These definitions can be interpreted as Nash equilibrium-style measures, capturing how much a coalition can gain if everyone else is truthful. From a formal game theoretic perspective, the argument of each maximum above can be thought of as a utility function, which an agent or coalition may aim to maximize. These definitions are worst-case (over coalitions and strategies) to avoid assuming a behavioral model; importantly, they encompass the entire range of cases where agents do not collude, but multiple agents pursue these motives individually via any strategy.\\[-0.9em] 

\noindent \textit{Fairness}. In accordance with past work on fairness in sortition \cite{flanigan2021fair,flanigan2021transparent}, we evaluate the fairness of a selection algorithm $\textsc{A}$ such that a \textit{fairer} algorithm makes the minimum selection probability higher. The \textit{fairness} of algorithm $\textsc{A}$ in instance $\inst$ is defined formally as below; note that any algorithm optimizing \textit{Maximin} is by definition optimally fair in any given instance.
\[\textsf{fairness}(\inst,\textsc{A}):= \min_{i \in [n]}\pi_i^{\textsc{A}}(\inst).\]
Because we want to guarantee fairness and manipulation robustness simultaneously, we are actually interested in studying fairness in the presence of manipulation. As such, the fairness of \textsc{A} in $\inst$ in the presence of the worst-case manipulating coalition of size $c$ is defined as 
\begin{align*}
    &\textsf{manip-fairness}(\inst,\textsc{A},c):= \min_{C \subseteq[n], |C| = c} \ \min_{\tilde{\boldsymbol{w}} \in \boldsymbol{\mathcal{W}}_{\inst,C}} \ \ \textsf{fairness}( \pi_i^{\textsc{A}}(\inst^{\to_C \tilde{\boldsymbol{w}}}),\textsc{A}).
\end{align*}

\noindent \textit{Transparency.} We now formally define the components of \citet{flanigan2021transparent}'s algorithmic approach to transparency. For a given set of valid panels $\mathcal{K}$, let $m \in \mathbb{Z}^+$ and define the set of all $m$\textit{-uniform lotteries} $\overline{\Delta}_m(\mathcal{K}) :=(\mathbb{Z}^+/m)^{|\mathcal{K}|} \cap \Delta(\mathcal{K})$ as the set of all panel distributions in which all probabilities are multiples of $1/m$. 
$\overline{\pdist} \in \overline{\Delta}_m$ is called an $m$-uniform lottery due to the following key observation: $\overline{\pdist}$ contains exactly $m$ discrete blocs of $1/m$ probability mass, so we can sample a panel from $\overline{\pdist}$ via a \textit{uniform lottery over $m$ panels (with duplicates)} by numbering these probability blocs $1\dots m$, and then uniformly drawing a number from $[m]$. For example, if $m = 1000$, we can execute this uniform lottery physically, by drawing balls from bins corresponding to drawing 3 digits between 0 and 9, as in Figure 3 of \citet{flanigan2021fair}. Finally, we define $\overline\Pi_m(\inst):= \{\pivec(\mathbf{\bar{d}}) | \mathbf{\bar{d}} \in \overline{\Delta}_m(\mathcal{K})\}$ as the set of all selection probability assignments realizable by $m$-uniform lotteries in $\inst$. 

An $m$-uniform lottery is created by a  \textit{rounding algorithm} $\mathcal{R}_m : \Delta(\mathcal{K}) \to \overline{\Delta}_m(\mathcal{K})$, which is any (possibly randomized) mapping from a panel distribution into an $m$-uniform lottery. We apply a rounding algorithm $\mathcal{R}_m$ \textit{in conjunction} with maximally fair algorithm \textsc{E} as follows: first, run \textsc{E} to compute panel distribution $\pdist^{\textsc{E}}(\inst)$; then, use $\mathcal{R}_m$ to round $\pdist^{\textsc{E}}(\inst)$ to an $m$-uniform lottery $\mathcal{R}_m(\pdist^{\textsc{E}}(\inst))$. Note that $\mathcal{R}_m \circ \textsc{E}$ is itself a selection algorithm, mapping $\inst$ to an $m$-uniform lottery $\mathbf{d}^{\mathcal{R}_m \circ \textsc{E}}(\inst)$. We will define specific rounding algorithms as needed.

\subsection{Handling \textit{Structural Exclusion}} \label{sec:exclusion}
We say that an instance $\inst$ is \textit{structurally exclusive} if there exist one or more agents who are not included on \textit{any valid panel}. We denote the set of agents who are structurally excluded in $\inst$ as EX$(\inst):= \{i \in [n] | i \notin K \text{ for all }K \in \mathcal{K}\}$. In any instance where EX$(\inst) \neq \emptyset$, any algorithm must have \textsf{manip-fairness} of 0 by construction. We now introduce restrictions to avoid this trivial impossibility, which we prove are necessary in \Cref{prop:assumptions-necessary} (\Cref{app:assumptions-necessary}).

First, Assumption 1 ensures that the \textit{truthful} instance $\inst$ does not exhibit structural exclusion. This is a very weak restriction, and it is satisfied by all real-world datasets we study in \Cref{sec:empirics}.
\begin{assumption}\label{ass:inclusion}
    Instance $\inst$ is such that EX$(\inst) = \emptyset$.
\end{assumption}
Second, we must handle structural exclusion that is \textit{caused by manipulation} (i.e., EX($\inst^{\to_C \tilde{\boldsymbol{w}}}) \neq \emptyset$, even if EX$(\inst) = \emptyset$). We avoid the structural exclusion of \textit{non}-coalition members (i.e., EX$(\inst^{\to_C \tilde{\boldsymbol{w}}}) \cap ([n] \setminus C) \neq \emptyset$) by considering only coalitions of limited size: in $\inst$, we consider $C \subseteq [n]$ such that
\begin{equation} \label{eq:restriction1}
   |C| \leq \max\{0, n_{min}(\inst)- k\}. \tag{Restriction 2.1} 
\end{equation}
This restriction ensures that no vector group in the original pool is depleted to a size below $k$, meaning that all valid panels in $\mathcal{K}$ remain valid in $\inst^{\to_C \tilde{\boldsymbol{w}}}$ for all $C \subseteq [n], \tilde{\boldsymbol{w}} \in \boldsymbol{\mathcal{W}}_{\inst,C}$. 

Finally, we must handle coalition members who exclude \textit{themselves} (i.e., EX$(\inst^{\to_C \tilde{\boldsymbol{w}}}) \cap C \neq \emptyset$). This is an unnatural corner case (such a misreport would be costly to the agent while barely affecting any other selection probability), so we do not risk eliminating interesting cases with a new restriction. Instead, we simply do not ``count'' such agents in our guarantees, and handle the resulting decrease in pool size in our bounds. Formally, we henceforth implicitly redefine the manipulated instance as $\inst^{\to_C \boldsymbol{\tilde{w}}}$ as $([n'], \tilde{\boldsymbol{w}}',k,\boldsymbol{\ell},\boldsymbol{u})$,
where (re-indexing agents) $[n']:= [n] \setminus (C \cap \textsc{EX}(\inst))$, and $\tilde{\boldsymbol{w}}':=(\tilde{w}_i | i \in [n'])$, so any manipulators who have excluded themselves are removed. In practice, this approach corresponds to a natural and implementable algorithmic behavior: detect and remove structurally excluding agents before running the algorithm.

\section{\textit{Existence} of ``Good'' Solutions} \label{sec:existence}
\subsection{Intuition: Problems 1 and 2 Prevent Good Solutions} \label{sec:intuition-problems}
Before analyzing any algorithms, we examine the extent to which it is \textit{possible}, in any given instance, to simultaneously control maximum and minimum selection probabilities. We begin by identifying two potential barriers to this goal. \\[-0.9em]

\noindent \textbf{Problem 1: Small groups.} Sometimes, high probabilities can be required simply due to the structure of the quotas and the pool. Consider Example \ref{ex:small-group} below: all valid panels contain some minimum number of agents with a certain vector, and there is a small number of such agents in the pool. As a result, any valid panel distribution --- and therefore \textit{any algorithm} --- must give each agent in this group high selection probability. 

\begin{example}[small groups] \label{ex:small-group}
 Let there be one feature $f$ with binary values $V_f = \{0,1\}$. Let $\inst$ such that the quotas require the panel to contain $k-1$ agents with $f = 0$ and 1 agent with $f = 1$ ($\ell_{f,0} = u_{f,0} = 1$). For any $n_{min} \in \{1,\dots,n-k+1\}$, let the pool be such that $n_{0} = n - n_{min}$ and $n_{1}  = n_{min}$. Then, any valid panel distribution must give each agent with vector $1$ at least $1/n_{min}$ selection probability. It follows that for any $\pivec \in \Pi(\inst)$, it must be that $\max(\pivec) \geq 1/n_{min}.$

\end{example}

\noindent \textbf{Problem 2: Fundamental trade-offs between maximum, minimum probabilities.} 
Even in the absence of Problem 1, we can face another: the inclusion of one vector group on a panel can \textit{necessitate} the inclusion of another vector group. As illustrated by Example 2 (based on an example from \citet{flanigan2024manipulation}), if these two ``linked-fate'' vector groups differ in size, then agents in the smaller group must receive higher selection probabilities than those in the larger group. This creates an inescapable trade-off between the maximum and minimum selection probabilities.

\begin{example}[fundamental trade-off]\label{ex:trade-off}
    Let there be two binary features $f_1,f_2$ with $V_{f_1} = V_{f_2}=\{0,1\}$. Let $\inst$ be such that the quotas require the panel be evenly split between 0/1 values of both features ($\ell_{f_1, 0} = u_{f_1, 0} = \ell_{f_2, 0} = u_{f_2, 0} = k/2$), and let the pool be such that $n_{00} = n_{11} = n/4$, $n_{10} = n/2-1$, and $n_{01} = 1$. 
    The key observation is that, to avoid upsetting the equal balance of 0 and 1 values of either feature, any valid panel must contain an equal number of agents with 10 and 01, making them ``linked-fate'' groups. There are $n/2-1$ times as many agents in the former group over which to spread these panel seats, so for any panel distribution, it must be that $\textsf{p}_{01} = \textsf{p}_{10} \, (n/2-1) \in \Theta(n \textsf{p}_{10})$ (see \Cref{app:trade-off} for full proof). Then, for all $z \in (0, 1/n]$ and all $\pivec \in \Pi(\inst)$,
    \begin{align*}
        \min(\pivec)\in \Omega(z) \implies \max(\pivec) \in \Omega(\max\{1/n,z \, n\}).
    \end{align*}
\end{example}

We remark that of these two problems, Problem 2 is far more interesting from an algorithmic design perspective. This is because worst case, \textit{no algorithm can do anything about Problem 1}: this is illustrated by Example 1, whose lower bound arises purely due to the quotas and pool structure, and thus must be suffered by any algorithm. In contrast, when facing Problem 2, different algorithms can make different trade-offs between the maximum and minimum probability (effectively implementing different values of $z$). It is for this reason that, as we will show, \textsc{Goldilocks} is tailored to the goal of addressing Problem 2.








\subsection{Formal Bounds on the Existence of Good Solutions}
With some intuition about factors affecting the extent to which it is possible to simultaneously control high and low probabilities in an instance, we now pursue formal bounds. To do so, we must contend not only with the possibility that Problems 1 and 2 exist in the original instance, but that they can be \textit{created by manipulation where they did not previously exist}. The following lower bound captures both of these possibilities using an instance class that almost directly combines Examples \ref{ex:small-group} and \ref{ex:trade-off}; we defer the proof to \Cref{app:LB-tradeoff}. 

\begin{theorem}[\textbf{Lower Bound}]\label{thm:LB-tradeoff}
    Fix any even $k \in \{6,\dots,n_{min}-3\}$, any $n_{min} \in  \{k+3, \dots, \lfloor n/2\rfloor\}$, and any $c \in \{3,\dots,n_{min}-k\}$. Then there exists $\inst$ with $k$, $n_{min}(\inst) = n_{min}$ satisfying Assumption \ref{ass:inclusion}, $C \subseteq [n]$ with $|C|=c$, and $\boldsymbol{\tilde{w}} \in \boldsymbol{\mathcal{W}}_{\inst,C}$ such that, for all $z \in (0,1/n]$ and all $\tilde{\pivec} \in \Pi(\inst^{\to_C \boldsymbol{\tilde{w}}})$,
    \begin{center}
        $\min(\tilde{\pivec})\geq z \implies \max(\tilde{\pivec}) \in \Omega(\max\{cz,1/(n_{min}-c)\}).$
    \end{center}
\end{theorem}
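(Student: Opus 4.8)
The plan is to construct an explicit instance that glues together the structure of Example~\ref{ex:small-group} (a forced small group, giving the $1/(n_{min}-c)$ term after depletion) with the structure of Example~\ref{ex:trade-off} (a pair of linked-fate groups of very unequal size, giving the $cz$ term). Concretely, I would use three (or four) binary features. Two of them replicate Example~\ref{ex:trade-off}: quotas forcing an even $0/1$ split on features $f_1,f_2$, so that any valid panel contains equally many agents of type $\textsf{p}_{01}$ and $\textsf{p}_{10}$; in the \emph{manipulated} instance I will arrange the reported composition so that the small side of this linked pair has size about $c$ while the large side has size $\Theta(n)$, so $\textsf{p}_{\text{small}} = \Theta(n\,\textsf{p}_{\text{large}}/c)$ forces $\max(\tilde\pivec) \in \Omega(cz)$ whenever $\min(\tilde\pivec)\geq z$. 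A third feature $f_3$ replicates Example~\ref{ex:small-group}: a quota forcing exactly one panel seat on the value $f_3=1$, with only $n_{min}$ agents carrying that value in the truthful pool, giving $\max(\tilde\pivec)\geq 1/n_{min}$; after the coalition of size $c$ removes itself/relabels, this degrades to $\Omega(1/(n_{min}-c))$.

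The delicate part of the construction is the bookkeeping that makes all three requirements hold \emph{simultaneously} in one instance, with exactly the right parameters: we need the truthful instance $\inst$ to have $n_{min}(\inst) = n_{min}$, to satisfy Assumption~\ref{ass:inclusion} (no structural exclusion truthfully), to respect \eqref{eq:restriction1} (so $c \le n_{min}(\inst)-k$, which is given), and to admit a coalition $C$ of size exactly $c$ and a reported composition $\boldsymbol{\tilde w}\in\boldsymbol{\mathcal{W}}_{\inst,C}$ in which the small linked-fate group has size $\Theta(c)$. The natural move is: in $\inst$, seed the small linked-fate type ($01$, say, intersected appropriately with $f_3$) with a handful of agents and seed a ``reservoir'' vector group of size a bit more than $c$ that the coalition sits in; then have the coalition misreport into the small type, swelling it to size $\Theta(c)$, while the reservoir and all quota-critical groups stay above $k$ so no valid panel is destroyed and \eqref{eq:restriction1} is respected. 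The arithmetic constraints in the theorem statement ($k$ even and $\geq 6$, $k+3\le n_{min}\le \lfloor n/2\rfloor$, $c\in\{3,\dots,n_{min}-k\}$) are exactly the slack one needs to make the four vector-group sizes nonnegative integers with the right asymptotics; I would pick the sizes as affine functions of $n, n_{min}, k, c$ and check the inequalities at the boundary cases.

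With the instance in hand, the proof of the implication itself is short and mirrors the two examples. For the $cz$ branch: fix $\tilde\pivec\in\Pi(\inst^{\to_C\boldsymbol{\tilde w}})$ with $\min(\tilde\pivec)\ge z$; since $\tilde\pivec$ is realizable and anonymous, and every valid panel contains equal counts of the small and large linked-fate types, summing selection probabilities over each of the two types gives $n_{\text{small}}\,\textsf{p}_{\text{small}} = n_{\text{large}}\,\textsf{p}_{\text{large}}$ in expectation (this is the computation deferred to \Cref{app:trade-off} in Example~\ref{ex:trade-off}, reused verbatim); hence $\textsf{p}_{\text{small}} = (n_{\text{large}}/n_{\text{small}})\,\textsf{p}_{\text{large}} \ge (n_{\text{large}}/n_{\text{small}})\, z \in \Omega(cz)$ because $n_{\text{large}}\in\Theta(n)$ and $n_{\text{small}}\in\Theta(c)$, and thus $\max(\tilde\pivec)\ge \textsf{p}_{\text{small}}\in\Omega(cz)$. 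For the $1/(n_{min}-c)$ branch: every valid panel in the manipulated instance still contains (at least) one agent with $f_3=1$, and after self-excluding coalition members are dropped at most $n_{min}-1$ and at least $n_{min}-c$ agents carry $f_3=1$, so some such agent has selection probability at least $1/(n_{min}-c)$, giving $\max(\tilde\pivec)\in\Omega(1/(n_{min}-c))$. Taking the max of the two branches yields the claim. The main obstacle, as flagged, is purely the combinatorial design step: exhibiting one instance in which the manipulation simultaneously manufactures a $\Theta(c)$-vs-$\Theta(n)$ linked-fate gap \emph{and} a forced-seat group of size $n_{min}$, while keeping every quota feasible and Assumption~\ref{ass:inclusion} and \eqref{eq:restriction1} intact across the allowed parameter ranges; once the instance is pinned down, the lower-bound reasoning is essentially a concatenation of the arguments already given for Examples~\ref{ex:small-group} and~\ref{ex:trade-off}.
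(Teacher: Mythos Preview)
Your high-level plan---glue the two examples together via an extra feature---is exactly the paper's approach, and your treatment of the $1/(n_{min}-c)$ branch is fine. The gap is in how you size the linked-fate pair. You take the small side of size $\Theta(c)$ and the large side of size $\Theta(n)$; the linked-fate identity then gives $\textsf{p}_{\text{small}} = (n_{\text{large}}/n_{\text{small}})\,\textsf{p}_{\text{large}} \geq \Theta(n/c)\cdot z$, which you assert is $\Omega(cz)$. But $n/c \in \Omega(c)$ only when $c \in O(\sqrt{n})$, whereas the theorem must cover all $c$ up to $n_{min}-k$, which can be $\Theta(n)$. For $c \in \omega(\sqrt{n})$ your construction delivers only $\Omega((n/c)\,z)$, strictly weaker than the required $\Omega(cz)$.

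The fix, and the paper's construction, flips the role of $c$: make the \emph{large} side of the linked pair have size $\Theta(c)$ and the \emph{small} side have size $O(1)$. Concretely, the paper places the entire coalition inside the forced-seat group (the $f_3{=}1$ group, of truthful size $n_{min}$); the coalition's departure simultaneously depletes that group to size $n_{min}-c+1$ (yielding the $1/(n_{min}-c)$ term) and manufactures the linked-fate pair from scratch: one coalition member reports a brand-new singleton type, and $c-2$ others report its linked partner. The resulting ratio is exactly $c-2$, so $\min(\tilde\pivec)\geq z$ forces $\max(\tilde\pivec)\geq (c-2)z$ for every $c\geq 3$. This also dissolves the tension in your ``reservoir'' description: there is no separate reservoir; the coalition \emph{is} the forced-seat group, and its misreports create both sides of the trade-off pair.
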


Theorem 3 shows that manipulators can worsen Problems 1 or 2, or even create them when they did not occur in the original instance (consider, e.g., the lower bound construction with $n_{min} = n/4$). Then, the question is: to \textit{what extent} can manipulators eliminate good solutions via problems 1, 2, or other kinds of problems?
We now answer this question with \Cref{thm:UB-tradeoff}, which gives an upper bound matching \Cref{thm:LB-tradeoff}. Conceptually, this tightness means that in the worst case, we must handle Problems 1 and 2 and \textit{only} these problems. Our handling of these two problems will be reflected in our bounds: as in \Cref{thm:LB-tradeoff}, \Cref{thm:UB-tradeoff} and many of our subsequent bounds contain a $\max\{\cdot,\cdot\}$ term whose first and second entries handle Problem 2 and Problem 1, respectively.
\begin{theorem}[\textbf{Upper Bound}] \label{thm:UB-tradeoff}
Fix any $\inst$ satisfying \Cref{ass:inclusion}, $C \subseteq [n]$ with $|C|=c$ respecting \ref{eq:restriction1}, and any \, $\boldsymbol{\tilde{w}} \in \boldsymbol{\mathcal{W}}_{\inst,C}$. For all $z \in (0,1/n]$, 
there exists $\pivec \in \Pi(\manipinst)$ such that 
   \[\min(\pivec) \in \Omega(z) \qquad  \text{and} \qquad \max(\pivec) \in O(\max\{cz,1/(n_{min}(\inst)-c)\}).\] 
\end{theorem}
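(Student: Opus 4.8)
The plan is to produce the required $\pivec$ as an explicit two-way convex combination of realizable assignments, one tuned to each term of $\max\{cz,\,1/(n_{min}(\inst)-c)\}$; I will never optimize a single objective. Throughout I work with anonymous assignments and describe them by vector-indexed probabilities $\textsf{p}_w$, realized by distributions over \emph{valid vector-panels} (integer seat-count vectors $\boldsymbol{a}=(a_w)_w$ with $|\boldsymbol{a}|=k$, all quotas satisfied, and $a_w\le n_w$). The structural fact I would establish first is that every valid vector-panel of $\inst$ is still valid in $\manipinst$: such a panel only uses vectors in $\mathcal{W}_\inst$, each with at most $k$ seats, and Restriction~2.1 guarantees that each of those vector groups still has at least $n_{min}(\inst)-c\ge k$ members in $\manipinst$, so the seat counts remain feasible. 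Consequently, any distribution over original valid panels stays realizable in $\manipinst$ and may be freely re-symmetrized over the (possibly larger) vector groups of $\manipinst$.

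The first component, $\pivec^{\mathrm{bulk}}$, covers every agent whose reported vector lies in $\mathcal{W}_\inst$. Using Assumption~\ref{ass:inclusion}, for each $w\in\mathcal{W}_\inst$ pick a valid vector-panel $\boldsymbol{a}^{(w)}$ of $\inst$ with $a^{(w)}_w\ge 1$; take the distribution that puts mass $n_w(\manipinst)/n$ on $\boldsymbol{a}^{(w)}$ (with any leftover mass on an arbitrary one of them), then re-symmetrize in $\manipinst$. A short calculation shows that an agent with vector $w$ then receives at least $1/n$ (from the $\boldsymbol{a}^{(w)}$ term alone) and at most $k/n_w(\manipinst)\le k/(n_{min}(\inst)-c)$ (since the expected number of seats on any vector is at most $k$); agents in manipulator-created vector groups receive $0$ here. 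The second component, $\bar\pivec^S$, covers the set $S$ of agents whose reported vector is \emph{not} in $\mathcal{W}_\inst$, of which there are at most $c$. Assumption~\ref{ass:inclusion}, Restriction~2.1, and the removal of self-excluding manipulators together imply $\manipinst$ has no structurally excluded agents, so each $i\in S$ lies on some valid panel; let $\bar\pivec^S$ be the anonymization of the uniform distribution over one such panel per agent of $S$, which gives every agent of $S$ probability at least $1/|S|\ge 1/c$ and every agent at most $1$.

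Finally I would set $\pivec:=(1-cz)\,\pivec^{\mathrm{bulk}}+cz\,\bar\pivec^S$ (a convex combination, as $cz\le c/n\le 1/2$, and just $\pivec^{\mathrm{bulk}}$ if $c=0$). Checking the two bounds is then routine: $\min(\pivec)\ge\min\{(1-cz)/n,\ cz/c\}=\min\{(1-cz)/n,\ z\}\in\Omega(z)$, using $z\le 1/n$ on the first branch; and $\max(\pivec)\le (1-cz)\cdot k/(n_{min}(\inst)-c)+cz\cdot 1\le k/(n_{min}(\inst)-c)+cz\in O(\max\{cz,\ 1/(n_{min}(\inst)-c)\})$.

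I expect the only genuinely delicate part to be the bulk component. One has to confirm that ``good'' solutions really do transfer from $\inst$ to $\manipinst$ with the right quantitative guarantees: the re-symmetrization must stay feasible, which is exactly what Restriction~2.1's guarantee that every original group keeps at least $k$ members secures, and the ``one representative panel per group, weighted by group size'' distribution must simultaneously deliver the $1/n$ floor for everyone and the $1/(n_{min}(\inst)-c)$ ceiling. Two features of the final bound are worth isolating as the conceptual content: the denominator $n_{min}(\inst)-c$ rather than $n_{min}(\inst)$ is the sole imprint of Problem~1 (manipulators can shrink the smallest original group by up to $c$), and the mixing weight $\Theta(cz)$ — chosen small enough to inflate the maximum by only $O(cz)$ yet large enough to lift the at most $c$ manipulator-created-group agents above $z$ — is the sole imprint of the Problem~2-flavored $cz$ term. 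Making these two components coexist inside one convex combination with constants matching \Cref{thm:LB-tradeoff} is the crux.
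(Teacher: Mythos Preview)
Your proposal is correct and follows essentially the same route as the paper: both construct the required $\pivec$ as a convex combination $(1-\Theta(cz))\cdot(\text{bulk})+\Theta(cz)\cdot(\text{supplementary})$, where the bulk reuses valid panels of $\inst$ (transferable to $\manipinst$ by Restriction~2.1) weighted one-representative-panel-per-vector to give all original-vector agents probability in $[\Omega(1/n),\,O(k/(n_{min}-c))]$, and the supplementary uniformly covers one panel per agent in a newly created vector group. The only cosmetic difference is that the paper builds the bulk in $\inst$ with weights $n_w(\inst)/n$ (its \Cref{lem:feasibility-1/n}) and then argues the transfer, whereas you weight by $n_w(\manipinst)/n$ directly in $\manipinst$; this saves you the paper's extra step bounding $n_w(\manipinst)\le 2n_w(\inst)$ but is otherwise the same construction.
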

\begin{proof}[Proof Sketch] The full proof is intricate and requires significant notation, so we defer it to \Cref{app:impact-ub} and sketch it here. Fix any pair of instances $\inst, \manipinst$ with $C, \tilde{\boldsymbol{w}}$ as specified in the statement. At a high level, we will construct a panel distribution $\tilde{\mathbf{d}}$ in instance $\manipinst$ starting from a ``good'' panel distribution $\mathbf{d}$ in instance $\inst$. In particular, we will choose $\mathbf{d}$ with associated $\pivec$ such that $\min(\pivec) \in \Omega(1/n)$ and $\max(\pivec) \in O(1/n_{min})$. We know such a $\mathbf{d}$ to exist by the following lemma:
\begin{lemma} \label{lem:feasibility-1/n}
    If $\inst$ satisfies Assumption \ref{ass:inclusion}, then $\exists$ $\pivec \in \Pi(\inst)$ such that $\pivec \in [\Omega(1/n), \ O(1/n_{min}(\inst))]^n.$
\end{lemma}
\begin{proof}[Proof Sketch]We prove this lemma in \Cref{app:feasibility-1/n}, but the argument is simple: all agents $i$ must exist on some panel $K_i \in \mathcal{K}$, per \Cref{ass:inclusion}. By defining $\mathbf{d}$ to uniformly randomize over $K_1 \dots K_n$, we guarantee all agents at least probability $\pi_i \in \Omega(1/n)$. The maximum total probability given to any vector group by $\mathbf{d}$ is trivially at most $k$; spread over at least $n_{min}(\inst)$ agents, it follows that for all agents, $\pi_i \leq k/n_{min}(\inst) \in O(1/n_{min}(\inst))$.
\end{proof}

Now, we construct $\tilde{\mathbf{d}}$ from $\mathbf{d}$ as pictured in \Cref{fig:ub-construction} (with some nuances omitted). Let $\overline{C} \subseteq C$ with $|\overline{C}| = \overline{c}$ be the set of agents who misreport a feature vector not in the original pool (i.e., $\tilde{w}_i \notin \mathcal{W}_{\inst}$). For every $i \in \overline{C}$, identify a panel $\tilde{K}_i \in \tilde{\mathcal{K}}$ such that $i \in \tilde{K}_i$ (these need not be unique). Beginning from $\mathbf{d}$, we transfer $\overline{c}z$ total probability mass from $\mathbf{d}$ (in blue) and spread it evenly over the new panels $\tilde{K}_i | i \in \overline{C}$ (in red).
\begin{figure}
    \centering
     \hspace*{1cm} \includegraphics[width=0.9\textwidth]{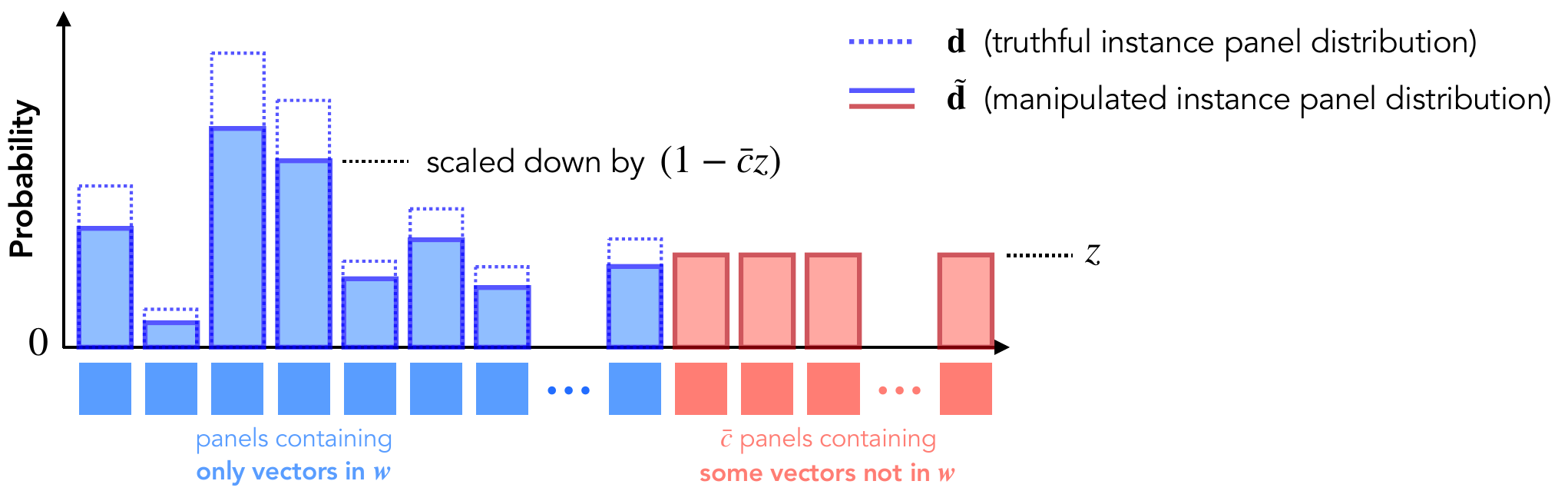}
    \caption{ The solid bars depict $\tilde{\mathbf{d}}$, the distribution we will construct. Blue bars represent the probability mass placed on panels that were \textit{already present} in the support of $\mathbf{d}$, and red bars represent the probability placed on panels newly constructed to contain agents reporting vectors not present in the truthful pool.}
    \label{fig:ub-construction}
\end{figure}
From this construction results the following panel distribution:
\begin{align*}
    \tilde{\pdist}_K = \Big\{
        \pdist_K (1-\overline{c}z) \ \text{ for all } K \in \mathcal{K}, \qquad  z \ \text{ for all } \tilde{K}_i | i \in [n], \qquad 0 \ \text{ else}\Big\}.
\end{align*}
Let $\tilde{\pivec}$ be the probability allocation implied by $\mathbf{d}$. First, those who reported a vector \textit{outside} the original pool exist only on the $\overline{c} \leq c$ red panels, so $\tilde{\pi}_i \in [z,cz]$ for all $i \in \overline{C}$. Next, considering those who reported (either truthfully or not) a vector in the original pool: since any member of this group may be on all $\overline{c}$ red panels, we apply \Cref{lem:feasibility-1/n} to get that $\pi_i \leq \tilde{\pi}_i \leq \pi_i \, \frac{n_{w_i}}{n_{w_i}-c} + \overline{c}z  \leq \pi_i \, \frac{n_{min}}{n_{min}-c} \in O(1/(n_{min}(\inst)-c)+cz)$ for all $i \in [n] \setminus \overline{C}$ (here, the intermediate steps handle the case that the coalition leaves $i$'s vector group, making it smaller and thereby raising its members' probabilities by a factor of at most $n_{min}/(n_{min}-c)$). On the lower end, we have that $\tilde{\pi}_i \geq (1-cz)\pi_i \in \Omega(1/n)$, using that $\pi_i \in \Omega(1/n)$ and $1-cz \geq 1/2 \in \Theta(1)$ (since $z \leq 1/n$ and $c \leq n_{min} \leq n/2$ per \Cref{sec:model}). We conclude that $\tilde{\pi}_i \in [\Omega(1/n),O(1/(n_{min}(\inst)-c)+cz)]$ for all $i \in [n] \setminus \overline{C}$. Taking the union of the two ranges we have deduced (the first for $i \in \overline{C}$, the second for $i \in [n] \setminus \overline{C}$), we conclude that $\tilde{\pivec} \in [\Omega(z), O(\max\{cz,1/(n_{min}(\inst)-c)\})]^n$, as needed.
\end{proof}

\section{\textsc{Goldilocks}$_1$: \textit{Manipulation Robustness} and \textit{Fairness}}
\label{sec:gold-analysis}

Having characterized the quality of solutions that must exist, we now prove bounds on the extent to which \textsc{Goldilocks}$_1$ recovers the best available solutions --- and what this means for our ideals. The bulk of this section is dedicated to proving \Cref{thm:manip-gold}, our main result, which positively bounds \textsc{Goldilocks}$_1$'s manipulation robustness and fairness. We will then discuss the optimality of \textsc{Goldilocks}$_1$, and how \textsc{Goldilocks}$_1$ theoretically compares to (1) \textsc{Goldilocks}$_\gamma$ with a finer-tuned choice of $\gamma$, and (2) the previously-studied algorithms \textsc{Leximin}, \textsc{Nash}, and \textsc{Minimax}.
   \begin{theorem}[\textbf{Upper Bound}]\label{thm:manip-gold} Fix any $\inst$ satisfying \Cref{ass:inclusion} with $n_{min}(\inst) > k$. Then, for all $c \leq n_{min}(\inst) - k$,  
   \begin{align*}
    \text{\upshape{\textsf{manip}}}_{\text{int}}(\mathcal{I},\gold_1,c)  &\in  O(\max\{\nicefrac{\sqrt{c}}{n}, \ \nicefrac{1}{n_{min}(\inst) - c}\}),\\
    \text{\upshape{\textsf{manip}}}_{\text{ext}}(\mathcal{I},\gold_1,c) &\in O(\nicefrac{1}{n_{min}}),\\
    \text{\upshape{\textsf{manip}}}_{\text{comp}}(\mathcal{I},\gold_1,c) &\in O(c\cdot \text{\upshape{\textsf{manip}}}_{\text{int}}(\mathcal{I},\gold_1,c)) + \max_{(f,v) \in FV} (u_{f,v} - \ell_{f,v}),\\
    \text{\upshape{\textsf{manip-fairness}}}(\inst,\gold_1,c) &\in \Omega(\min\left\{\nicefrac{1}{n\sqrt{c}}, \ \nicefrac{(n_{min}-c)}{n^2}\right\}).
   \end{align*}
    \end{theorem}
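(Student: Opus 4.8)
The plan is to leverage the defining property of $\gold_1$: on every instance $\mathcal{J}$, its output $\pivec^{\gold_1}(\mathcal{J})$ minimizes $\textit{Goldilocks}_1$ over $\Pi(\mathcal{J})$, so $\textit{Goldilocks}_1(\pivec^{\gold_1}(\mathcal{J})) \le \textit{Goldilocks}_1(\pivec)$ for \emph{every} feasible $\pivec$. Since both summands of $\textit{Goldilocks}_1$ are nonnegative, any upper bound $G$ on $\textit{Goldilocks}_1(\pivec^{\gold_1}(\mathcal{J}))$ yields, for free, $\max(\pivec^{\gold_1}(\mathcal{J})) \le (k/n)\,G$ and $\min(\pivec^{\gold_1}(\mathcal{J})) \ge (k/n)/G$. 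So the theorem reduces to exhibiting good feasible points: for the manipulated instance $\mathcal{J} = \manipinst$ I would feed \Cref{thm:UB-tradeoff} its free parameter $z$, obtaining a feasible $\pivec_z$ with $\textit{Goldilocks}_1(\pivec_z) \le \frac{O(\max\{cz,\,1/(n_{min}(\inst)-c)\})}{k/n} + \frac{k/n}{\Omega(z)}$; for the truthful instance $\inst$ I would use \Cref{lem:feasibility-1/n} (equivalently \Cref{thm:UB-tradeoff} at $c=0$), which gives $\max(\pivec^{\gold_1}(\inst)) \in O(1/n_{min}(\inst))$ since the ``Problem~2'' term there vanishes. (As in the paper's convention, the panel size $k$ is treated as a constant.)

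For $\mint$: for any coalition $C$, reports $\tilde{\boldsymbol{w}}$, and $i\in C$, the gain $\pi_i^{\gold_1}(\manipinst) - \pi_i^{\gold_1}(\inst) \le \pi_i^{\gold_1}(\manipinst) \le \max(\pivec^{\gold_1}(\manipinst))$, so it suffices to bound $\max(\pivec^{\gold_1}(\manipinst))$. The crux is to pick $z$ so as to balance the two summands of $\textit{Goldilocks}_1$: taking $z \asymp 1/(n\sqrt{c})$ equates $\tfrac{cz}{k/n}$ with $\tfrac{k/n}{z}$ up to constants and yields $\textit{Goldilocks}_1(\pivec^{\gold_1}(\manipinst)) \le \textit{Goldilocks}_1(\pivec_z) = O\!\big(\max\{\sqrt{c},\; n/(n_{min}(\inst)-c)\}\big)$, with the $\max$ inherited directly from the $\max\{cz,\,1/(n_{min}(\inst)-c)\}$ inside \Cref{thm:UB-tradeoff}. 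Multiplying through by $k/n$ gives $\max(\pivec^{\gold_1}(\manipinst)) = O(\max\{\sqrt{c}/n,\; 1/(n_{min}(\inst)-c)\})$. For $\mext$: for $i\notin C$ the loss is $\pi_i^{\gold_1}(\inst) - \pi_i^{\gold_1}(\manipinst) \le \pi_i^{\gold_1}(\inst) \le \max(\pivec^{\gold_1}(\inst)) \in O(1/n_{min}(\inst))$ by the truthful-instance bound above --- no $c$ appears because this concerns only the untouched instance. For $\textsf{manip-fairness}$: this quantity equals $\min_{C,\tilde{\boldsymbol{w}}}\min(\pivec^{\gold_1}(\manipinst))$, and dividing $k/n$ by the \emph{same} upper bound $O(\max\{\sqrt{c},\; n/(n_{min}(\inst)-c)\})$ on $\textit{Goldilocks}_1(\pivec^{\gold_1}(\manipinst))$ gives $\min(\pivec^{\gold_1}(\manipinst)) \ge \Omega(\min\{1/(n\sqrt{c}),\; (n_{min}(\inst)-c)/n^2\})$.

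For $\mcomp$: fix $C$, $\tilde{\boldsymbol{w}}$, and a pair $(f,v)$, and split $\sum_{i:\,f(i)=v}\big(\pi_i^{\gold_1}(\manipinst) - \pi_i^{\gold_1}(\inst)\big)$ according to whether $i\in C$. The coalition part has at most $c$ terms, each at most $\mint(\inst,\gold_1,c)$ by definition, contributing $O(c\cdot\mint(\inst,\gold_1,c))$. For the non-coalition part, every $i\notin C$ reports truthfully, so $\{i\notin C : f(i)=v\}$ is the reported-$v$ group with its (at most $c$) coalition members removed; applying the quota bounds $\ell_{f,v} \le \sum_{\text{reported }f\text{-value}\,=\,v}\pi_i \le u_{f,v}$ to the reported groups in \emph{both} $\manipinst$ and $\inst$ shows the non-coalition part is at most $u_{f,v} - \ell_{f,v}$ plus a correction of order $c\cdot\max(\pivec^{\gold_1}(\cdot))$, which is absorbed into the $O(c\cdot\mint)$ term. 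Maximizing over $(f,v)$ gives the stated bound.

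The hardest part is the $\mint$/$\textsf{manip-fairness}$ step, specifically extracting the $\sqrt{c}$ (rather than $c$) scaling --- this is exactly where the design of $\textit{Goldilocks}_1$ as a \emph{sum} of an upper-deviation penalty and a lower-deviation penalty does the work, because it lets the algorithm ``choose its operating point'' on the trade-off curve of \Cref{thm:UB-tradeoff}, and the balancing choice $z\asymp 1/(n\sqrt c)$ is what produces the square root. Beyond that, I expect the fiddly points to be (i) keeping the $k/n$ normalization factors from leaking into the $n$-dependence of the bounds, (ii) accounting for the reduced pool $[n']$ and the shrunk $n_{min}(\manipinst) \ge n_{min}(\inst)-c$ in the manipulated instance, and (iii) the careful bookkeeping of true-versus-reported feature-value group memberships needed for $\mcomp$.
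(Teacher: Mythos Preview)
Your proposal is correct and matches the paper's proof essentially step for step: the paper also bounds $\max$ and $\min$ of $\pivec^{\gold_1}$ on both $\inst$ and $\manipinst$ via the optimality of $\textit{Goldilocks}_1$ against the feasible witnesses from \Cref{lem:feasibility-1/n} and \Cref{thm:UB-tradeoff} (with the same balancing choice $z \asymp 1/(n\sqrt{c})$), then reads off each of the four quantities exactly as you outline. The only cosmetic difference is in $\mcomp$, where the paper splits the sum by whether an agent \emph{reports} value $v$ honestly (rather than by coalition membership), but the bookkeeping is equivalent and yields the same bound.
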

\begin{proof}
    Fix any $\mathcal{I}$ and $C$ satisfying the requirements of the statement, and fix any $\boldsymbol{\tilde{w}} \in \boldsymbol{\mathcal{W}}_{\inst,C}$. Let $\pivec^* = \pivec^{\gold}(\inst)$ denote the optimal probabilities given by $\gold_1$ in $\inst$, and likewise let $\tilde{\pivec}^* = \pivec^{\gold}(\manipinst)$. The key steps of the proof are in bold.\\[-0.75em] 

    \noindent \textbf{Goldilocks minimizes multiplicative deviation from $k/n$.} We represent the minimum achievable multiplicative deviation from $k/n$ in $\inst$ as  $\delta(\pivec) := \min_{\pivec \in \Pi(\inst)}\max\left\{\frac{k/n}{\min(\pivec)},\frac{\max(\pivec)}{k/n}\right\}.$ Then,\footnote{One may wonder why we do not just directly minimize $\delta(\inst)$ to eliminate the 2-factor in \Cref{lem:gold-prob-bounds}. We study \textit{Goldilocks}$_\gamma$ because for practical reasons, we want an algorithm that continues improving both the minimum and maximum as far as possible, rather than stopping when the maximum multiplicative deviation from $k/n$ is reached.}
    \begin{lemma} \label{lem:gold-prob-bounds} For all $\inst$, $\pivec^{\textsc{Goldilocks}_1}(\inst) \in [k/n \cdot (2\delta(\inst))^{-1}, \ \ k/n \cdot 2\delta(\inst)]^n.$
\end{lemma}
\begin{proof}
    Fix any $\inst, \pivec \in \Pi(\inst)$ and let $\pivec^* = \pivec^{\textsc{Goldilocks}_1}(\inst)$. Then, by the optimality of $\pivec^*$, 
    \begin{center}
        $Goldilocks_1(\pivec^*) \leq \textit{Goldilocks}_1(\pivec) = \frac{\max(\pivec)}{k/n} +  \frac{k/n}{\min(\pivec)} \leq 2\delta(\pivec).$
    \end{center}
    We apply this bound to bound each term of \textit{Goldilocks}$_1(\pivec^*)$ separately to conclude: 
    \begin{align*}
        \max(\pivec^*)/(k/n) \leq 2\delta(\pivec) &\iff \max(\pivec^*) \leq k/n \cdot 2\delta(\pivec)\\
        (k/n)/\min(\pivec^*) \leq 2\delta(\pivec) &\iff \min(\pivec^*) \geq k/n \cdot (2\delta(\pivec))^{-1}. \qedhere
    \end{align*}
\end{proof}
    
    \noindent \textbf{$\delta(\inst)$ and $\delta(\manipinst)$ are bounded.} First considering $\inst$, we know that there exists $\pivec \in \Pi(\inst)$ such that $\max(\pivec) \in O(1/(n_{min}))$ and $\min(\pivec) \in \Omega(1/n)$ (\Cref{lem:feasibility-1/n}). It follows that 
    \begin{equation} \label{eq:orig-solution} \delta(\inst) \leq \max\left\{\frac{k/n}{\Omega(1/n)}, \ \frac{O(1/n_{min})}{k/n}\right\} \in O\left(\max\left\{1,\frac{n}{n_{min}}\right\}\right) = O\left(\frac{n}{n_{min}}\right).
    \end{equation}
    Next considering $\manipinst$, we know that for all $z \in (0,1/n]$, there exists $\pivec \in \Pi(\manipinst)$ such that $\max(\pivec) \in  O(\max\{cz,1/(n_{min}-c)\})$ and $\min(\pivec) \in  \Omega(z)$ (\Cref{thm:UB-tradeoff}). We elect to apply this bound with $z = 1/(n\sqrt{c})$. Conceptually, we choose this $z$ because this is a trade-off that \textsc{Goldilocks}$_1$ is particularly well-suited to recover: \textsc{Goldilocks}$_1$ penalizes deviations above and below $k/n$ equally, and this choice of $z$ correspondingly positions the requisite multiplicative probability gap of $c$ symmetrically over $k/n$ (i.e., $z = 1/(n\sqrt{c})$ and $cz = \sqrt{c}/n$). It follows that
    \begin{equation} \label{eq:manipbounds}
        \delta(\manipinst) \leq \max\left\{\frac{k/n}{\Omega(1/(n\sqrt{c}))}, \ \frac{O(\max\{\sqrt{c}/n,\ 1/(n_{min}-c)\})}{k/n}\right\} \in O\left(\max \left\{\sqrt{c}, \frac{n}{n_{min}-c}\right\}\right).
    \end{equation}
    \textbf{Maximum and minimum probabilities given by \textsc{Goldilocks}$_1$ are bounded.} Combining \Cref{lem:gold-prob-bounds} and \Cref{eq:orig-solution,eq:manipbounds}, we conclude that
    \begin{center}
        $\pivec^* \in \left[\Omega\left(\frac{n_{min}}{n^2}\right), \ O\left(\frac{1}{n_{min}}\right)\right]^n, \quad \tilde{\pivec}^* \in \left[\Omega\left(\min\left\{\frac{1}{n\sqrt{c}}, \ \frac{n_{min}-c}{n^2}\right\}\right), \ O\left(\max\left\{\frac{\sqrt{c}}{n}, \ \frac{1}{n_{min}-c}\right\}\right)\right]^n.$ 
    \end{center}
    
    \noindent \textbf{Concluding the proof.} The bounded probabilities above imply an upper bound on the largest possible gain in probability by any agent:\[\text{\upshape{\textsf{manip}}}_{int}(\inst,\textsc{Goldilocks}_1,c) \leq \max_{i \in [n]}(\tilde{\pi}_i^* - \pi_i^*) \in O\left(\max\left\{\sqrt{c}/n, \ 1/(n_{min}-c)\right\}\right).\]
    
    They also imply a bound on the largest possible \textit{loss} in probability for any agent: 
    \[\text{\upshape{\textsf{manip}}}_{\text{ext}}(\mathcal{I},\gold_1,c) \leq \max_{i \in [n]} (\pi^*_i - \tilde{\pi}^*_i)  \in  O(1/n_{min}).\]
    Analyzing $\mcomp$ requires a little more care, because if the quotas are not perfectly tight (e.g., if some group $f,v$ is permitted to receive some \textit{range} of seats $u_{f,v} - \ell_{f,v} > 0$), manipulators can gain seats for this group not just by misreporting value $v$ for $f$, but also by ensuring that all panels in the support of the distribution give this group $u_{f,v}$ seats instead of $\ell_{f,v}$. 
    To make this formal, let $\tilde{f}:[n] \to V_f$ map each agent $i$ to their \textit{reported} value for feature $f$. In the argument below, for each $(f,v) \in FV$, we divide the quantity we want to bound into two, where the first represents the probability garnered among agents who honestly report value $v$ for feature $f$, and the second is the probability garnered among agents who do \textit{not} report value $v$ for feature $f$ in the post-manipulation pool, but truly possess that feature. Then, $\mcomp(\mathcal{I},\gold_1,c)$ is equal to
    \begin{align*}
\max_{(f,v) \in FV} \sum_{i : f(i) = v} (\tilde{\pi}^{*}_{i} - \pi_i^*)
        &= \max_{(f,v) \in FV} \sum_{i : f(i) = v \land \tilde{f}(i) = v} \hspace*{-0.5cm}(\tilde{\pi}^{*}_{i} - \pi_i^*) \ \ +\sum_{i :  f(i) = v \land \tilde{f}(i) \neq v} \hspace*{-0.5cm} (\tilde{\pi}^{*}_{i} - \pi_i^*)\\
        & \leq \max_{(f,v) \in FV} u_{f,v} - \ell_{f,v} + |\{i | f(i)=v \land \tilde{f}(i) \neq v\}| \cdot \max_{i \in [n]}(\tilde{\pi}^{*}_{i} - \pi_i^*)\\
        & \leq \max_{(f,v) \in FV} u_{f,v} - \ell_{f,v} + O(c \cdot \text{\upshape{\textsf{manip}}}_{int}(\inst,\textsc{Goldilocks}_1,c)).
    \end{align*} 
   
   \[\text{ Finally,} \  \ \text{\upshape{\textsf{manip-fairness}}}(\mathcal{I},\gold_1,c) \geq \min_{i \in [n]} \tilde{\pi}_i^* \in \Omega\left(\min\left\{1/(n\sqrt{c}), \ (n_{min}-c)/n^2\right\}\right). \qedhere\] 
\end{proof}

\subsection{On the Optimality of \textsc{Goldilocks}}
When considering the optimality of \textsc{Goldilocks}, we will primarily consider \textsf{manip}$_{int}$ and \textsf{manip-fairness}. We comment on the tightness and practicality of our other upper bounds in \Cref{app:tightness}. We first extend our impossibility on controlling high and low probabilities (\Cref{thm:LB-tradeoff}) to prove a corresponding impossibility on the ability to simultaneously guarantee
fairness and manipulation robustness. This theorem is proven with almost an identical construction to the proof of \Cref{thm:LB-tradeoff}, with a slightly more complicated coalition structure. We defer the proof to \Cref{app:lb}.
\begin{theorem}[\textbf{Lower Bound}] \label{thm:lb}
    Fix any algorithm \textsc{A}, any even $k \in \{6,\dots,n_{min}-5\}$, $n_{min} \in \{k+5, \dots, \lfloor n/k\rfloor\}$ and $c \in \{5,\dots,n_{min}-k\}$. There exists $\inst$ with $k, n_{min}(\inst) = n_{min}$ satisfying Assumption \ref{ass:inclusion} such that for all $z \in (0,1/n]$,
    \begin{center}
        $\textsf{manip-fairness}(\inst,\algo,c) \in \Omega(z) \implies 
\mint(\inst,\algo,c) \in \Omega(\max\{cz,1/(n_{min}-c)\})$.
    \end{center}
\end{theorem}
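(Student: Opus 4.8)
The plan is to obtain this impossibility by refining the lower-bound construction behind \Cref{thm:LB-tradeoff}, upgrading its conclusion from ``\emph{some} agent is forced to high probability in $\manipinst$'' to ``some \emph{coalition member}'s probability is forced far above its truthful value.''

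Recall that \Cref{thm:LB-tradeoff} supplies an instance $\inst$ (satisfying \Cref{ass:inclusion}), a coalition $C$ with $|C|=c$ respecting \ref{eq:restriction1}, and a misreport $\boldsymbol{\tilde{w}}$ such that in $\manipinst$ every realizable assignment $\tilde{\pivec}$ with $\min(\tilde{\pivec})\ge z$ has $\max(\tilde{\pivec})\in\Omega(\max\{cz,1/(n_{min}-c)\})$. The key is that this bound is proved merely by summing selection probabilities over a ``linked-fate'' pair of vector groups of very different sizes and over a forced small group, so it uses feasibility only --- not anonymity --- and hence applies verbatim to the (possibly non-anonymous) $\pivec^{\textsc{A}}(\manipinst)$ produced by an arbitrary algorithm $\textsc{A}$. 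I would re-run essentially the same construction with two changes. First, the small side of the linked-fate pair (and the small forced group) is taken to be a feature vector \emph{absent from the truthful pool}, so post-manipulation it can contain only coalition members; then the same summing argument shows that a member of that group --- hence some $i^{\star}\in C$ --- has $\pi^{\textsc{A}}_{i^{\star}}(\manipinst)\in\Omega(\max\{cz,1/(n_{min}-c)\})$ (for the $1/(n_{min}-c)$ scaling one additionally invokes the weak consequence $\textsf{manip-fairness}>0$ of the hypothesis, which prevents $\textsc{A}$ from zeroing out the group that must carry the mandated mass). Second, I populate that group with the lowest-probability members of a large truthful vector group $G$ bearing a tight upper quota that caps $\sum_{i\in G}\pi^{\textsc{A}}_i$ by a constant; since $\mint$ maximizes over coalitions we may choose $C$ in this $\textsc{A}$-dependent way, and taking $|G|$ a large enough constant multiple of $n_{min}$ (which fits since $n_{min}\le\lfloor n/k\rfloor$) forces $\pi^{\textsc{A}}_{i^{\star}}(\inst)\in O(1/|G|)$, i.e., at most half the $\Omega(1/(n_{min}-c))$ lower bound above. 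The constantly-many extra coalition slots this bookkeeping costs is why the admissible ranges of $k,n_{min},c$ here start two higher than in \Cref{thm:LB-tradeoff}.

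Given such an $\inst$ and $C$, the argument is immediate. Fix $\textsc{A}$ and $z\in(0,1/n]$ and suppose $\textsf{manip-fairness}(\inst,\textsc{A},c)\in\Omega(z)$. Specializing the definition of $\textsf{manip-fairness}$ to this $C,\boldsymbol{\tilde{w}}$ gives $\min(\pivec^{\textsc{A}}(\manipinst))\in\Omega(z)$, so by the construction some $i^{\star}\in C$ has $\pi^{\textsc{A}}_{i^{\star}}(\manipinst)\in\Omega(\max\{cz,1/(n_{min}-c)\})$; therefore
\[
\mint(\inst,\textsc{A},c)\ \ge\ \pi^{\textsc{A}}_{i^{\star}}(\manipinst)-\pi^{\textsc{A}}_{i^{\star}}(\inst)\ \in\ \Omega(\max\{cz,1/(n_{min}-c)\}),
\]
the subtracted term being absorbed using the second modification together with $\max\{cz,1/(n_{min}-c)\}\ge 1/(n_{min}-c)$. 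It then remains only to check that $\inst$ meets \Cref{ass:inclusion} and that $C$ respects \ref{eq:restriction1}, both inherited from \Cref{thm:LB-tradeoff} up to the constantly-many extra manipulators.

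The main obstacle is the construction, specifically making the two modifications coexist: the quotas must force the trade-off's high-probability agent to lie inside the coalition \emph{and} the beneficiary's truthful probability must be a small constant fraction of its forced post-manipulation value. The latter is the delicate point, because $1/(n_{min}-c)$ can be as small as $\Theta(k/n)$, so a generic ``$O(k/n)$''-style truthful bound leaves no constant gap; it is precisely the tight-upper-quota gadget on a large group $G$ that buys the slack, and threading it through while keeping $\inst$ free of structural exclusion --- and the manipulated instance free of structural exclusion of bystanders --- is the technical heart of the proof.
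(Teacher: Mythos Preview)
Your high-level plan—reuse the \Cref{thm:LB-tradeoff} instance and redesign the coalition so that the agents forced to high probability in $\manipinst$ are coalition members with low truthful probability—is exactly the paper's approach, and your singleton/linked-fate argument for the $\Omega(cz)$ term is essentially what the paper does via agent $i'$ (the unique $010$-reporter). The paper's execution differs in one important respect, and your first modification has a gap there.

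The paper does \emph{not} make the ``small forced group'' absent from the truthful pool. It keeps the truthful instance of \Cref{thm:LB-tradeoff} unchanged (so $\mathcal{W}_\inst=\{000,110,111\}$ with $n_{111}=n_{min}$) and obtains the $\Omega(1/(n_{min}-c))$ term by having most of the coalition \emph{leave} the pre-existing $111$ group, shrinking it to $\tilde n_{111}=n_{min}-c+O(1)$; one designated coalition member $i$ (truthful $000$) reports $111$, and since every valid panel contains exactly two $111$-agents, $\textsf{p}_{111}=2/(n_{min}-c+O(1))$. Your proposal to instead make the forced group absent truthfully cannot work in this construction: $111$ is the only vector with $f_3=1$, so removing it makes the quota $\ell_{f_3,1}=2$ unsatisfiable and violates \Cref{ass:inclusion}. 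More generally, the mandated $\Theta(1)$ mass comes from a feature-level quota, which must already be satisfiable by some vector present truthfully; post-manipulation that mass can then sit on those pre-existing agents rather than on the coalition's newcomers, so ``$\textsf{manip-fairness}>0$'' alone does not pin $\Omega(1/(n_{min}-c))$ onto a coalition member.

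On anonymity: the paper simply reads off $\tilde\pi_i,\tilde\pi_{i'}$ and the truthful bounds $\pi_i,\pi_{i'}\le (k-2)/(n-n_{min})$ as per-vector values, consistent with its standing convention that $\Pi(\inst)$ contains only anonymous assignments; it then checks $(k-2)/(n-n_{min})$ is at most a constant fraction of $1/(n_{min}-c)$ using $n_{min}\le n/k$. Under that convention your second modification (an $\textsc{A}$-dependent choice of low-probability members of a large quota-capped group $G$) is unnecessary. If you genuinely want the statement for non-anonymous $\textsc{A}$, your second modification is the right idea for controlling the truthful side, but you still need to replace your first modification with the paper's depletion move to get the $1/(n_{min}-c)$ term on the post-manipulation side.
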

Observe that \textsc{Goldilocks}$_1$ \textit{almost} achieves the optimal trade-off at $z = 1/(n\sqrt{c})$: our bound on \textsf{manip}$_{int}$ in \Cref{thm:manip-gold} matches that in \Cref{thm:lb}, and our bound on \textsf{manip-fairness} in \Cref{thm:manip-gold} matches \Cref{thm:lb} \textit{under the condition that $n_{min}, n_{min} - c \in \Omega(n)$}. In fact, this special case where \textsc{Goldilocks}$_1$ is optimal is not entirely unnatural: it holds \textit{in expectation} in the real-world panel selection process, and thus should hold by concentration for sufficiently large $n$.\footnote{Were $n$ to be increased in practice, it would be done by increasing the number of letters sent out in stage 1. Because these letters are sent out uniformly at random, all vector groups present in the population would in expectation compose a constant fraction of the pool. This guarantees that $n_{min} \in \Omega(n)$; it is a tiny leap to strengthen \Cref{eq:restriction1} such that $c \leq (1-\epsilon)n_{min} - k$ for any constant $\epsilon > 0$, in which case $n_{min} - c \in \Omega(n)$ as well.} 
\begin{observation}
    For all $\inst,c$ such that $n_{min}(\inst), n_{min}(\inst)-c \in \Omega(n)$, \textsc{Goldilocks}$_1$ is optimal $z = 1/(n\sqrt{c})$ (i.e., guarantees \textsf{manip}$_{int}$ and \textsf{manip-fairness} matching \Cref{thm:lb} at this $z$ in $\inst$).
\end{observation}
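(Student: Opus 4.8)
The plan is to obtain the Observation as an immediate corollary of \Cref{thm:manip-gold} (the \textsc{Goldilocks}$_1$ upper bounds) and \Cref{thm:lb} (the matching lower bound), by specializing the asymptotics to the regime in which $n_{min}(\inst)-c \in \Omega(n)$ — which already forces $n_{min}(\inst)\in\Omega(n)$, so the two hypotheses of the Observation are really one. Write $n_{min} := n_{min}(\inst)$, and assume $c \geq 1$ (for $c = 0$ all quantities are trivially $0$), so that $\sqrt{c}\geq 1$ and $1/n \in O(\sqrt{c}/n)$. No new construction or inequality is needed; the content is purely a matter of identifying which argument dominates inside each $\max$ and each $\min$ under the $\Omega(n)$ hypothesis.

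\textbf{Step 1: collapse \textsc{Goldilocks}$_1$'s guarantees.} Since $n_{min}-c \in \Omega(n)$, we have $1/(n_{min}-c) \in O(1/n) \subseteq O(\sqrt{c}/n)$, so the $\max$ in the $\mint$ bound of \Cref{thm:manip-gold} collapses, giving $\mint(\inst,\textsc{Goldilocks}_1,c) \in O(\sqrt{c}/n)$. Dually, $(n_{min}-c)/n^2 \in \Omega(1/n) \subseteq \Omega(1/(n\sqrt{c}))$, so the $\min$ in the \textsf{manip-fairness} bound of \Cref{thm:manip-gold} collapses, giving $\textsf{manip-fairness}(\inst,\textsc{Goldilocks}_1,c) \in \Omega(1/(n\sqrt{c}))$.

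\textbf{Step 2: invoke the lower bound at $z = 1/(n\sqrt{c})$.} Set $z := 1/(n\sqrt{c})\in(0,1/n]$, so that $cz = \sqrt{c}/n$. By Step 1, \textsc{Goldilocks}$_1$ satisfies the hypothesis $\textsf{manip-fairness}(\inst,\textsc{Goldilocks}_1,c)\in\Omega(z)$ of \Cref{thm:lb} at this $z$, and the conclusion of \Cref{thm:lb} then certifies that \emph{every} algorithm $\algo$ with $\textsf{manip-fairness}(\inst,\algo,c)\in\Omega(z)$ must have $\mint(\inst,\algo,c)\in\Omega(\max\{cz,1/(n_{min}-c)\})$, which under $n_{min}-c\in\Omega(n)$ is $\Omega(\sqrt{c}/n)$. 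Putting the two steps together: \textsc{Goldilocks}$_1$ attains $\textsf{manip-fairness}\in\Omega(z)$ and $\mint\in O(\sqrt{c}/n)$, the latter matching this universal lower bound up to constants. Hence \textsc{Goldilocks}$_1$ sits on the Pareto frontier at $z = 1/(n\sqrt{c})$ — no algorithm guaranteeing the same $\textsf{manip-fairness}$ can do asymptotically better on $\mint$ — which is exactly the claim of the Observation.

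\textbf{Anticipated obstacle.} There is essentially none here: \Cref{thm:manip-gold} and \Cref{thm:lb} do all the heavy lifting, and what remains is bookkeeping of $\max$ and $\min$ under the $\Omega(n)$ hypothesis. The only points that merit a line of care are recording $c\geq1$ (so that $\sqrt{c}/n$ dominates $1/n$), and checking that the parameter ranges required by \Cref{thm:lb} (even $k$, $n_{min}\in\{k+5,\dots,\lfloor n/k\rfloor\}$, $c\in\{5,\dots,n_{min}-k\}$) are compatible with $n_{min}-c\in\Omega(n)$ for all sufficiently large $n$, so that the lower-bound instance actually exists in this regime.
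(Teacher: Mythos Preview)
Your proposal is correct and follows exactly the paper's approach: the Observation is stated as a direct consequence of the discussion immediately preceding it, which (just as you do) collapses the $\max$ and $\min$ terms in \Cref{thm:manip-gold} under the hypothesis $n_{min}-c\in\Omega(n)$ and compares the resulting $O(\sqrt{c}/n)$ and $\Omega(1/(n\sqrt{c}))$ bounds against \Cref{thm:lb} at $z=1/(n\sqrt{c})$. The paper gives no separate formal proof of the Observation, so your write-up is simply a more explicit rendering of the same argument.
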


The fact that \textsc{Goldilocks}$_1$ is near-optimal for generic $c$ (and optimal when $n_{min}$ is large) may seem surprising, because the algorithm has no knowledge of $c$ with which to tune $\gamma$. This achievement is a consequence of the fact that when Problem 2 is the main barrier to good solutions (e.g., when $n_{min}$ is large), it is always possible to place the $c$-dependent fundamental trade-off between maximum and minimum probabilities symmetrically over $k/n$ (\Cref{thm:UB-tradeoff}). \textsc{Goldilocks}$_1$ will do so, thus achieving an optimal trade-off. If Problem 1 is instead the main barrier to good solutions (e.g., when $n_{min}$ is small), then the minimal achievable deviations above and below $k/n$ are \textit{asymmetric}, with the above deviation being larger. Here, \textsc{Goldilocks}$_1$ achieves the optimal maximum probability but is not guaranteed to recover the best possible minimum probability due to the asymmetry. This is the sense in which \textsc{Goldilocks}$_1$ is tailored to Problem 2, as discussed in \Cref{sec:intuition-problems}.

Nonetheless, one may wonder if there exists a setting of $\gamma$ such that \textsc{Goldilocks}$_\gamma$ can perfectly recover our bound in \Cref{thm:lb} for generic $z,n_{min},c$. If not, this would suggest some weakness in the \textit{functional form} of \textit{Goldilocks}$_\gamma$, and perhaps some other objective could perform better. Fortunately, we find that there does exist such a $\gamma$. We defer the proof to \Cref{app:general-opt-gold}.
\begin{proposition}\label{prop:general-opt-gold}
      Fix any $\inst$ satisfying \Cref{ass:inclusion} with $n_{min}(\inst) > k$, $c \leq n_{min}(\inst) - k$, and $z \in (0, 1/n]$.  
    Let $\gamma^* = z  \cdot \max\{1/(n_{min}-c),cz\} \cdot (n/k)^2$. Then,
    \begin{align*}
        \textsf{manip-fairness}(\inst,\gold_{\gamma^*},c) &\in \Omega(z)  \quad \text{and}\\
\mint(\inst,\gold_{\gamma^*},c) &\in O(\max\{cz,1/(n_{min}-c)\}).
    \end{align*}
\end{proposition}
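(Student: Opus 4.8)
The plan is to reprise the proof of \Cref{thm:manip-gold} almost verbatim, replacing \Cref{lem:gold-prob-bounds} by a $\gamma$-parametrized analogue and then checking that the prescribed $\gamma^*$ balances the two error terms. \textbf{Step 1 (a $\gamma$-dependent probability bound).} First I would prove the following generalization of \Cref{lem:gold-prob-bounds}: for any instance $\inst'$ and any $\pivec \in \Pi(\inst')$ with $\max(\pivec) \le A$ and $\min(\pivec) \ge B$, the $\gold_\gamma$-optimal assignment $\pivec^* := \pivec^{\gold_\gamma}(\inst')$ satisfies
\[
\max(\pivec^*) \;\le\; A + \gamma (k/n)^2/B
\qquad\text{and}\qquad
\min(\pivec^*) \;\ge\; \frac{\gamma (k/n)^2}{A + \gamma (k/n)^2/B}.
\]
The argument is one line: by optimality of $\pivec^*$, $\tfrac{\max(\pivec^*)}{k/n} + \gamma \tfrac{k/n}{\min(\pivec^*)} \le \gold_\gamma(\pivec) \le \tfrac{A}{k/n} + \gamma \tfrac{k/n}{B}$, and bounding each of the two nonnegative terms on the left-hand side separately yields the two inequalities (for $\max$, multiply through by $k/n$; for $\min$, rearrange). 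Note $\gamma^* \ge 0$, so it is a legitimate choice of parameter.

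\textbf{Step 2 (instantiate with \Cref{thm:UB-tradeoff}).} Fix any $C$ with $|C| = c$ respecting \ref{eq:restriction1} (possible since $c \le n_{min}(\inst)-k$ and $n_{min}(\inst) > k$) and any $\tilde{\boldsymbol{w}} \in \boldsymbol{\mathcal{W}}_{\inst,C}$, and write $M := \max\{cz, 1/(n_{min}(\inst)-c)\}$. Applying \Cref{thm:UB-tradeoff} to $\manipinst$ with parameter $z$ produces $\pivec \in \Pi(\manipinst)$ with $\max(\pivec) \in O(M)$ and $\min(\pivec) \in \Omega(z)$; so we may invoke Step 1 on $\inst' = \manipinst$ with $A \in O(M)$ and $B \in \Omega(z)$. \textbf{Step 3 (plug in $\gamma^*$).} With $\gamma^* = z M (n/k)^2$ we have $\gamma^*(k/n)^2 = zM$, hence $\gamma^*(k/n)^2/B \in zM/\Omega(z) = O(M)$ and therefore the denominator $A + \gamma^*(k/n)^2/B \in O(M)$. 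Step 1 then gives $\max(\tilde{\pivec}^*) \le A + \gamma^*(k/n)^2/B \in O(M)$ and $\min(\tilde{\pivec}^*) \ge \tfrac{zM}{A + \gamma^*(k/n)^2/B} \in \Omega(z)$, where $\tilde{\pivec}^* := \pivec^{\gold_{\gamma^*}}(\manipinst)$. Since $C$ and $\tilde{\boldsymbol{w}}$ were arbitrary (among admissible choices) and all hidden constants are uniform, taking the worst case over $C,\tilde{\boldsymbol{w}}$ gives $\mint(\inst,\gold_{\gamma^*},c) \le \max(\tilde{\pivec}^*) \in O(M)$ and $\textsf{manip-fairness}(\inst,\gold_{\gamma^*},c) \ge \min(\tilde{\pivec}^*) \in \Omega(z)$, which is the claim (using $\pi_i^{\gold_{\gamma^*}}(\inst) \ge 0$ for the $\mint$ bound).

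I expect the only real obstacle to be asymptotic bookkeeping in Step 3: one must check that the constants compose so that $A + \gamma^*(k/n)^2/B$ is genuinely $O(M)$ (rather than something that could blow up), since it is this $O(M)$ denominator that makes $\min(\tilde{\pivec}^*) = \Omega(z)$ rather than something smaller. Conceptually the proposition hands us the right $\gamma^*$, so the one idea being verified — that re-weighting the two \gold terms by $\gamma^* \propto z\,M\,(n/k)^2$ slides the unavoidable multiplicative gap of size $\Theta(M/z)$ so that its lower end lands at $\Theta(z)$ and its upper end at $\Theta(M)$ simultaneously — falls out of the algebra; everything else is a direct reprise of the \Cref{thm:manip-gold} argument.
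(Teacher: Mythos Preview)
Your proposal is correct and follows essentially the same approach as the paper: bound $\gold_{\gamma^*}(\pivec^*)$ above by $\gold_{\gamma^*}(\pivec)$ for the good witness $\pivec$ from \Cref{thm:UB-tradeoff}, then separate the two nonnegative terms to extract bounds on $\max(\pivec^*)$ and $\min(\pivec^*)$. The paper presents this slightly differently---it observes that with $\gamma^*$ chosen so the two terms of $\gold_{\gamma^*}(\pivec)$ coincide one gets the clean containment $\pivec^* \in [\min(\pivec)/2,\,2\max(\pivec)]^n$---but your parametrized inequality in Step~1 and the paper's factor-$2$ statement are the same computation, and your asymptotic bookkeeping in Step~3 is if anything more explicit about how the constants compose.
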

 
Of course, the catch with this setting of $\gamma = \gamma^*$ is that it depends on $c$, which is unknown in practice. While one may be tempted to pursue more sophisticated, instance-dependent settings of $\gamma$, it will turn out that $\gamma = 1$ is already practically good enough: in \Cref{sec:empirics}, we will find that \textsc{Goldilocks}$_1$ has near-instance-optimal performance across several real-world datasets. It will also significantly empirically outperform all previously-studied algorithms.

\subsection{Comparison to Existing Algorithms}
Before empirically comparing \textsc{Goldilocks}$_1$ to previously-studied algorithms \textsc{Minimax}, \textsc{Leximin}, and \textsc{Nash}, we establish these algorithms' theoretical separation within our model (\Cref{thm:lb-maximin-nash,thm:lb-minimax}).\footnote{One may wonder why we do not also compare to the linear analog of \textit{Goldilocks}$_\gamma$, $\max(\pivec) - \gamma \min(\pivec)$. We prove a lower bound on this objective in  \Cref{app:linear}. At a high level, this objective is limited because it does not penalize low probabilities steeply enough relative to high ones; \textit{Goldilocks}$_\gamma$ has a steeper gradient, allowing it to reach better trade-offs.} We defer the proofs of these results to \Cref{app:lb-maximin-nash,app:minimax}, as they are technically involved but conceptually only mildly generalize results proven in  \cite{flanigan2024manipulation}.

First, it is easy to see that \textsc{Minimax} makes a highly undesirable trade-off, giving no guarantees on fairness \textit{even when $n_{min}$ is large and there is no manipulation:}
\begin{proposition}\label{thm:lb-minimax}
There exists $\inst$ with $n_{min}(\inst) \in \Omega(n)$ satisfying Assumption \ref{ass:inclusion} such that \[\text{\upshape{\textsf{fairness}}}(\inst,\textsc{Minimax}) = 0.\]
\end{proposition}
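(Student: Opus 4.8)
The plan is to construct a single instance $\inst$ in which \textsc{Minimax} is forced to give some agent selection probability $0$, while keeping $n_{min}(\inst) \in \Omega(n)$ and satisfying Assumption \ref{ass:inclusion}. The natural starting point is Example \ref{ex:trade-off}: there, the groups with feature vectors $10$ and $01$ are ``linked-fate'' (every valid panel contains an equal number of each), and since $n_{01}$ is much smaller than $n_{10}$, spreading any positive probability mass onto the $01$ agent forces $\vecprobs_{01} = \vecprobs_{10}\,(n/2-1)$, so $\max(\pivec) \in \Omega(1)$ whenever $\min(\pivec) > 0$. But Example \ref{ex:trade-off} has $n_{01}=1$, hence $n_{min}(\inst)=1$, which violates $n_{min}(\inst)\in\Omega(n)$. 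So I would modify the construction to make the small ``linked-fate'' side of size $\Theta(n)$ rather than size $1$, while still forcing its members' probabilities to be large — or, alternatively, keep the flavor of Example \ref{ex:trade-off} but add a third bulk group of size $\Theta(n)$ that any \textsc{Minimax}-optimal distribution can simply zero out.

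Concretely, I would take two binary features $f_1, f_2$ with quotas $\ell_{f_1,0}=u_{f_1,0}=\ell_{f_2,0}=u_{f_2,0}=k/2$ exactly as in Example \ref{ex:trade-off}, and set the pool so that $n_{00}, n_{11}, n_{10}$ are each $\Theta(n)$ (say roughly $n/3$ each) and $n_{01}=\Theta(n)$ as well — so that $n_{min}(\inst)\in\Omega(n)$ — but crucially choose the counts so that the \emph{linked-fate constraint still forces a multiplicative gap}. The key structural fact from Example \ref{ex:trade-off} is that every valid panel contains equal numbers of $10$- and $01$-agents. If I instead make $n_{10}$ and $n_{01}$ both $\Theta(n)$ but the \emph{number of seats} allotted to the $01$-side per panel strictly positive and small relative to $n_{01}$ only because the panel is small, I don't get a gap. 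So the cleaner route is the second one: keep $n_{01}$ small enough to create the gap but pad the instance with bulk. Actually the simplest fix: reuse Example \ref{ex:trade-off}'s structure verbatim but scale — replace the single $01$-agent by a block that the quotas still under-serve, by adding a \emph{third feature} $f_3$ whose quota pins down exactly which agents must appear. I would set it up so that (i) every valid panel is forced to include a specific small number $s$ of agents drawn from a designated group $G$ with $|G|\in\Omega(n)$, and simultaneously (ii) there is a second group $G'$ with $|G'|\in\Omega(n)$ that every valid panel avoids entirely (e.g., because including even one of them would violate some upper quota given the seats already committed elsewhere). Then $n_{min}(\inst) = \min(|G|,|G'|,\dots) \in \Omega(n)$, Assumption \ref{ass:inclusion} can be checked because a valid panel exists hitting every other agent, yet every valid panel excludes all of $G'$, so $\textsf{fairness}(\inst,\textsc{Minimax}) = \max(\pivec) $ restricted... no — more directly, \emph{every} $\pivec\in\Pi(\inst)$ already has $\min(\pivec)=0$ on $G'$, so in particular $\pivec^{\textsc{Minimax}}(\inst)$ does, giving $\textsf{fairness}(\inst,\textsc{Minimax})=0$ trivially.

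Wait — if \emph{every} $\pivec\in\Pi(\inst)$ has $\min(\pivec)=0$ then $\inst$ is structurally exclusive and violates Assumption \ref{ass:inclusion}. So the point must be sharper: a valid panel covering $G'$ must exist, but \textsc{Minimax} specifically chooses a distribution that zeroes out $G'$. So I want: $\textsf{EX}(\inst)=\emptyset$, yet the unique way to cover $G'$ drives some group's probability high (via a linked-fate argument à la Example \ref{ex:trade-off}), so the \textsc{Minimax}-optimal (maximum-minimizing) distribution prefers to never cover $G'$. That is exactly Example \ref{ex:trade-off}'s logic, with $G' = \{01\text{-agents}\}$: covering the $01$-side forces $\max(\pivec)\in\Omega(1)$, whereas not covering it lets $\max(\pivec)\in O(k/n)\cdot\frac{1}{\Theta(1)} = O(1/n)$ — hugely better for \textsc{Minimax} — so $\textsc{Minimax}$ picks the latter and $\min(\pivec^{\textsc{Minimax}})=0$. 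The only remaining job is to enlarge $n_{01}$ to $\Theta(n)$ while preserving the linked-fate gap; I achieve this by also scaling $n_{10}$ so that $n_{10}/n_{01}$ stays $\Theta(n)$ — impossible if both are $\Theta(n)$.

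Hence the honest construction must break the symmetry differently: make the forced-small side consist of $\Theta(n)$ agents who, \emph{conditioned on any valid panel that covers any one of them}, are mutually exclusive so only $O(1)$ of them can appear per panel, yet there are $\Theta(n)$ of them so a covering distribution needs $\Omega(n)$ panels each used with probability $O(1/n)$... that's fine for the minimum. So I'd instead use upper quotas: let group $G'$ of size $\Theta(n)$ carry a feature-value whose \emph{upper quota is small} (say $1$), forcing each panel to contain at most one of them, and \emph{linked-fate} it (via the even-split trick) to a group $G$ of size $\Theta(n)$ whose corresponding value also has upper quota making the seats scarce, so that per valid panel the $G'$-side gets $\le 1$ seat but the $G$-side also gets $\le 1$ seat; now covering all of $G'$ needs many panels, which is fine, but to also respect $G$'s constraints... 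The main obstacle, which I'd grind out carefully in the appendix, is exactly this: engineering quotas so that $n_{min}(\inst)\in\Omega(n)$, $\textsf{EX}(\inst)=\emptyset$, yet the \textsc{Minimax} objective is strictly minimized by a distribution that assigns probability $0$ to an $\Omega(n)$-size group — i.e., decoupling ``a covering panel exists'' from ``the min-max-optimal mixture covers.'' I expect the resolution to mirror \citet{flanigan2024manipulation}'s lower bound for $\ell_\infty$/\textsc{Minimax}: a small gadget group whose coverage spikes the maximum, attached to an $\Omega(n)$ bulk so that $n_{min}$ is large, with all other quotas slack enough that $\textsf{EX}(\inst)=\emptyset$ is easy to verify by exhibiting one valid panel per agent.
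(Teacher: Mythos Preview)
Your high-level diagnosis is exactly right: you need an instance with $\textsc{EX}(\inst)=\emptyset$ in which covering some group $G'$ of size $\Omega(n)$ can only \emph{increase} the maximum probability, so that \textsc{Minimax} optimally sets $\vecprobs_{G'}=0$. But your proposal never lands on a working construction. You repeatedly try to engineer the necessary asymmetry through \emph{group-size disparity} (as in Example~\ref{ex:trade-off}), and correctly observe that this forces $n_{min}$ to be small; your attempts to patch this with a third feature, with upper quotas of $1$, or with ``bulk padding'' all stall for the same reason.

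The idea you are missing is that the asymmetry can come from the \emph{quotas} rather than from group sizes. The paper's construction keeps all four vector groups at size $\Theta(n)$ (specifically $n_{00}=n_{11}=n/3$, $n_{01}=n_{10}=n/6$, so $n_{min}=n/6$) but makes the quotas \emph{unbalanced}: $\ell_{f_1,0}=u_{f_1,0}=2k/3$ while $\ell_{f_2,0}=u_{f_2,0}=k/3$. A short linear-algebra computation then yields, for every $\pivec\in\Pi(\inst)$,
\[
\vecprobs_{01} \;=\; \tfrac{2k}{n} + \vecprobs_{10}, \qquad \vecprobs_{00}=\vecprobs_{11} \;=\; \tfrac{k}{n} - \tfrac{\vecprobs_{10}}{2}.
\]
Thus $\vecprobs_{01}$ is \emph{always} the maximum and is strictly increasing in $\vecprobs_{10}$, so \textsc{Minimax} sets $\vecprobs_{10}=0$. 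Assumption~\ref{ass:inclusion} is verified by a single panel composition containing all four vectors. This is precisely the ``covering $G'$ only raises the max'' mechanism you were seeking, but achieved without any small group. (The paper obtains this as the $\gamma=0$ case of its analysis of $\textsc{Linear}_\gamma$ in \Cref{app:linear}.)
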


In contrast, \textsc{Leximin} and \textsc{Nash} are not as obviously poor, but strike a different trade-off than \textsc{Goldilocks}$_1$: both algorithms prioritize essentially only low probabilities, roughly akin to selecting $z \in \Theta(1/n)$ in \Cref{thm:UB-tradeoff} instead of $z \in \Theta(1/(\sqrt{c}n))$, as does \textsc{Goldilocks}$_1$:
\begin{proposition}\label{thm:lb-maximin-nash}
Fix any even $k \in \{6,\dots,n_{min}-5\}$, $n_{min} \in \{k+5, \dots, \lfloor n/k\rfloor\}$ and $c \in \{6,\dots,n_{min}-k\}$. There exists $\inst$ with $k,n_{min}(\inst) = n_{min}$ satisfying Assumption \ref{ass:inclusion} such that
    \[ \text{\upshape{\textsf{manip}}}_{int}(\inst,\textsc{Leximin},c), \ \  \text{\upshape{\textsf{manip}}}_{int}(\inst,\textsc{Nash},c) \ \in \ \Omega(\max\{c/n,1/(n_{min}-c)\}).\]
\end{proposition}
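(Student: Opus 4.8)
The plan is to produce, for each admissible tuple $(n,k,n_{min},c)$, a single instance $\inst$ on which \emph{both} \leximin and \nash can be driven to assign a coalition member probability $\Omega(\max\{c/n,1/(n_{min}-c)\})$, and then read off the bound on $\mint$. The instance is built, in the spirit of Examples~\ref{ex:small-group} and~\ref{ex:trade-off} and of the construction behind \Cref{thm:LB-tradeoff}, by gluing together two gadgets over a handful of binary features: a \emph{scarcity gadget} that is dormant in $\inst$ but that a coalition of $c$ can activate so that one of its members ends up in a vector group of size at most $n_{min}-c$ that every valid panel must contain (this produces the $1/(n_{min}-c)$ term); and a \emph{linked-fate gadget}, built from a pair of features with equal quotas forcing two vector groups to receive equal \emph{expected} seat counts, arranged so that activating it pushes a coalition member's probability to $\Omega(c/n)$ as soon as the instance's minimum selection probability is kept at $\Omega(1/n)$ (this produces the $c/n$ term). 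The free parameters are chosen so that $n_{min}(\inst)=n_{min}$, Assumption~\ref{ass:inclusion} holds, \ref{eq:restriction1} permits $|C|=c$, and the coalition members' \emph{truthful} vectors lie in a large vector group, so that truthfully each coalition member has probability $O(1/n)$ regardless of the algorithm, by anonymity together with the fact that that group's total selection probability is at most $k$.

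With the instance in hand the argument has three moving parts. First, the manipulation: specify the reported profile $\boldsymbol{\tilde{w}}\in\boldsymbol{\mathcal{W}}_{\inst,C}$ that activates both gadgets and places the designated coalition member into the forced-high-probability role, and note $\manipinst$ still satisfies Assumption~\ref{ass:inclusion} by \ref{eq:restriction1} (as in \Cref{sec:exclusion}), so \Cref{lem:feasibility-1/n} applies to $\manipinst$ and in particular $\manipinst$ admits an assignment with minimum $\Omega(1/n)$. Second, and this is the crux, showing the \emph{algorithm takes the bait}. For \leximin this is short: \Cref{lem:feasibility-1/n} forces \leximin's assignment on $\manipinst$ to have minimum $\Omega(1/n)$, and then the feasibility analysis of the gadgets --- essentially the lower-bound reasoning of \Cref{thm:LB-tradeoff} specialized to this $\manipinst$ --- forces the coalition member on the small side of the linked-fate gadget up to $\Omega(c/n)$, while the scarcity gadget independently pins that same member at $\geq 1/(n_{min}-c)$; take the larger. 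For \nash the argument is the same once one replaces ``\leximin maximizes the minimum'' by a quantitative claim that the Nash-welfare optimum on this instance keeps every coordinate at $\Omega(1/n)$ --- this is where we (mildly) generalize \citet{flanigan2024manipulation}'s Nash argument, e.g.\ via a KKT / marginal-value comparison against the reference assignment of \Cref{lem:feasibility-1/n}, using that the gadgets make the marginal value of raising a near-$1/n$ coordinate large. Third, conclude: for the designated coalition member $i$ and $\algo\in\{\leximin,\nash\}$, $\pi_i^{\algo}(\manipinst)-\pi_i^{\algo}(\inst)\in\Omega(\max\{c/n,1/(n_{min}-c)\})-O(1/n)$, which is $\Omega(\max\{c/n,1/(n_{min}-c)\})$ because $c\geq 6$ and $n_{min}\leq\lfloor n/k\rfloor$ make both terms exceed a fixed multiple of $1/n$; hence $\mint(\inst,\algo,c)$ is at least this.

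The main obstacle is the second part, in two respects. The gadget geometry must be calibrated against $c$ so that the induced maximum/minimum trade-off in $\manipinst$ is hit \emph{at} $z=\Theta(1/n)$ --- which is exactly the trade-off point \leximin and \nash are driven to, as opposed to the $z=\Theta(1/(\sqrt{c}\,n))$ point that \gold can reach --- and simultaneously so that the high-probability side is where a coalition member sits; obtaining a single instance that does both for every admissible $c$, while also meeting $n_{min}(\inst)=n_{min}$, quota integrality, $k$ even, and the stated parameter windows, is the bookkeeping-heavy part. The other delicate point is purely about \nash: unlike \leximin it does not literally maximize the minimum, so the claim that its optimum keeps all coordinates at $\Omega(1/n)$ in this instance needs its own short but careful argument. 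Everything downstream of a correctly specified instance is mechanical, which is why the result is a technically involved but conceptually mild extension of prior work.
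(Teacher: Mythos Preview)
Your construction plan matches the paper's: the same truthful instance as in \Cref{thm:lb} (three binary features, $\mathcal{W}_\inst=\{000,110,111\}$, a designated small group of size $n_{min}$ and balanced quotas on the first two features), and the same coalition which, after misreporting, creates vectors $010$ (size $1$) and $100$ (size $c-4$) that are linked-fate, plus a depleted $111$ group of size $n_{min}-c+3$. The truthful-probability upper bound $O(1/n)$ for coalition members follows exactly as you say.

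Where your approach diverges from the paper is in how to pin down what \leximin and \nash actually do on $\manipinst$. The paper exploits a structural fact you do not mention: in $\manipinst$ there are only \emph{two} valid panel compositions, so the space of anonymous assignments is one-dimensional, parameterized by a single scalar $d_2\in[0,1]$ (the mass on Type-2 panels). All vector-indexed probabilities are then explicit affine functions of $d_2$. For \leximin, the paper simply equates the two candidate minima to solve for the optimal $d_2$ and reads off $\vecprobs_{010}=d_2\in\Theta(c/n)$. For \nash, it writes the log-objective as a concave function of $d_2$, differentiates, and solves, again getting $d_2\in\Theta(c/n)$. Your indirect route for \leximin --- feed \Cref{lem:feasibility-1/n} into the linked-fate constraint --- is valid and arguably cleaner. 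But for \nash your proposed KKT/marginal-value argument is strictly harder than necessary: once you know the feasible region is a segment, the KKT conditions \emph{are} just ``set the derivative in $d_2$ to zero,'' which is precisely the paper's computation. There is no need for an abstract claim that \nash keeps every coordinate at $\Omega(1/n)$; you get the exact optimizer for free.
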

To motivate our comparison of this bound with our upper bound on $\mint(\inst,\gold_1,c)$ in \Cref{thm:manip-gold}, we note that one of the simplest ways to improve manipulation robustness in practice is to increase the pool size $n$. It is therefore desirable for an algorithm's manipulation robustness to decline quickly in $n$. This is precisely where \textsc{Leximin} / \textsc{Nash} and \textsc{Goldilocks}$_1$ differ: unless $n_{min}$ is prohibitively small (in which case both algorithms are subject to Problem 1), the manipulation robustness of \textsc{Leximin} and \textsc{Nash} declines at a rate of $c/n$ while \textsc{Goldilocks}$_1$ achieves $\sqrt{c}/n$. This may seem like a small difference, but suppose manipulators make up a constant fraction of the pool (i.e., $c \in \Theta(n)$). Then, $c/n \in \Theta(1)$ and \textsc{Leximin} and \textsc{Nash} have \textit{arbitrarily poor} manipulation robustness, even in the best case where $n$ and $n_{min}$ are large. In contrast, for any $c$, \textsc{Goldilocks}$_1$'s manipulation robustness must decline in $n$ at a rate of $O(\sqrt{c}/n) \in O(1/\sqrt{n})$.

\section{\textsc{Goldilocks}$_1$: Manipulation Robustness, Fairness, and \textit{Transparency}}\label{sec:transparency}
Finally, we extend our bounds from \Cref{thm:manip-gold} to the \textit{transparent extension} of \textsc{Goldilocks}$_1$, defined as $\mathcal{R}_m \circ$\textsc{Goldilocks}$_1$ for some $\mathcal{R}_m$. To do so, we will use the following key lemma from previous work, which shows that there exists an $\mathcal{R}_m$ that can round \textit{any} panel distribution to an $m$-uniform lottery while changing all agents' selection probabilities to only a bounded degree:

\begin{lemma}[Thms 3.2 and 3.3, \cite{flanigan2021transparent}] \label{thm:transparency-oldpaper}
For all $\inst$ and $m \in \mathbb{Z}^+$, there exists an $\mathcal{R}_m$ such that for all $\pdist$ with corresponding $\pivec$, it holds for $\mathcal{R}_m(\pdist) = \overline{\pdist}$ with corresponding $\overline{\pivec}$ that
        \begin{center}
            $\|\pivec-\overline{\pivec}\|_\infty \in O\left(\min \left\{k,\sqrt{|\mathcal{W}_\inst|\log(|\mathcal{W}_\inst|)}\right\}/m\right).$
        \end{center}
\end{lemma}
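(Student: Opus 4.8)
The statement is precisely Theorems 3.2 and 3.3 of \citet{flanigan2021transparent}, so the plan is to recall the two complementary discrepancy-theoretic arguments that combine to give the $\min\{\cdot,\cdot\}$. In both, the object to round is $\pdist \in \Delta(\mathcal{K})$, which after scaling by $m$ becomes a point of $[0,m]^{|\mathcal{K}|}$ with coordinate sum $m$; the target is an integer point of the same set (equivalently, an element of $\overline{\Delta}_m(\mathcal{K})$ after dividing back by $m$), and the quantities to keep nearly fixed are the agent loads $\pi_i = \sum_{K \ni i}\pdist_K$. Throughout, one carries along an extra ``all-ones'' constraint so that $\sum_K \pdist_K$ is tracked together with the agent loads.

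\textbf{The $O(k/m)$ bound.} Consider the set system with ground set $\mathcal{K}$ (or just the support of $\pdist$) and, for each agent $i \in [n]$, the set $S_i := \{K \in \mathcal{K} : i \in K\}$, plus one set equal to all of $\mathcal{K}$. The structural fact that does the work is that a fixed panel $K$ --- a ground-set element --- lies in exactly $|K| = k$ of the sets $S_i$, hence in $k+1$ sets total, regardless of $n$ or $|\mathcal{K}|$. By the Beck--Fiala theorem this system has hereditary discrepancy $O(k)$, and hence (by the Lov\'asz--Spencer--Vesztergombi relation between linear and hereditary discrepancy) linear discrepancy $O(k)$ as well; applied to $m\pdist$ this yields an integer rounding changing every agent's load and the coordinate sum by $O(k)$, i.e.\ changing every $\pi_i$ by $O(k/m)$ after scaling back. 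A final adjustment of $O(k)$ coordinates by $\pm 1$ restores $\sum_K \overline{\pdist}_K = 1$ exactly while perturbing loads by a further $O(k/m)$, giving $\|\pivec - \overline{\pivec}\|_\infty \in O(k/m)$.

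\textbf{The $O(\sqrt{|\mathcal{W}_{\inst}|\log|\mathcal{W}_{\inst}|}/m)$ bound.} When $k$ is large this is the better estimate, and it exploits anonymity: since we may take both the distribution we round and the one we output to be invariant under swapping agents with a common feature vector, it suffices to control the $|\mathcal{W}_{\inst}|$ per-feature-vector load totals rather than all $n$ individual loads. With only $D := |\mathcal{W}_{\inst}|$ linear constraints (plus the all-ones one) in play, a random $\pm 1$ coloring of the fractional coordinates has each constraint's discrepancy sharply concentrated, so Beck's entropy / partial-coloring lemma --- the engine behind Spencer's ``six standard deviations'' bound --- produces a partial coloring of a constant fraction of those coordinates with per-constraint discrepancy $O(\sqrt{D\log D})$. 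Iterating the partial-coloring step with geometrically shrinking granularity down to $1/m$, and patching the all-ones constraint at the end as above, changes every per-feature-vector load total, and therefore every agent's marginal, by $O(\sqrt{D\log D}/m)$. Taking the smaller of the two bounds yields the claim.

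\textbf{Main obstacle.} The delicate point is not either discrepancy estimate on its own but keeping the running iterate inside $\Delta(\mathcal{K})$ --- nonnegative with coordinate sum exactly $1$ --- throughout the whole iteration while landing exactly on the $1/m$-grid, since the discrepancy machinery preserves the relevant quantities only approximately; this forces one to interleave the colorings with feasibility corrections chosen small enough not to spoil the $O(\cdot/m)$ accounting. For the second bound there is the additional subtlety of justifying the reduction itself: that rounding may be carried out entirely at the level of the $|\mathcal{W}_{\inst}|$ feature-vector totals and the result lifted back to an honest $m$-uniform lottery over actual panels realizing the same anonymous marginals. Both points are handled in \citet{flanigan2021transparent}.
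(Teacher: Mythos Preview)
The paper does not prove this lemma at all; it is imported verbatim from \citet{flanigan2021transparent} and used as a black box. Your sketch accurately recounts the two arguments from that source---the Beck--Fiala degree-$k$ bound and the Spencer-type partial-coloring bound over the $|\mathcal{W}_\inst|$ feature-vector totals---so your proposal is correct and matches the approach the paper defers to.
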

When applying this bound to our setting, we run into a problem: by \Cref{thm:lb}, the minimum probability may be dropping at a rate of $1/n\sqrt{c}$, which can be as low as order $1/n\sqrt{n}$. In contrast, the above bound does not shrink in $n$, so as $n$ grows (which is beneficial for manipulation robustness), the upper bound above will exceed the minimum probability in the pre-rounded instance, resulting in a fairness guarantee of 0. We solidify this concern by proving a lower bound showing that rounding the $\gold$-optimal solution can indeed decrease the minimum probability by up to $\sqrt{k}/m$ (\Cref{prop:lbrounding}, \Cref{app:lbrounding}).
  
    To avoid this issue, we need $m$ to grow at a rate of $\Omega(n \sqrt{n})$. Fortunately, in practice it is much easier to scale up $m$ than $n$: scaling up $n$ by a factor of 10 requires sending out 10 times as many letters, while multiplying $m$ by 10 just requires adding another lottery bin (so panels are numbered 0000 - 9999 instead of 000 - 999). We thus assume that $m \geq n \sqrt{n}$. 
    
    We must deal with one more wrinkle: after manipulation, the number of unique feature vectors in the pool may grow from $|\mathcal{W}_\inst|$ to at most $|\mathcal{W}_\inst + c|$. Combining this observation, \Cref{thm:transparency-oldpaper}, and \Cref{thm:manip-gold}, we conclude that there exists an $\mathcal{R}_m$ such that a transparent algorithm $\mathcal{R}_m \circ \textsc{Goldilocks}_1$ achieves the following simultaneous fairness 
and manipulation robustness guarantees. Here, we use the shorthand $\triangle(\inst) := \min\{k,\sqrt{|\mathcal{W}_\inst+c| \, \log(|\mathcal{W}_\inst +c|)}\}/m$ (as in \Cref{thm:transparency-oldpaper}).
    \begin{theorem}[\textbf{Upper Bound}] Fix any $m \geq n\sqrt{n}$ and $\inst$ satisfying \Cref{ass:inclusion} with $n_{min}(\inst) > k$. Then, there exists an $\mathcal{R}_m$ such that for all $c \leq n_{min}(\inst) - k$,
       \begin{align*}
\text{\upshape{\textsf{manip}}}_{int}&(\mathcal{I},\mathcal{R}_m \circ \gold_1,c) \in O\left(\text{\upshape{\textsf{manip}}}_{int}(\mathcal{I},\gold_1,c) + \triangle(\inst)\right),\\[-0.25em]
           \text{\upshape{\textsf{manip}}}_{ext}&(\mathcal{I},\mathcal{R}_m \circ \gold_1,c) \in O\left(\text{\upshape{\textsf{manip}}}_{ext}(\mathcal{I}, \gold_1,c) + \triangle(\inst)\right),\\
           \text{\upshape{\textsf{manip}}}_{comp}&(\mathcal{I},\mathcal{R}_m \circ \gold_1,c) \in O\left(\text{\upshape{\textsf{manip}}}_{comp}(\mathcal{I}, \gold_1,c)  + c\, \triangle(\inst)\right),\\ 
           \text{\upshape{\textsf{manip-fairness}}}&(\mathcal{I},\mathcal{R}_m \circ \gold_1,c) \in \Omega\left(\text{\upshape{\textsf{manip-fairness}}}(\mathcal{I},\gold_1,c) - \triangle(\inst)\right).
       \end{align*}
    \end{theorem}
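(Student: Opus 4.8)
The plan is to derive each of the four inequalities by applying the triangle inequality to the decomposition ``rounded $\gold_1$ output $=$ $\gold_1$ output $+$ rounding error'', using \Cref{thm:transparency-oldpaper} to control the error. The one thing to check first is that the rounding error is $O(\triangle(\inst))$ not only in $\inst$ but in \emph{every} manipulated instance $\manipinst$ reachable by a coalition of size $c$: such a coalition introduces at most $c$ feature vectors absent from the truthful pool, so $|\mathcal{W}_{\manipinst}| \le |\mathcal{W}_\inst| + c$, and therefore \Cref{thm:transparency-oldpaper} applied in $\manipinst$ gives rounding error $O(\min\{k,\sqrt{|\mathcal{W}_{\manipinst}|\log|\mathcal{W}_{\manipinst}|}\}/m) \in O(\triangle(\inst))$ (and applied in $\inst$ the error is only smaller). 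Fix $\mathcal{R}_m$ to be the procedure of \Cref{thm:transparency-oldpaper} (invoked on whichever instance is the input). Then for $\inst' \in \{\inst,\manipinst\}$ and any distribution $\pdist$ over the valid panels of $\inst'$ with selection probabilities $\pivec$, the rounding $\mathcal{R}_m(\pdist)$ has selection probabilities $\overline{\pivec}$ satisfying $\|\overline{\pivec}-\pivec\|_\infty \in O(\triangle(\inst))$.

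\textbf{The three pointwise guarantees.} Fix any coalition $C$ with $|C|=c$ and any $\boldsymbol{\tilde{w}} \in \boldsymbol{\mathcal{W}}_{\inst,C}$; let $\pi_i^* := \pi_i^{\gold_1}(\inst)$ and $\tilde\pi_i^* := \pi_i^{\gold_1}(\manipinst)$, and let $\overline{\pi}_i^*, \overline{\tilde\pi}_i^*$ denote the corresponding selection probabilities after rounding, so that $|\overline{\pi}_i^* - \pi_i^*|, |\overline{\tilde\pi}_i^* - \tilde\pi_i^*| \in O(\triangle(\inst))$ for every $i$. For $\mint$: for $i \in C$, $\overline{\tilde\pi}_i^* - \overline{\pi}_i^* \le (\tilde\pi_i^* - \pi_i^*) + O(\triangle(\inst)) \le \mint(\inst,\gold_1,c) + O(\triangle(\inst))$; maximizing over $i \in C$, over $C$, and over $\boldsymbol{\tilde{w}}$ gives the first bound. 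The $\mext$ bound is the same computation with $\inst$ and $\manipinst$ exchanged. For $\textsf{manip-fairness}$: $\min_i \overline{\tilde\pi}_i^* \ge \min_i \tilde\pi_i^* - O(\triangle(\inst)) \ge \textsf{manip-fairness}(\inst,\gold_1,c) - O(\triangle(\inst))$, and minimizing the left-hand side over $C$ and $\boldsymbol{\tilde{w}}$ finishes. (The pool redefinition that removes self-excluding coalition members is applied identically by $\gold_1$ and $\mathcal{R}_m \circ \gold_1$, so it cancels in every comparison.)

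\textbf{The summed ($\mcomp$) guarantee, and the main obstacle.} Here the quantity being bounded, for a fixed $(f,v) \in FV$, is a sum over $\{i : f(i)=v\}$, which may contain $\Theta(n)$ agents; bounding each summand's rounding error by $O(\triangle(\inst))$ separately would only yield $O(n\,\triangle(\inst))$, which is far too weak. The fix is to exploit that $\mathcal{R}_m$ outputs a distribution over \emph{valid} panels: both $\pdist^{\gold_1}(\manipinst)$ and its rounding assign the \emph{reported}-$v$ group an expected seat count in $[\ell_{f,v},u_{f,v}]$, so $\bigl|\sum_{i\,:\,\tilde f(i)=v}(\overline{\tilde\pi}_i^* - \tilde\pi_i^*)\bigr| \le u_{f,v}-\ell_{f,v}$, and likewise in $\inst$ (where true and reported values coincide). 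Since non-coalition members report truthfully, the sets $\{i:f(i)=v\}$ and $\{i:\tilde f(i)=v\}$ differ in at most $c$ elements, so passing from a sum over ``true-$v$'' agents to a sum over ``reported-$v$'' agents changes the value by at most $O(c\,\triangle(\inst))$. Combining these two facts with the $\gold_1$ bound $\sum_{i\,:\,f(i)=v}(\tilde\pi_i^* - \pi_i^*) \le \mcomp(\inst,\gold_1,c)$ gives $\mcomp(\inst,\mathcal{R}_m\circ\gold_1,c) \le \mcomp(\inst,\gold_1,c) + O(c\,\triangle(\inst)) + O(\max_{(f,v)\in FV}(u_{f,v}-\ell_{f,v}))$; since a $\max_{(f,v)}(u_{f,v}-\ell_{f,v})$ term of the same order already appears in the bound on $\mcomp(\inst,\gold_1,c)$ in \Cref{thm:manip-gold}, it is absorbed, leaving $O(\mcomp(\inst,\gold_1,c)+c\,\triangle(\inst))$. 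I expect this $\mcomp$ step --- keeping the per-feature-value rounding discrepancy from scaling with $n$ --- to be the only genuinely nontrivial part; everything else is the triangle-inequality bookkeeping above. Note that the hypothesis $m \ge n\sqrt{n}$ is not used in deriving any of these inequalities: as explained just before the theorem, it is only needed so that $\triangle(\inst)$ is small relative to $\textsf{manip-fairness}(\inst,\gold_1,c)$, which is what makes the last bound non-vacuous.
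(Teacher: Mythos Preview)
Your approach matches the paper's exactly: the paper states only that the theorem follows by combining the rounding-error bound (\Cref{thm:transparency-oldpaper}), the observation that $|\mathcal{W}_{\manipinst}| \le |\mathcal{W}_\inst| + c$, and \Cref{thm:manip-gold}, which is precisely your triangle-inequality decomposition. Your handling of the $\mcomp$ case is in fact more detailed than the paper's one-line sketch: you correctly flag that a naive per-agent error bound would yield $O(n\,\triangle)$ rather than $O(c\,\triangle)$, and your fix---using that both the unrounded and rounded outputs are distributions over \emph{valid} panels, so the total probability on any \emph{reported} feature-value group lies in $[\ell_{f,v},u_{f,v}]$, and then correcting from reported to true values over at most $c$ agents---is the same device the paper uses when proving the $\mcomp$ bound inside \Cref{thm:manip-gold}.

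One minor caveat on your absorption step: the appearance of $\max_{(f,v)}(u_{f,v}-\ell_{f,v})$ in \Cref{thm:manip-gold} is as an \emph{upper} bound on $\mcomp(\inst,\gold_1,c)$, not a lower bound, so strictly speaking it does not follow that this term is $O(\mcomp(\inst,\gold_1,c))$. The intended reading is presumably that ``$\mcomp(\inst,\gold_1,c)$'' on the right-hand side stands in for the explicit bound of \Cref{thm:manip-gold} (which already contains a $\max_{(f,v)}(u_{f,v}-\ell_{f,v})$ summand); under that reading your argument is complete.
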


 \section{Empirical Evaluation} \label{sec:empirics}

    \textbf{Instances.} We analyze 9 instances of real-world panel selection data identified only by number (their sources are anonymized). The relevant properties of these instances are in \Cref{app:instances}.\\[-0.9em]

   \noindent \textbf{Algorithms.} We evaluate four algorithms implementing \citet{flanigan2021fair}'s maximally fair algorithmic framework: \textsc{Leximin}, \textsc{Nash}, \textsc{Minimax}, and \textsc{Goldilocks}$_1$.\footnote{One might wonder if an \textit{instance-specific} setting of $\gamma$ might perform better than $\gamma=1$. We evaluate two natural such definitions of $\gamma$, but find that they make little difference to the performance of \textsc{Goldilocks}$_\gamma$ (see \Cref{sec:alternate-gammas}, \Cref{tab:alternate-gammas}).} For our most computationally-intensive experiments, we replace \textsc{Leximin} with \textsc{Maximin}, as it runs much faster on large instances but behaves similarly with respect to the properties we aim to test.\footnote{Because \textsc{Leximin}/ \textsc{Maximin} and \textsc{Minimax} are optimizing such low dimensional features of $\pivec$, one may wonder if tie-breaking may improve these algorithms (e.g., fix the minimum probability achieved by the \textsc{Maximin}-optimal solution, and then minimize the maximum probability). As shown in \Cref{tab:alternate-gammas}, we find that this helps somewhat in some instances, but \textsc{Goldilocks}$_1$ still substantially dominates, suggesting that \textsc{Goldilocks}$_1$ it is finding a non-trivially good trade-off. Thus, for consistency with past work and real-world implementations, our main results do not implement tie-breaking.} We also analyze the selection algorithm \textsc{Legacy}, which is a greedy heuristic that was used widely in practice, and serves here as a benchmark representing greedy algorithms that remain in use (see \Cref{app:legacy} for details). 
   In some analyses, we consider only a key subset of these algorithms: $\minimax, \leximin$, and $\gold_1$. \Cref{app:algo-implementation} describes our implementation of \citet{flanigan2021fair}'s framework.

  \subsection{Maximum and Minimum Probabilities}
  We first compare algorithms' ability to simultaneously control the maximum and minimum probability. In \Cref{tab:maxes-mins}, for each algorithm \textsc{A} and instance $\inst$, we report how closely the maximum and minimum probability given by \textsc{A} approximate the optimal minimum probability (given by \textsc{Maximin}) and optimal maximum probability (given by \textsc{Minimax}). Formally, table entries are $\left(\min(\pivec^{\textsc{A}}(\inst))\, / \, \min(\pivec^{\textsc{Maximin}}(\inst)), \ \max(\pivec^{\textsc{A}}(\inst)) \, / \, \max(\pivec^{\textsc{Minimax}}(\inst))\right)$. 
 \begin{table}[h!] 
        \centering
        \vspace{-1em}
        \begin{tabular}{c||cc|ccc|c}
         \multicolumn{7}{c}{\textbf{Equality Notions}}\\
         $\inst$ &     \textsc{Legacy} &     \textsc{Minimax} &    \textsc{Maximin} &     \textsc{Leximin} &     \textsc{Nash} &  \textsc{Goldilocks}$_1$ \\
         \hline
         1 &  (0.0, 1.14) &  (0.0, 1.0) &  (1.0, 2.0) & (1.0, 2.0) &  (0.62, 2.0) &         (0.72, 1.1) \\
        2 & (0.03, 1.01) &  (0.0, 1.0) &  (1.0, 1.33) & (1.0, 1.33) & (0.67, 1.33) &           (0.9, 1.0) \\
        3 &   (0.0, 1.0) &  (0.0, 1.0) &  (1.0, 1.0) &  (1.0, 1.0) &  (0.61, 1.0) &          (1.0, 1.0) \\
        4 &  (0.01, 1.0) &  (0.0, 1.0) &  (1.0, 1.0) & (1.0, 1.0) &  (0.61, 1.0) &          (1.0, 1.0) \\
        5 &  (0.0, 1.02) &  (0.0, 1.0) &  (1.0, 1.17) & (1.0, 1.17) & (0.57, 1.17) &          (0.95, 1.0) \\
        6 & (0.66, 1.11) & (0.25, 1.0) &  (1.0, 1.5) &  (1.0, 1.11) &  (0.9, 1.08) &          (1.0, 1.09) \\
        7 &  (0.0, 2.18) &  (0.0, 1.0) &  (1.0, 3.5) &  (1.0, 3.5) &  (0.46, 3.5) &          (0.7, 1.38) \\
        8 &   (0.0, 1.0) &  (0.0, 1.0) &  (1.0, 1.0) &  (0.98, 1.0) &  (0.78, 1.0) &          (1.0, 1.0) \\
        9 &   (0.0, 1.0) &  (0.0, 1.0) &  (1.0, 1.0) &  (1.0, 1.0) &  (0.45, 1.0) &          (1.0, 1.0) \\ 
        \end{tabular}
   \caption{Approximations to the optimal minimum, maximum probabilities across algorithms, instances. In instance 8, \textsc{Leximin} is slightly worse than \textsc{Maximin} due to numeric convergence errors in the solver.}
        \label{tab:maxes-mins}
        \vspace{-1em}
    \end{table}

    What we see is already encouraging: in 7 out of 9 instances, $\gold_1$ achieves within 10\% of the optimal maximum \textit{and} minimum probabilities. This is striking, because it was not even clear \textit{a priori} that this would be possible for \textit{any} algorithm. In contrast, we see that \textsc{Legacy} and \textsc{Minimax} perform poorly on low probabilities, \textsc{Maximin}/\textsc{Leximin} perform poorly on high probabilities, and \textsc{Nash} performs somewhat poorly on both. However, these results do not paint a complete picture: in instances 3, 4, 8, and 9, \textit{all} algorithms achieve the optimal maximum probability \textit{simply because the quotas require an agent to receive probability 1}. Thus, to fully compare the performance of these algorithms, we must examine their performance on less constrained\emdash but still realistic\emdash instances. We therefore study these algorithms' maximum and minimum probabilities as we successively drop features in decreasing order of their selection bias, as in Figure 1c of \cite{flanigan2024manipulation}. Details on feature dropping and results for omitted instances are in \Cref{app:feature-drop}.

    \begin{figure}[t!]
    \centering
    \includegraphics[width=0.75\textwidth]{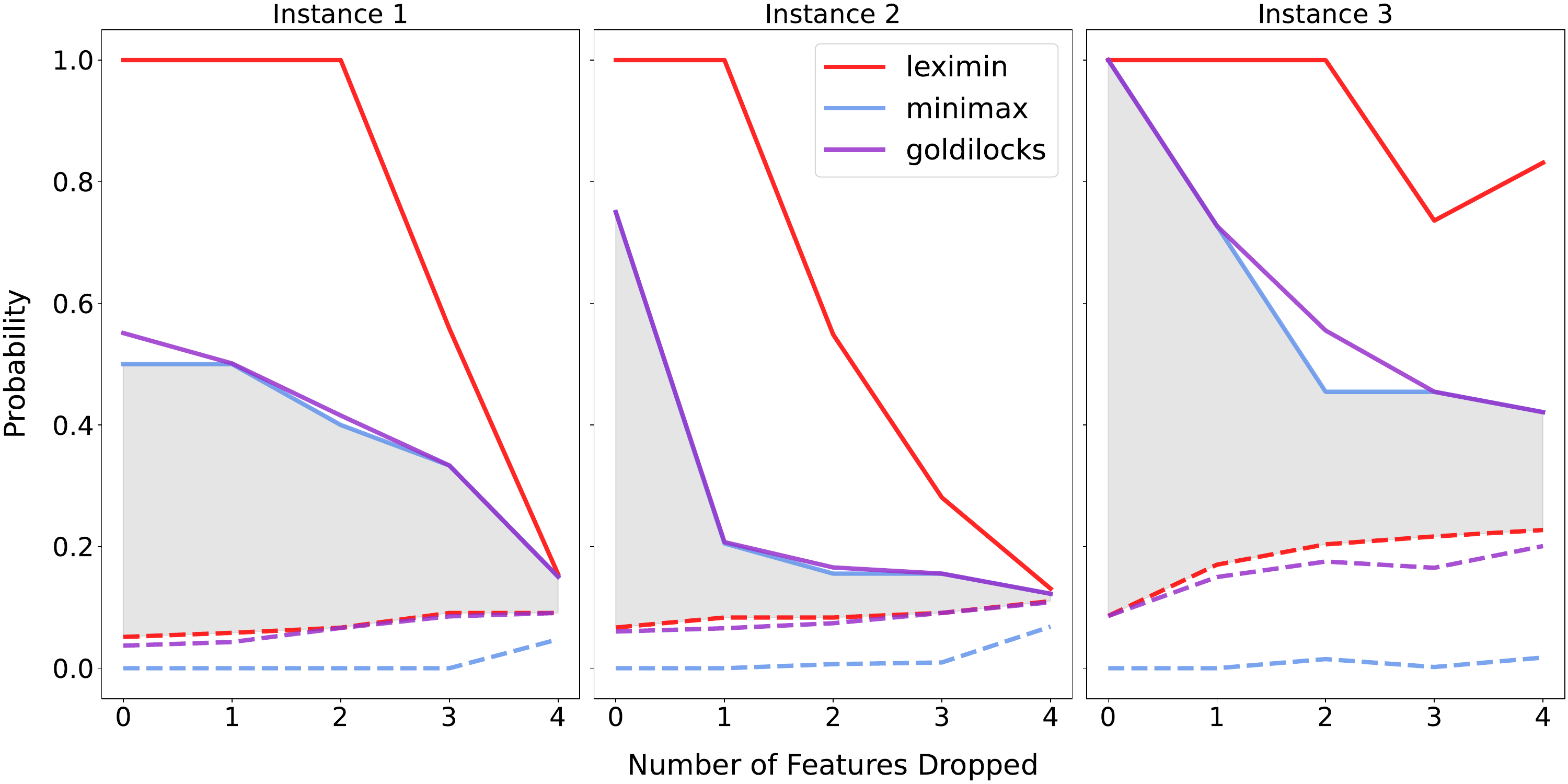}
    \caption{The solid, dashed lines respectively represent maximum, minimum probabilities per algorithm. The shaded region lies between the optimal maximum probability and optimal minimum probability, establishing the region where no algorithm's extremal probabilities can exist. }
    \vspace{-0.75em}
    \label{fig:feature-drop}
    \end{figure}
     \Cref{fig:feature-drop} shows something striking: $\gold_1$ hugs the gray region almost perfectly above and below, thus maintaining near optimality as features are dropped. This is in contrast to \textsc{Leximin} and \textsc{Minimax}, which respectively continue to perform poorly on high and low probabilities, even as the instance is loosened and better probabilities are possible. Together, these results show that controlling high and low probabilities simultaneously is generally possible to a great extent, and all previously explored algorithms were leaving a lot on the table with respect to this goal.

    \subsection{Fairness, Manipulation Robustness, and Transparency}    
    \textbf{Fairness.} While the above results already show the performance of all algorithms on \textit{Maximin} fairness, there are other normatively justified notions of fairness. In \Cref{fig:gini}, we additionally evaluate fairness according to the \textit{Gini Coefficient}, defined as $\text{\textit{Gini}}(\pivec):=\frac{\sum_{i,j \in [n]}|\pi_i - \pi_j|}{2\sum_{i,j \in [n]}\pi_i \pi_j}.$ Note that a smaller Gini Coefficient reflects greater fairness, as \textit{Gini} measures \textit{inequality.}
    
    \Cref{fig:gini} shows similar algorithmic behavior across instances: \textsc{Legacy} and \textsc{Minimax}\emdash which we expect to be very unfair\emdash tend to have high inequality per \textit{Gini}. In contrast, $\gold_1$ and \textsc{Leximin} perform far better. Unsurprisingly, \textsc{Leximin} is slightly better, as it prioritizes fairness alone.
    \begin{figure}[h!]
    \centering
    \vspace{-1em}
\includegraphics[width=0.75\textwidth]{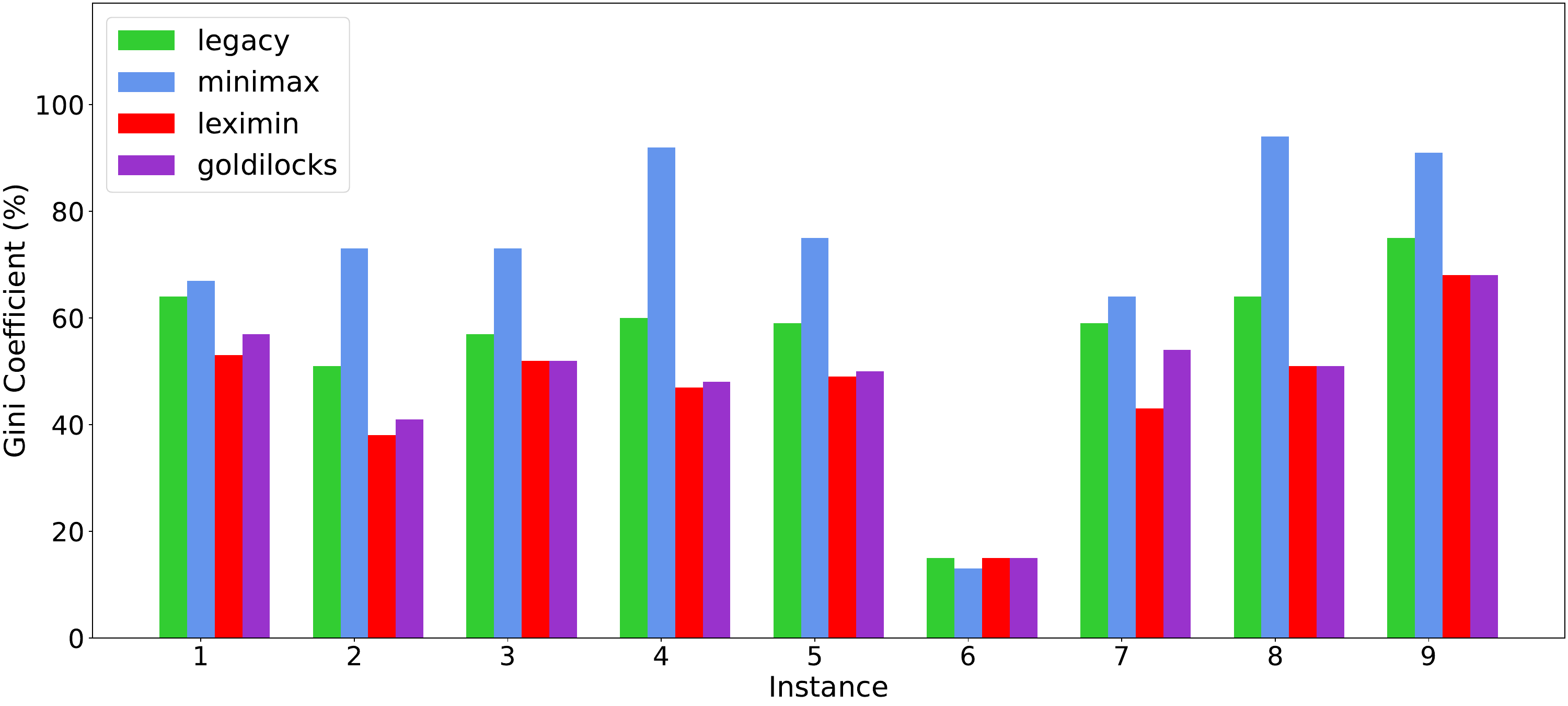}
    \vspace{-0.5em}
    \caption{Gini coefficient across algorithms and instances. Lower Gini Coefficient means greater fairness.}
    \label{fig:gini}
    \vspace{-0.75em}
    \end{figure}

    \noindent \textbf{Manipulation Robustness.} In accordance with previous work, we evaluate manipulation robustness by measuring the maximum probability gainable by any single, \textit{weakened} manipulator. This manipulator uses the strategy \textit{Most Underrepresented (MU)}, meaning they report the value of each feature that is most disproportionately underrepresented in the pool (as studied in \citet{flanigan2024manipulation}). We evaluate how this probability changes as $n$ grows, which we simulate by simply duplicating the pool. Details on these experiments are found in \Cref{app:manip-robust}.

    In \Cref{fig:manip}, we see that in instances 1 and 2, $\gold_1$ is far less manipulable than \maximin; in instance 3, we know the quotas require some agents to receive probability 1. However, we see that as the pool is duplicated, $\gold_1$ makes use of this and the manipulation drops; in contrast, across instances, \textsc{Maximin} remains just as manipulable. From Figures \ref{fig:manip} and \ref{fig:gini}, we conclude that $\gold_1$ achieves meaningful gains in manipulation robustness over \textsc{Leximin}\emdash the practical state-of-the-art\emdash without any meaningful cost to fairness, as desired.
    \begin{figure}[t!]
    \centering    \includegraphics[width=0.8\textwidth]{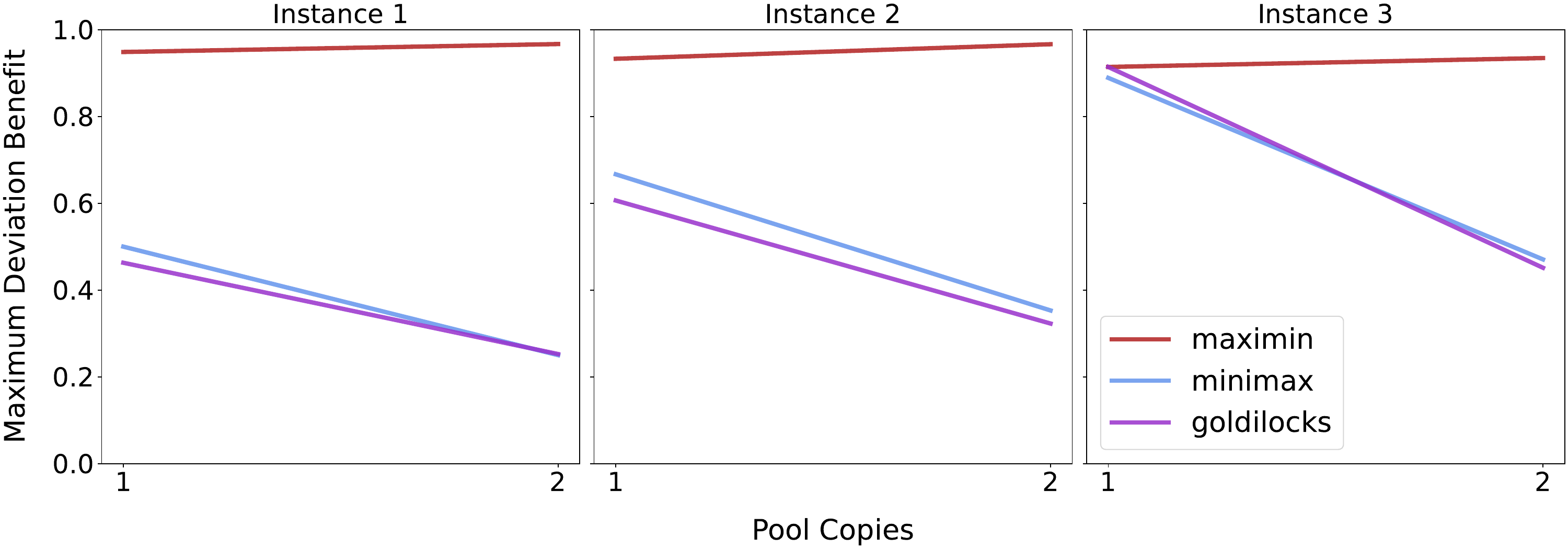}
    \vspace{-0.5em}
    \caption{The maximum amount of probability any single \textit{MU} manipulator can gain, for 1 and 2 pool copies.}
    \vspace{-0.5em}
    \label{fig:manip}
    \end{figure}

    \vspace{0.25em}
    \noindent \textbf{Transparency.} Finally, we evaluate the extent to which we can round $\gold_1$-optimal panel distributions to $m$-uniform lotteries without losing too much on high or low probabilities. In this analysis, we use $m=1000$. Although this is lower than $n \sqrt{n}$ as our theory dictates, we will find that this practicable number of panels is sufficient for good performance.
    
    We use \textit{Pipage} rounding \cite{gandhi2006dependent}, a simple randomized dependent rounding procedure. Although this algorithm does not come with any formal guarantees on how much its rounded distribution will change agents' selection probabilities, \textit{Pipage} is fast; already implemented in practice for the purposes of transparent sortition \cite{stratificationapp}; and has the added advantage that, over the randomness of the rounding \textit{and} sampling, it \textit{perfectly} preserves the selection probabilities, thereby exactly maintaining our guarantees in \Cref{thm:manip-gold} end-to-end. Details on our rounding algorithm and experimental methods are in \Cref{app:transparency}.

    \Cref{fig:transparency} shows good news: \textit{Pipage} is reliably leaving $\gold_1$'s optimal selection probabilities essentially unchanged. This is great news, because it means that we can have the best of both worlds: we can achieve a high-quality uniform lottery \textit{while} preserving our fairness and manipulation robustness guarantees from \Cref{thm:manip-gold} exactly, end-to-end.
    
     \begin{figure}[h!]
     \vspace{-0.75em}    \includegraphics[width=0.75\textwidth]{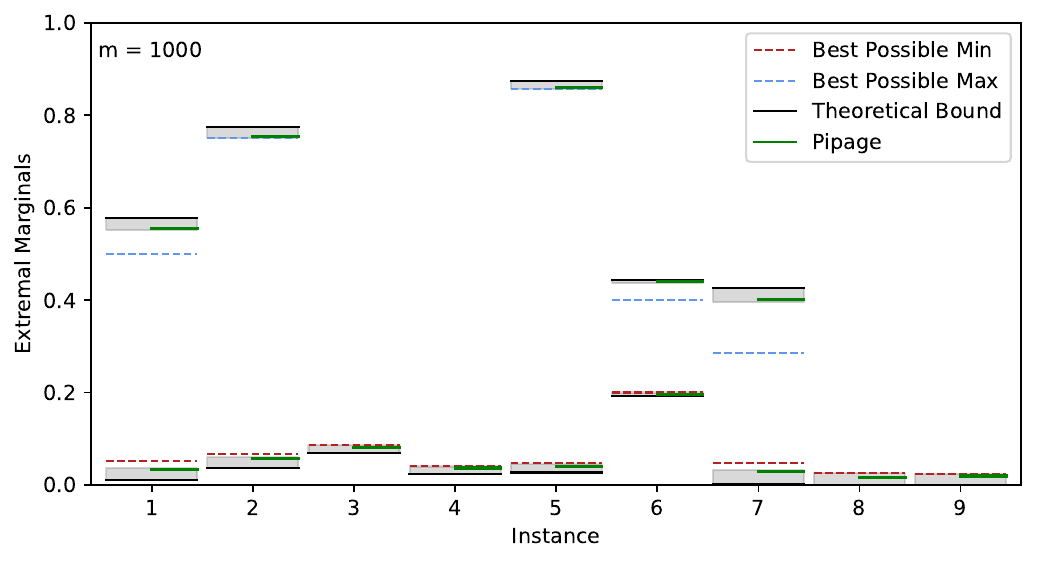}
    \vspace{-1em}
         \caption{Deviations from $\gold_1$-optimal selection probability assignments by \textit{Pipage}. The values for \textit{Pipage} correspond to averages of minimum, maximum probability per run over 1000 runs. Error bars are plotted to indicate standard deviation, but they are so small that they are not visible. Gray boxes extend vertically from the minimum (resp. maximum) probability given by $\gold_1$ to the theoretical bound given by \Cref{thm:transparency-oldpaper}. Optimal minimum, maximum probabilities per instance are shown for reference. }
         \label{fig:transparency}
     \end{figure}

 \section{Discussion} \label{sec:discussion}
Stepping back from our worst-case model, we reflect on what manipulation is likely to look like in practice. Based on anecdotes from sortition practitioners, people who misreport their features usually appear to do so with the intent to either increase their own probability ($\mint$) or misappropriate seats from other groups ($\mcomp$). People tend to employ similar strategies, misreporting belonging to identity groups that are more marginalized. This usually amounts to joining groups that are underrepresented in the pool (as in the strategy $MU$ in \Cref{sec:empirics}). Such misreports occur especially for identities that are difficult to verify or challenge. Finally, it appears that people tend to manipulate as individuals rather than as part of collusion networks (though if sortition becomes more politically mainstream, this may change).

These anecdotes motivate the pursuit of stronger bounds on $\textsf{manip}_{int}$ and $\textsf{manip}_{comp}$. In the likely event that better worst-case guarantees are impossible, one could pursue \textit{instance-wise} bounds. This direction would be of particular interest, as it might reveal new conceptual barriers to good solutions beyond Problems 1 and 2. Another approach would be to prove bounds in terms of instance parameters that interpolate between the worst case and more realistic cases. For example, since it is unlikely that all $c$ manipulators would act in concert, one could pursue bounds assuming that at while there may be $c$ manipulators, they collude in groups of size  $s \leq c$. Such $s$-dependent bounds would reveal any gaps between the power of manipulators acting in concert versus independently. As another example, since manipulators often misreport uncheckable features, one could pursue bounds parameterized by the instance \textit{and} which features are uncheckable. This approach might reveal, on an instance-wise basis, which features are ``safer'' to protect with quotas.

\begin{acks}
    We thank Chara Podimata for helpful technical conversations, Ariel Procaccia for computational resources, and the organizations who provided data (the Sortition Foundation, the Center for Blue Democracy, Healthy Democracy, MASS LBP, Of by For, and New Democracy). For funding, we thank the ETH Excellence Scholarship Opportunity Programme (CB), the Fannie and John Hertz Foundation (BF), and the NSF GRFP (BF).
\end{acks}

\bibliographystyle{ACM-Reference-Format}
\bibliography{bibliography}


\begin{thebibliography}{15}


\ifx \showCODEN    \undefined \def \showCODEN     #1{\unskip}     \fi
\ifx \showDOI      \undefined \def \showDOI       #1{#1}\fi
\ifx \showISBNx    \undefined \def \showISBNx     #1{\unskip}     \fi
\ifx \showISBNxiii \undefined \def \showISBNxiii  #1{\unskip}     \fi
\ifx \showISSN     \undefined \def \showISSN      #1{\unskip}     \fi
\ifx \showLCCN     \undefined \def \showLCCN      #1{\unskip}     \fi
\ifx \shownote     \undefined \def \shownote      #1{#1}          \fi
\ifx \showarticletitle \undefined \def \showarticletitle #1{#1}   \fi
\ifx \showURL      \undefined \def \showURL       {\relax}        \fi
\providecommand\bibfield[2]{#2}
\providecommand\bibinfo[2]{#2}
\providecommand\natexlab[1]{#1}
\providecommand\showeprint[2][]{arXiv:#2}

\bibitem[Beck and Fiala(1981)]%
        {beck1981integer}
\bibfield{author}{\bibinfo{person}{J{\'o}zsef Beck} {and} \bibinfo{person}{Tibor Fiala}.} \bibinfo{year}{1981}\natexlab{}.
\newblock \showarticletitle{“Integer-making” theorems}.
\newblock \bibinfo{journal}{\emph{Discrete Applied Mathematics}} \bibinfo{volume}{3}, \bibinfo{number}{1} (\bibinfo{year}{1981}), \bibinfo{pages}{1--8}.
\newblock


\bibitem[Benad{\`e} et~al\mbox{.}(2019)]%
        {benade2019no}
\bibfield{author}{\bibinfo{person}{Gerdus Benad{\`e}}, \bibinfo{person}{Paul G{\"o}lz}, {and} \bibinfo{person}{Ariel~D Procaccia}.} \bibinfo{year}{2019}\natexlab{}.
\newblock \showarticletitle{No stratification without representation}. In \bibinfo{booktitle}{\emph{Proceedings of the 2019 ACM Conference on Economics and Computation}}. \bibinfo{pages}{281--314}.
\newblock


\bibitem[Bradley et~al\mbox{.}(1977)]%
        {bradley1977applied}
\bibfield{author}{\bibinfo{person}{Stephen~P Bradley}, \bibinfo{person}{Arnoldo~C Hax}, {and} \bibinfo{person}{Thomas~L Magnanti}.} \bibinfo{year}{1977}\natexlab{}.
\newblock \showarticletitle{Applied mathematical programming}.
\newblock \bibinfo{journal}{\emph{(No Title)}} (\bibinfo{year}{1977}).
\newblock


\bibitem[Bürgerrat(2023)]%
        {france2}
\bibfield{author}{\bibinfo{person}{Bürgerrat}.} \bibinfo{year}{2023}\natexlab{}.
\newblock \bibinfo{title}{French citizens' assembly supports assisted dying}.
\newblock \bibinfo{howpublished}{Available at \url{https://www.buergerrat.de/en/news/french-citizens-assembly-supports-assisted-dying/} (2024/02/11)}.
\newblock


\bibitem[Ebadian et~al\mbox{.}(2022)]%
        {EKMP+22}
\bibfield{author}{\bibinfo{person}{Soroush Ebadian}, \bibinfo{person}{Gregory Kehne}, \bibinfo{person}{Evi Micha}, \bibinfo{person}{Ariel~D Procaccia}, {and} \bibinfo{person}{Nisarg Shah}.} \bibinfo{year}{2022}\natexlab{}.
\newblock \showarticletitle{Is Sortition Both Representative and Fair?}
\newblock \bibinfo{journal}{\emph{Advances in Neural Information Processing Systems}}  \bibinfo{volume}{35} (\bibinfo{year}{2022}).
\newblock


\bibitem[Ebadian and Micha(2023)]%
        {ebadianboosting}
\bibfield{author}{\bibinfo{person}{Soroush Ebadian} {and} \bibinfo{person}{Evi Micha}.} \bibinfo{year}{2023}\natexlab{}.
\newblock \bibinfo{title}{Boosting Sortition via Proportional Representation}.
\newblock \bibinfo{howpublished}{Manuscript}.
\newblock


\bibitem[Flanigan et~al\mbox{.}(2021a)]%
        {flanigan2021fair}
\bibfield{author}{\bibinfo{person}{Bailey Flanigan}, \bibinfo{person}{Paul G{\"o}lz}, \bibinfo{person}{Anupam Gupta}, \bibinfo{person}{Brett Hennig}, {and} \bibinfo{person}{Ariel~D Procaccia}.} \bibinfo{year}{2021}\natexlab{a}.
\newblock \showarticletitle{Fair algorithms for selecting citizens’ assemblies}.
\newblock \bibinfo{journal}{\emph{Nature}} \bibinfo{volume}{596}, \bibinfo{number}{7873} (\bibinfo{year}{2021}), \bibinfo{pages}{548--552}.
\newblock


\bibitem[Flanigan et~al\mbox{.}(2021b)]%
        {flanigan2021transparent}
\bibfield{author}{\bibinfo{person}{Bailey Flanigan}, \bibinfo{person}{Gregory Kehne}, {and} \bibinfo{person}{Ariel~D Procaccia}.} \bibinfo{year}{2021}\natexlab{b}.
\newblock \showarticletitle{Fair sortition made transparent}.
\newblock \bibinfo{journal}{\emph{Advances in Neural Information Processing Systems}}  \bibinfo{volume}{34} (\bibinfo{year}{2021}), \bibinfo{pages}{25720--25731}.
\newblock


\bibitem[Flanigan et~al\mbox{.}(2024)]%
        {flanigan2024manipulation}
\bibfield{author}{\bibinfo{person}{Bailey Flanigan}, \bibinfo{person}{Jennifer Liang}, \bibinfo{person}{Ariel~D Procaccia}, {and} \bibinfo{person}{Sven Wang}.} \bibinfo{year}{2024}\natexlab{}.
\newblock \showarticletitle{Manipulation-Robust Selection of Citizens’ Assemblies}. In \bibinfo{booktitle}{\emph{Proceedings of the AAAI Conference on Artificial Intelligence}}.
\newblock


\bibitem[Gandhi et~al\mbox{.}(2006)]%
        {gandhi2006dependent}
\bibfield{author}{\bibinfo{person}{Rajiv Gandhi}, \bibinfo{person}{Samir Khuller}, \bibinfo{person}{Srinivasan Parthasarathy}, {and} \bibinfo{person}{Aravind Srinivasan}.} \bibinfo{year}{2006}\natexlab{}.
\newblock \showarticletitle{Dependent rounding and its applications to approximation algorithms}.
\newblock \bibinfo{journal}{\emph{Journal of the ACM (JACM)}} \bibinfo{volume}{53}, \bibinfo{number}{3} (\bibinfo{year}{2006}), \bibinfo{pages}{324--360}.
\newblock


\bibitem[G{\k{a}}siorowska(2023)]%
        {gkasiorowska2023sortition}
\bibfield{author}{\bibinfo{person}{Adela G{\k{a}}siorowska}.} \bibinfo{year}{2023}\natexlab{}.
\newblock \showarticletitle{Sortition and its Principles: Evaluation of the Selection Processes of Citizens’ Assemblies}.
\newblock \bibinfo{journal}{\emph{Journal of Deliberative Democracy}} \bibinfo{volume}{19}, \bibinfo{number}{1} (\bibinfo{year}{2023}).
\newblock


\bibitem[Giraudet et~al\mbox{.}(2022)]%
        {giraudet2022co}
\bibfield{author}{\bibinfo{person}{Louis-Ga{\"e}tan Giraudet}, \bibinfo{person}{B{\'e}n{\'e}dicte Apouey}, \bibinfo{person}{Hazem Arab}, \bibinfo{person}{Simon Baeckelandt}, \bibinfo{person}{Philippe Begout}, \bibinfo{person}{Nicolas Berghmans}, \bibinfo{person}{Nathalie Blanc}, \bibinfo{person}{Jean-Yves Boulin}, \bibinfo{person}{Eric Buge}, \bibinfo{person}{Dimitri Courant}, {et~al\mbox{.}}} \bibinfo{year}{2022}\natexlab{}.
\newblock \showarticletitle{“Co-construction” in deliberative democracy: lessons from the French Citizens’ Convention for Climate}.
\newblock \bibinfo{journal}{\emph{Humanities and Social Sciences Communications}} \bibinfo{volume}{9}, \bibinfo{number}{1} (\bibinfo{year}{2022}), \bibinfo{pages}{1--16}.
\newblock


\bibitem[gov.scot(2021)]%
        {scotland}
\bibfield{author}{\bibinfo{person}{gov.scot}.} \bibinfo{year}{2021}\natexlab{}.
\newblock \bibinfo{title}{Scotland's Climate Assembly - recommendations for action: SG response}.
\newblock \bibinfo{howpublished}{Available at \url{https://www.gov.scot/publications/scottish-government-response-scotlands-climate-assembly-recommendations-action/} (2024/02/11)}.
\newblock


\bibitem[OIPD(2024)]%
        {ostbelgian}
\bibfield{author}{\bibinfo{person}{OIPD}.} \bibinfo{year}{2024}\natexlab{}.
\newblock \bibinfo{title}{The Ostbelgien Model: a long-term Citizens' Council combined with short-term Citizens' Assemblies}.
\newblock \bibinfo{howpublished}{Available at \url{https://oidp.net/en/practice.php?id=1237} (2024/02/11)}.
\newblock


\bibitem[Sortition-Foundation(2024)]%
        {stratificationapp}
\bibfield{author}{\bibinfo{person}{Sortition-Foundation}.} \bibinfo{year}{2024}\natexlab{}.
\newblock \bibinfo{title}{Stratification App}.
\newblock \bibinfo{howpublished}{\url{https://github.com/sortitionfoundation/stratification-app}}.
\newblock


\end{thebibliography}
     \appendix

\newpage
\section{Supplemental Materials for Section \ref{sec:model}}

\subsection{Proofs of Convexity, Conditional Equitability, and Anonymity}
Below, we show that all of our stated equality objectives in Section \ref{sec:model} are convex and satisfy conditional equitability and anonymity.

\begin{proposition}\label{prop:convexity}
    $Maximin, Minimax, Nash,$ and $Goldilocks_{\gamma}$ are all convex.
\end{proposition}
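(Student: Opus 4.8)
The plan is to verify convexity for each of the four objectives separately, since they are only loosely related as functions. First I would recall that the domain is $[0,1]^n$ (or, restricting to strictly positive probabilities where necessary, $(0,1]^n$), and that to show convexity of $\mathcal{E}$ it suffices to write each objective as a composition or combination of known-convex building blocks: $\max$ of linear functions is convex; $-\min = \max(-\pi_i)$ of linear functions is convex; a nonnegative weighted sum of convex functions is convex; and a reciprocal $1/x$ is convex and decreasing on $(0,\infty)$, so composing it with a concave function (like $\min(\pivec)$, which is concave as a min of linear functions) again yields a convex function.

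For \maximin, I would observe $Maximin(\pivec) = -\min_i \pi_i = \max_i(-\pi_i)$, a pointwise maximum of $n$ linear (hence convex) functions, so it is convex. For \minimax, $Minimax(\pivec) = \max_i \pi_i$ is likewise a pointwise maximum of linear functions, hence convex. For \nash, $Nash(\pivec) = -\left(\prod_i \pi_i\right)^{1/n}$ — this is the negative of the geometric mean, and the standard fact is that the geometric mean is concave on the nonnegative orthant (e.g. via the log transform: $\log\prod_i \pi_i^{1/n} = \frac1n\sum_i \log \pi_i$ is concave as a sum of concave functions, and $\exp$ is convex and increasing, but that composition argument gives convexity of the exponential of a concave function, which is the wrong direction — so instead I would cite the standard result that the geometric mean is concave, provable via Hölder's inequality or as a special case of the concavity of $\left(\prod x_i\right)^{1/n}$). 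Negating a concave function gives a convex function, so \nash is convex.

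For \gold$_\gamma$, write $Goldilocks_\gamma(\pivec) = \frac{n}{k}\max(\pivec) + \gamma\frac{k}{n}\cdot\frac{1}{\min(\pivec)}$. The first term is a positive scalar times $\max_i\pi_i$, which is convex. For the second term: $\min(\pivec) = \min_i \pi_i$ is concave (a pointwise minimum of linear functions), it is positive on the relevant domain, and $g(t) = 1/t$ is convex and nonincreasing on $(0,\infty)$; the composition of a nonincreasing convex function with a concave function is convex, so $1/\min(\pivec)$ is convex, and multiplying by the nonnegative constant $\gamma k/n$ preserves this. A sum of two convex functions is convex, so $Goldilocks_\gamma$ is convex.

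The main (mild) obstacle is getting the composition direction right for \nash and for the $1/\min$ term in \gold — it is easy to accidentally invoke ``convex $\circ$ concave'' when one actually needs ``nonincreasing convex $\circ$ concave,'' or to misremember whether the geometric mean is concave or convex. I would therefore either cite a standard reference (e.g. Boyd--Vandenberghe) for concavity of the geometric mean and for the composition rules, or include the one-line $\log$-concavity argument carefully. Everything else is routine.
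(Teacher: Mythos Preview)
Your proposal is correct and matches the paper's proof almost exactly: both treat the four objectives separately using the same standard convex-analysis building blocks (max of linears, negation of a concave function, concavity of the geometric mean). The only cosmetic difference is in the second term of $Goldilocks_\gamma$: the paper rewrites $1/\min(\pivec)$ as $\max_i(1/\pi_i)$ and invokes ``increasing convex of convex,'' whereas you use the equivalent ``nonincreasing convex of concave'' composition rule directly.
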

\begin{proof}
    The convexity of $Maximin$ and $Minimax$ follows immediately from their definition: $\min$ is concave, so $-\min$ is convex, and $\max$ is a convex function. Geometric mean is known to be concave, and as we define the $Nash$ objective to be the negative geometric mean, it is convex. Finally for $Goldilocks_{\gamma}$, we can rewrite the second term as $\frac{\gamma \max(1/\boldsymbol{\pi})}{n/k}$. $1/\boldsymbol{\boldsymbol{\pi}}$ is convex as all entries of $\boldsymbol{\pi}$ are nonnegative, and $\max$ is convex and increasing. Hence the composition of these two functions is convex. Therefore, $Goldilocks_{\gamma}$ is the sum of two convex functions, and is itself convex.
\end{proof}

\begin{proposition}\label{prop:ce}
    $Maximin, Minimax, Nash$ and $Goldilocks_{\gamma}$ are all conditionally equitable.
\end{proposition}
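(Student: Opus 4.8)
\textbf{Proof proposal for Proposition~\ref{prop:ce} (conditional equitability).}
The plan is to verify the axiom directly from its definition: for each objective $\mathcal{E}$ among $Maximin$, $Minimax$, $Nash$, and $Goldilocks_\gamma$, I must show that whenever $k/n\mathbf{1}^n \in \Pi(\inst)$, the uniform assignment $k/n\mathbf{1}^n$ is actually a minimizer of $\mathcal{E}$ over $\Pi(\inst)$, i.e.\ $k/n\mathbf{1}^n \in \Pi^{\mathcal{E}}(\inst)$. The crucial structural fact I would lean on is that every $\pivec \in \Pi(\inst)$ satisfies $\sum_{i\in[n]} \pi_i = k$ (stated in the model section), so the \emph{average} entry of any realizable assignment is exactly $k/n$. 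This pins down the value of every objective on the uniform point and lets me compare it against an arbitrary competitor.

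The key steps, carried out objective by objective: (i) For $Minimax$, $Minimax(k/n\mathbf{1}^n) = k/n$; for any $\pivec\in\Pi(\inst)$, since the entries average to $k/n$ we have $\max(\pivec)\ge k/n$, so $Minimax(\pivec)\ge Minimax(k/n\mathbf{1}^n)$. (ii) For $Maximin$, symmetrically $\min(\pivec)\le k/n$, hence $-\min(\pivec)\ge -k/n = Maximin(k/n\mathbf{1}^n)$. (iii) For $Goldilocks_\gamma$, on the uniform point both terms equal their extreme values: $\frac{\max}{k/n}=1$ and $\frac{k/n}{\min}=1$, giving $Goldilocks_\gamma(k/n\mathbf{1}^n) = 1+\gamma$; for arbitrary $\pivec$, $\max(\pivec)\ge k/n$ gives first term $\ge 1$ and $\min(\pivec)\le k/n$ gives second term $\ge \gamma$, so the sum is $\ge 1+\gamma$. (iv) For $Nash$, $Nash(k/n\mathbf{1}^n) = -k/n$; by the AM--GM inequality, $\left(\prod_i \pi_i\right)^{1/n} \le \frac{1}{n}\sum_i \pi_i = k/n$, so $-\left(\prod_i\pi_i\right)^{1/n} \ge -k/n = Nash(k/n\mathbf{1}^n)$. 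In each case the uniform assignment attains the infimum of $\mathcal{E}$ over $\Pi(\inst)$, which is exactly the claim.

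I do not anticipate a genuine obstacle here; the proposition is a routine consequence of the fixed-sum constraint $\sum_i\pi_i = k$ together with elementary inequalities ($\max\ge\text{avg}\ge\min$ and AM--GM). The only minor point worth stating carefully is that these arguments show $k/n\mathbf{1}^n$ achieves the minimum \emph{value} of $\mathcal{E}$, so it lies in $\arg\inf_{\pivec\in\Pi(\inst)}\mathcal{E}(\pivec) = \Pi^{\mathcal{E}}(\inst)$ — there is no issue of the infimum being unattained, precisely because the uniform point itself is assumed to be in $\Pi(\inst)$ and realizes that value. One could also remark that the same reasoning extends to $Leximin$ in the effective sense noted in the paper: if the uniform assignment is feasible, it is already $Maximin$-optimal and leaves no room for lexicographic improvement, so $Leximin$ selects it as well.
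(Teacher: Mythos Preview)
Your proposal is correct and essentially identical to the paper's own proof: both use the fixed-sum constraint $\sum_i \pi_i = k$ to deduce $\max(\pivec)\ge k/n \ge \min(\pivec)$, then check each objective in turn (with AM--GM for $Nash$) to show the uniform assignment attains the global lower bound. Your added remark about $Leximin$ is extra but harmless.
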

\begin{proof}
    We will simply lower bound each objective function for any $\boldsymbol{\pi} \in \Pi(\inst)$ and then show that $k/n\mathbf{1}^n$ achieves this bound. This will imply that $k/n\mathbf{1}^n \in \Pi^{\mathcal{E}}(\inst)$. Fix any solution $\pi \in \Pi(\inst)$. We know that $\max(\pi) \geq k/n$ and $\min(\pi)\leq k/n$--- otherwise $\sum_{i \in [n]} \pi_i \neq k$. Hence, for any feasible solution:
    \begin{center}
       $Maximin(\boldsymbol{\pi}) \geq -k/n \qquad Minimax(\boldsymbol{\pi}) \geq k/n \qquad Goldilocks_{\gamma}(\boldsymbol{\pi}) \geq n/k \cdot k/n + \frac{\gamma}{n/k \cdot k/n} = 1 + \gamma$
    \end{center}
   Each of these lower bounds are realized by the solution $k/n\mathbf{1}$. For $Nash$ we use the AM-GM inequality as follows: $Nash(\boldsymbol{\pi}) = -\left(\Pi_{i \in [n]} \pi_i\right)^{1/n} \geq \frac{-1}{n} \sum_{i \in [n]} \pi_i = \frac{-k}{n}$.
    Again, this lower bound is realized by the solution $k/n\mathbf{1}$.
\end{proof}

We transfer the following claim about anonymity from \citet{flanigan2021transparent} as it is relevant to the structure of our final proposition proof.

\begin{claim}[\cite{flanigan2021transparent} Claim B.6]\label{claim:anon_feas}
    For any instance $\inst$ and any realizable $\boldsymbol{\pi}$, let $\boldsymbol{\pi'}$ be the ``anonymized'' marginals obtained by setting $\pi'_i$ to the average $\boldsymbol{\pi_j}$ across all $j$ such that $w_j = w_i$. Then $\boldsymbol{\pi'}$ is realizable as well.
\end{claim}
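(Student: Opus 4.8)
The plan is to exploit the fact that the constraints defining a valid panel are insensitive to the identities of agents sharing a feature vector. Let $G \le \mathrm{Sym}([n])$ be the group of permutations $\sigma$ that permute agents only within their own feature-vector group, i.e., $w_{\sigma(i)} = w_i$ for all $i$; equivalently $G = \prod_{w \in \mathcal{W}_{\inst}} \mathrm{Sym}(\{i : w_i = w\})$. The first step is to check that $\mathcal{K}$ is $G$-invariant: for any $\sigma \in G$ and $K \in \mathcal{K}$, the set $\sigma(K)$ has size $|K| = k$, and for every feature-value pair $(f,v)$ we have $|\{i \in \sigma(K) : f(i) = v\}| = |\{i \in K : f(i) = v\}|$ because $\sigma$ (hence $\sigma^{-1}$) preserves each agent's feature vector, and therefore each agent's value of every feature; so the quota constraints are untouched and $\sigma(K) \in \mathcal{K}$.

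\textbf{Constructing the witnessing distribution.} Given that $\boldsymbol{\pi}$ is realizable, fix $\mathbf{d} \in \Delta(\mathcal{K})$ with $\pi_i(\mathbf{d}) = \pi_i$ for all $i$. For $\sigma \in G$, let $\sigma\cdot\mathbf{d}$ denote the pushforward distribution, $(\sigma\cdot\mathbf{d})_K = \mathbf{d}_{\sigma^{-1}(K)}$; by the $G$-invariance of $\mathcal{K}$, each $\sigma\cdot\mathbf{d}$ lies in $\Delta(\mathcal{K})$. I would then set
\[
\mathbf{d}' := \frac{1}{|G|}\sum_{\sigma \in G} \sigma\cdot\mathbf{d},
\]
which lies in $\Delta(\mathcal{K})$ since $\Delta(\mathcal{K})$ is convex, so $\mathbf{d}'$ is a valid panel distribution.

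\textbf{Computing its marginals.} It remains to show $\pi_i(\mathbf{d}') = \pi'_i$. By linearity of the selection-probability map in the distribution, $\pi_i(\mathbf{d}') = \frac{1}{|G|}\sum_{\sigma\in G}\pi_i(\sigma\cdot\mathbf{d}) = \frac{1}{|G|}\sum_{\sigma\in G}\pi_{\sigma^{-1}(i)}(\mathbf{d})$. Fixing $i$ and writing $w = w_i$, $n_w = n_w(\inst)$: the factor of $G$ acting on group $w$ is the full symmetric group on that group and acts transitively on it, so as $\sigma$ ranges over $G$ the image $\sigma^{-1}(i)$ equals each $j$ with $w_j = w$ exactly $|G|/n_w$ times. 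Hence $\pi_i(\mathbf{d}') = \frac{1}{n_w}\sum_{j : w_j = w}\pi_j(\mathbf{d}) = \pi'_i$, so $\boldsymbol{\pi}'$ is realized by $\mathbf{d}'$.

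\textbf{Main obstacle.} There is no deep difficulty here; the only points needing care are (i) verifying that the quota counts are genuinely invariant under intra-group relabelings, so that $\sigma\cdot\mathbf{d}$ remains supported on $\mathcal{K}$, and (ii) checking that the averaging operator is transitive on each feature-vector group, so the marginals collapse to exactly the within-group mean rather than some other weighted average. A fully constructive alternative that avoids group-theoretic language is to process one feature-vector group at a time, averaging $\mathbf{d}$ over the $n_w$ cyclic shifts within that group (each of which sends valid panels to valid panels); this equalizes that group's selection probabilities to their mean while leaving every other agent's probability unchanged, and iterating over all groups produces the desired $\mathbf{d}'$.
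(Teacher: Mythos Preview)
Your proof is correct. The symmetrization over the full group $G=\prod_{w}\mathrm{Sym}(\{i:w_i=w\})$ works exactly as you describe: $G$-invariance of $\mathcal{K}$ follows since quotas count feature-values, the averaged distribution lies in $\Delta(\mathcal{K})$ by convexity, and transitivity of $G$ on each feature-vector group collapses each marginal to the within-group mean.

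The paper does not prove this claim itself; it imports it from \cite{flanigan2021transparent}. However, in the proof of Proposition~\ref{prop:anon} the paper explicitly says it uses ``a similar proof as to that of \Cref{claim:anon_feas},'' and that proof proceeds by \emph{pairwise} equalization: pick two agents $i,j$ with $w_i=w_j$ and $\pi_i>\pi_j$, form the distribution $\pdist'$ that swaps $i$ and $j$ on every panel, and set $\pdist''=(\pdist+\pdist')/2$, which equalizes $\pi_i$ and $\pi_j$ while leaving all other marginals fixed; iterate. Your one-shot symmetrization is cleaner and more direct for the bare feasibility statement. The pairwise route is more elementary (no group-theoretic bookkeeping) and, crucially for the paper's purposes in Proposition~\ref{prop:anon}, makes it easy to track the change in the equality objective $\mathcal{E}$ at each step, since only two coordinates move at a time. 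Your cyclic-shift alternative is essentially a halfway point between the two and is also fine.
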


     
\begin{proposition}[Adapted from \cite{flanigan2021transparent} Claim B.6]\label{prop:anon}
    $Maximin, Minimax,$ $ Nash, \text{ and } Goldilocks_{\gamma}$ are all anonymous.
\end{proposition}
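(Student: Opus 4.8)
The plan is to show that for each of the four objectives $\mathcal{E} \in \{Maximin, Minimax, Nash, Goldilocks_\gamma\}$ and any instance $\inst$, the set $\Pi^\mathcal{E}(\inst)$ contains an anonymous selection probability assignment. The central tool is \Cref{claim:anon_feas}: given any realizable $\pivec$, its vector-wise average $\pivec'$ (set $\pi'_i$ to the mean of $\pi_j$ over all $j$ with $w_j = w_i$) is again realizable. So it suffices to show that this anonymization operation does not \emph{worsen} any of the four objectives, i.e., $\mathcal{E}(\pivec') \leq \mathcal{E}(\pivec)$. Then, starting from \emph{any} optimal $\pivec \in \Pi^\mathcal{E}(\inst)$, its anonymization $\pivec'$ is realizable and satisfies $\mathcal{E}(\pivec') \leq \mathcal{E}(\pivec)$, hence $\pivec' \in \Pi^\mathcal{E}(\inst)$ as well, and $\pivec'$ is anonymous by construction.

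First I would handle the generic argument: anonymization $\pivec \mapsto \pivec'$ can be written as applying, for each feature vector $w$, the averaging operator within the block $\{i : w_i = w\}$. Averaging within blocks is a convex combination of coordinate permutations that only permute indices inside blocks; since all four objectives are convex (\Cref{prop:convexity}) and symmetric (invariant under \emph{all} coordinate permutations, which is immediate from their definitions: $\max$, $\min$, products, and $k/n$ are all symmetric functions of the entries), Jensen's inequality gives $\mathcal{E}(\pivec') = \mathcal{E}\!\left(\text{avg of permutations of } \pivec\right) \leq \text{avg of } \mathcal{E}(\text{permutations of }\pivec) = \mathcal{E}(\pivec)$. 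This single observation covers all four objectives simultaneously. If one prefers a more hands-on argument for concreteness, one can also note directly that averaging within a block leaves $\min$ weakly higher and $\max$ weakly lower (so $Maximin$ and $Minimax$ weakly improve), leaves $\sum_i \pi_i = k$ unchanged so the $k/n$ terms in $Goldilocks_\gamma$ are untouched while its $\max$ and $1/\min$ terms weakly improve, and weakly increases $\prod_i \pi_i$ by AM--GM applied within each block (so $Nash$, the negative geometric mean, weakly improves).

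Then I would assemble the conclusion: for each $\mathcal{E}$, pick any $\pivec^\star \in \Pi^\mathcal{E}(\inst)$ (nonempty since $\Pi(\inst)$ is compact and $\mathcal{E}$ continuous, or just appeal to the earlier-established fact that these $\arg\inf$ sets are studied throughout the paper), anonymize to get $\pivec^{\star\prime}$, invoke \Cref{claim:anon_feas} for realizability and the inequality above for optimality, so $\pivec^{\star\prime} \in \Pi^\mathcal{E}(\inst)$ is anonymous. Since $Leximin$ is a refinement of $Maximin$ that breaks ties lexicographically and anonymization also weakly improves every order statistic (averaging within a block is a mean-preserving contraction, hence majorized by the original, which weakly raises every prefix of the sorted-ascending vector), the same construction yields an anonymous $Leximin$-optimal assignment; the paper already flags $Leximin$ as only ``effectively'' needing this, so a remark suffices.

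The main obstacle is essentially bookkeeping rather than conceptual: one must be careful that the anonymization operator is genuinely expressible as a convex combination of \emph{symmetry} transformations of the objective (permutations within blocks), so that convexity plus full symmetry can be invoked cleanly — this is why stating it via block-wise Jensen is cleaner than ad hoc per-objective estimates, though I would likely include the short per-objective estimates as well since $Nash$'s AM--GM step and the ``$\sum_i \pi_i = k$ is preserved'' point for $Goldilocks_\gamma$ are the only places anything instance-specific enters, and they are one line each.
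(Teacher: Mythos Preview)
Your proposal is correct and, in fact, cleaner than the paper's own argument. You go directly: apply \Cref{claim:anon_feas} to fully anonymize an optimal $\pivec$ in one step, then use convexity plus symmetry of each objective (via Birkhoff/Jensen, or via the per-objective one-liners you sketch) to conclude the anonymized assignment is at least as good, hence also optimal. The paper instead argues by contradiction: it takes the ``most anonymized'' optimal solution, finds a pair $i,j$ with $w_i=w_j$ realizing the largest remaining gap, and shows that averaging just that pair (via the swap-and-mix distribution $\pdist'' = (\pdist+\pdist')/2$) leaves all other marginals fixed and does not worsen any of the four objectives---contradicting minimality of the gap. Your route avoids the minimality/contradiction scaffolding entirely and handles all four objectives with one stroke of Jensen; what the paper's pairwise argument buys is that it never needs the doubly-stochastic/Birkhoff observation, only the elementary fact that swapping two same-vector agents preserves feasibility, which makes the realizability of the intermediate distributions completely transparent without invoking \Cref{claim:anon_feas} as a black box.
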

\begin{proof}
    Fix some instance $\inst$, and $\mathcal{E} \in \{Maximin, Minimax, Nash, Linear_{\gamma}, \text{ and } Goldilocks_{\gamma}\}$. By Assumption \ref{ass:inclusion}, we have that $\inst$ is feasible (all agents existing on a valid panel implies the existence of valid panels and thus instance feasibility) -- hence $\Pi^{\mathcal{E}}(\inst)$ is nonempty. Now we will use a similar proof as to that of \Cref{claim:anon_feas}, but will pay attention to the impact of incrementally anonymizing the panel distribution on the equality objective.
    
    Assume for sake of contradiction that there is no anonymous $\boldsymbol{\pi}$ such that $\boldsymbol{\pi} \in \Pi^{\mathcal{E}}(\inst)$. Let $\boldsymbol{\pi}$ be the most anonymized optimal vector of marginals, and $\boldsymbol{d}$ be the corresponding panel distribution inducing it. Formally:\[
    \boldsymbol{\pi} = \arg \min_{\boldsymbol{\pi} \in \Pi^{\mathcal{E}}(\inst)} \max_{w \in \mathcal{W}_N} \left(\max_{i \in [n] \colon w_i = w} \pi_i - \min_{i \in [n] \colon w_i = w} \pi_i\right)
    \]
    There must be a finite number of pairs of marginals that are maximizing this gap. We argue that we can equalize these pairs one-by-one without affecting other marginals, while never increasing $\mathcal{E}$. Let $i, j \in [n]$ be such that $w_i = w_j$ and they have the maximum gap between any marginals of the same feature-vector in $\boldsymbol{\pi}$. Without loss of generality, assume $\pi_i > \pi_j$. We construct a new panel distribution $\pdist'$ as follows: $\pdist'$ is identical to $\pdist$ except that it swaps $i$ for $j$ and vice versa on all panels. Then define $\pdist'' = (\pdist + \pdist')/2$. Let $\boldsymbol{\pi''}$ be the marginals resulting from $\boldsymbol{d''}$. We have that $\pi''_i = \pi''_j$ and all other marginals remain the same. Now we consider how our equality objective, $\mathcal{E}$ might be impacted.

    Notice that $\min(\boldsymbol{\pi}) \leq \pi_j < \pi''_j=\pi''_i < \pi_i \leq \max(\boldsymbol{\pi})$. Therefore, as all other marginals remain unchanged, we have that $\min(\boldsymbol{\pi}) \leq \min(\boldsymbol{\pi''})$ and $\max(\boldsymbol{\pi}) \geq \max(\boldsymbol{\pi''})$. Therefore:\begin{align*}
        Maximin(\boldsymbol{\pi''}) &= - \min(\boldsymbol{\pi''}) \leq -\min(\boldsymbol{\pi}) \leq Maximin(\boldsymbol{\pi})\\
        Minimax(\boldsymbol{\pi''}) &= \max(\boldsymbol{\pi''}) \leq \max(\boldsymbol{\pi}) \leq Minimax(\boldsymbol{\pi})\\
        Goldilocks_{\gamma}(\boldsymbol{\pi''}) &= n/k \max(\boldsymbol{\pi''})- \gamma \cdot \frac{1}{n/k\min(\boldsymbol{\pi''})} \leq n/k \max(\boldsymbol{\pi})- \gamma \cdot \frac{1}{n/k\min(\boldsymbol{\pi})} \leq Goldilocks_{\gamma}(\boldsymbol{\pi})
    \end{align*}
    Finally, we just consider the case of $Nash$:\begin{align*}
        Nash(\boldsymbol{\pi''}) &= -\left(\Pi_{a \in [n]} \pi''_a \right)^{1/n} = -\left(\Pi_{a \in [n]} \pi_a \right)^{1/n} \cdot \left(\frac{\pi''_i\pi''_j}{\pi_i\pi_j}\right)^{1/n}
    \end{align*}
    We have that $\pi''_i\pi''_j = \left(\frac{\pi_i + \pi_j}{2}\right)^2 = \frac{\pi_i^2 + 2\pi_i\pi_j + \pi_j^2}{4}$. We know that $(\pi_i - \pi_j)^2 \geq 0$ which implies that $\pi_i^2 + \pi_j^2 \geq 2\pi_i\pi_j$. So this gives us that $\pi''_i\pi''_j \geq \frac{4\pi_i\pi_j}{4} = \pi_i\pi_j$. Returning to our analysis of $Nash$, we see that we are multiplying the negative geometric mean of $\boldsymbol{\pi}$ by a value greater than or equal to 1. So we have that $Nash(\pivec'') \leq -\left(\Pi_{a \in [n]} \pi_a \right)^{1/n} \leq Nash(\pivec)$.
\end{proof}
Then we have shown that for all equality objectives we are considering, $\mathcal{E}(\pivec'') \leq \mathcal{E}(\pivec)$. Therefore, after repeating this adjustment for all of the finitely many pairs enforcing this maximum gap, we will have arrived at a \textit{more} anonymized vector of marginals that has objective value at most $\boldsymbol{\pi}$. This is a contradiction to $\boldsymbol{\pi}$ being the most anonymized vector of marginals in $\Pi^{\mathcal{E}}(\inst)$. Therefore, we have arrived at a contradiction and can conclude that there exists an anonymous $\pivec \in \Pi^{\mathcal{E}}(\inst)$.

\subsection{Proof of Necessity of \Cref{ass:inclusion} } \label{app:assumptions-necessary}
\begin{proposition} \label{prop:assumptions-necessary}
    If $\inst$ is such that $EX(\inst) \neq \emptyset$ (satisfying Assumption \ref{ass:inclusion}), then for all algorithms $\algo$, \textsf{fairness}$(\algo,\inst) = 0$ and \textsf{manip-fairness}$(\algo,\inst,c) = 0$ for all $c \in [0,n-1]$. 
\end{proposition}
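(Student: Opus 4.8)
The plan is to observe that the hypothesis $\textsc{EX}(\inst)\neq\emptyset$ forces some agent's selection probability to zero — both in $\inst$ itself and in a suitably chosen manipulated instance — which immediately pins both $\textsf{fairness}$ and $\textsf{manip-fairness}$ to zero, since each is a minimum over nonnegative selection probabilities. For $\textsf{fairness}(\inst,\algo)$ I would fix any $i^{\star}\in\textsc{EX}(\inst)$, so that $i^{\star}\notin K$ for every $K\in\mathcal{K}$. Any algorithm $\algo$ induces a panel distribution $\mathbf{d}^{\algo}(\inst)$ supported on $\mathcal{K}$, so by the definition of induced selection probabilities $\pi^{\algo}_{i^{\star}}(\inst)=\sum_{K\in\mathcal{K}\,:\,i^{\star}\in K}\mathbf{d}^{\algo}_{K}(\inst)=0$, an empty sum. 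Since all selection probabilities are nonnegative, $\textsf{fairness}(\inst,\algo)=\min_{i\in[n]}\pi^{\algo}_{i}(\inst)=0$.

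For $\textsf{manip-fairness}(\inst,\algo,c)$ with $c\in\{0,\dots,n-1\}$: because $\textsf{fairness}(\cdot,\algo)$ is always nonnegative, it suffices to produce a single coalition $C$ with $|C|=c$ and a single $\tilde{\boldsymbol{w}}\in\boldsymbol{\mathcal{W}}_{\inst,C}$ whose manipulated instance again has a structurally excluded agent. I would take the ``null'' report $\tilde{\boldsymbol{w}}=\boldsymbol{w}$ — always permitted, since each coalition member may report truthfully — together with any $C\subseteq[n]\setminus\{i^{\star}\}$ of size $c$, which exists precisely because $|[n]\setminus\{i^{\star}\}|=n-1\geq c$. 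Then $i^{\star}\notin C$, so $i^{\star}\notin C\cap\textsc{EX}(\inst)$, and hence $i^{\star}$ is not deleted by the structural-exclusion re-indexing convention of \Cref{sec:exclusion}: it survives into the manipulated pool $[n']$.

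It then remains to verify that $i^{\star}$ is still structurally excluded in $\inst^{\to_{C}\boldsymbol{w}}$. This holds because deleting structurally excluded agents from the pool leaves the set of valid panels unchanged: every $K\in\mathcal{K}$ avoids all of $\textsc{EX}(\inst)$ by definition, so $K$ is still a subset of the shrunken pool $[n']$ and still meets the (unchanged) quotas, while no new panel can appear. Therefore $i^{\star}$ lies on no valid panel of $\inst^{\to_{C}\boldsymbol{w}}$, so $\pi^{\algo}_{i^{\star}}(\inst^{\to_{C}\boldsymbol{w}})=0$ exactly as before, giving $\textsf{fairness}(\inst^{\to_{C}\boldsymbol{w}},\algo)=0$; taking the minimum over all coalitions and misreports yields $\textsf{manip-fairness}(\inst,\algo,c)=0$. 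The argument is essentially immediate from the definitions; the only point deserving care is the re-indexing convention of \Cref{sec:exclusion}, which would delete the witness $i^{\star}$ were the coalition allowed to absorb \emph{every} structurally excluded agent — choosing $C$ to spare one fixed excluded agent avoids exactly this, and the hypothesis $c\le n-1$ is precisely what makes such a choice available (for $c=n$ the statement would in fact fail).
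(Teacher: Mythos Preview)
Your proof is correct and follows essentially the same approach as the paper's: both argue that any excluded agent forces $\textsf{fairness}(\inst,\algo)=0$, and both obtain the $\textsf{manip-fairness}$ bound by taking the truthful report $\tilde{\boldsymbol w}=\boldsymbol w$ so that the manipulated instance coincides with $\inst$. The only difference is that you are more careful about the re-indexing convention of \Cref{sec:exclusion}, explicitly choosing $C\subseteq[n]\setminus\{i^\star\}$ so that the witness $i^\star$ is not pruned from the pool; the paper's proof simply asserts $\manipinst=\inst$ under the null report without addressing this point, so your extra care is warranted but does not change the strategy.
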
 \label{prop:structural-exclusion}
\begin{proof}
    Fix $\inst = ([n], \boldsymbol{w}, k, \boldsymbol{\ell, u})$ such that $EX(\inst) \neq \emptyset$. Then there exists some $i \in [n]$ such that $i \not \in K$ for any $K \in \mathcal{K}$. Therefore, for any panel distribution $\mathbf{d} \in \Delta(\mathcal{K})$, we have that $\pivec(\mathbf{d})_i = 0$, because $i$ does not appear on any panel in the support of $\mathbf{d}$. As this is true for every possible panel distribution, we have that $\textsf{fairness}(\inst,\textsc{A}) = 0$ for any algorithm $\textsc{A}$. Note that \textsf{manip-fairness}$(\algo,\inst,c) \leq \textsf{fairness}(\inst,\textsc{A})$ for any $c \in [0, n-1]$ because there is always the possible scenario that \textbf{none} of the coalition members misreport their vectors, and then $\manipinst = \inst$. Therefore, we have that  \textsf{manip-fairness}$(\algo,\inst,c) \leq 0$ and is non-negative by definition, so \textsf{manip-fairness}$(\algo,\inst,c) = 0$.
\end{proof}

\begin{proposition} \label{prop:necessity-assumptions1}
  Given a truthful instance $\inst$ that satisfies Assumption \ref{ass:inclusion}, \ref{eq:restriction1} is nearly necessary to ensure that no non-coalition members are structurally excluded in $\manipinst$. That is, $|C|$ must be at most $n_{min} - k + 1$.
\end{proposition}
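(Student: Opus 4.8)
The plan is to prove the contrapositive by exhibiting, for each target size $n_{min}\ge k$, a single instance together with a single misreport that already breaks everything once $|C|$ reaches $n_{min}-k+2$. Concretely, I would take one binary feature $f$ with $V_f=\{0,1\}$, a pool of $n\ge 2n_{min}$ agents split as $n_1=n_{min}$ agents with value $1$ and $n_0=n-n_{min}$ agents with value $0$, and quotas $\ell_{f,1}=u_{f,1}=k-1$ and $\ell_{f,0}=u_{f,0}=1$, so that every valid panel has size $k$ and consists of exactly $k-1$ value-$1$ agents and one value-$0$ agent. The first step is the routine check that this instance is well-formed: $|\mathcal{W}_\inst|=2>1$; $n_{min}(\inst)=\min\{n_1,n_0\}=n_{min}$ since $n_{min}\le n/2$; and $EX(\inst)=\emptyset$, because every value-$1$ agent lies on a panel made of $k-1$ value-$1$ agents containing it (possible as $n_1=n_{min}\ge k-1$) plus any value-$0$ agent, and symmetrically for value-$0$ agents — so \Cref{ass:inclusion} holds.

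Next, I would let $C$ consist of $|C|=n_{min}-k+2$ value-$1$ agents and let $\boldsymbol{\tilde{w}}\in\boldsymbol{\mathcal{W}}_{\inst,C}$ be the profile in which exactly these agents report value $0$ and everyone else reports truthfully. In $\manipinst$ the number of agents reporting value $1$ is $n_1-|C|=n_{min}-(n_{min}-k+2)=k-2<k-1=\ell_{f,1}$, so the lower quota on value $1$ cannot be met and the set of valid panels of $\manipinst$ is empty. By the definition of $EX$, an empty set of valid panels makes \emph{every} agent of $\manipinst$ structurally excluded; in particular each of the $n_0\ge n_{min}\ge k\ge 1$ value-$0$ agents is excluded, and none of them lies in $C$, so $EX(\manipinst)\cap([n]\setminus C)\ne\emptyset$. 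For any $|C|>n_{min}-k+2$ the same construction works verbatim with the extra coalition members simply reporting truthfully, since the post-manipulation count of value-$1$ agents is then still at most $k-2$. This establishes that guaranteeing ``no non-coalition member is structurally excluded'' across all instances satisfying \Cref{ass:inclusion} forces $|C|\le n_{min}-k+1$, i.e.\ exactly one more than \ref{eq:restriction1} permits — hence \ref{eq:restriction1} is nearly necessary.

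I do not expect a genuine obstacle; the only point needing care is the interaction with the self-exclusion convention of \Cref{sec:exclusion}, under which one first deletes coalition members who have excluded themselves. This is precisely why the construction is built so that \emph{all} valid panels vanish (rather than merely those containing the misreporters): deleting agents can only shrink the family of valid panels, so it stays empty and the value-$0$ non-coalition agents remain structurally excluded even in the redefined manipulated instance. A minor bookkeeping remark would note that the statement is intended for the regime $n_{min}\ge k$, in which \ref{eq:restriction1} is a nontrivial positive bound; for $n_{min}<k$, \ref{eq:restriction1} already forces $|C|=0$ and there is nothing to prove.
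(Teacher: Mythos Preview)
Your proposal is correct and takes essentially the same approach as the paper: a single binary feature with quota $\ell_{f,1}=u_{f,1}=k-1$, a coalition of value-$1$ agents who misreport value $0$, depleting the value-$1$ group below $k-1$ so that no valid panel exists and all non-coalition agents are structurally excluded. Your version is in fact slightly more general (you allow any $n_{min}\le n/2$ rather than fixing $n_0=n_1=n/2$) and more careful about corner cases (you explicitly address the self-exclusion convention and the $n_{min}<k$ regime), but the core construction is the same.
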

\begin{proof}
 Consider the instance $\inst$ in which there is one binary feature and quotas require that there are $k-1$ 1s for this feature on the panel. Let the pool be evenly divided between those with value 1 and those with 0 ($n_0 = n_1 = n/2$). Note that the pool satisfies Assumption \ref{ass:inclusion}: there is one valid panel composition in which there are $k-1$ members with value 1 and $1$ member with value 0. Therefore, there are no structurally excluded agents in the truthful pool. All members of the pool initially have feature-vector 1, but a coalition $C$ of size $> n_{min} - k + 1$ misreports their feature-vector as 0 in $\manipinst$. As $n_{min}(\inst) = n/2$, we have that the coalition has size $> n/2-k + 1$, and there are strictly fewer than $k-1$ remaining members with vector 1. Therefore, there is no longer any valid panel remaining, so $EX(\manipinst) \cap ([n] \setminus C) \neq \emptyset$. 

\end{proof}

\newpage
\section{Supplemental Materials for Section \ref{sec:existence}}
\subsection{Analysis for \Cref{ex:trade-off}}
\begin{proposition} \label{app:trade-off}
    In the instance in \Cref{ex:trade-off}, $\textsf{p}_{01} = \textsf{p}_{10} (n/2-1)$.
\end{proposition}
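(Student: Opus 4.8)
The plan is to show that the two tight quotas on feature-value $0$ force every valid panel to contain exactly as many agents with feature vector $01$ as with feature vector $10$ (this is the ``linked-fate'' observation asserted in Example~\ref{ex:trade-off}), and then to pass from this panel-wise identity to an identity on expected group sizes, i.e.\ on the summed selection probabilities of each group. The group sizes $n_{01}=1$ and $n_{10}=n/2-1$ together with anonymity then convert the latter into the stated relation between $\textsf{p}_{01}$ and $\textsf{p}_{10}$.

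\textbf{Step 1 (panel-wise identity).} Fix an arbitrary valid panel $K \in \mathcal{K}$ and, for each $w \in \{00,01,10,11\}$, write $k_w(K) := |\{i \in K : w_i = w\}|$ for the number of agents on $K$ with feature vector $w$. The agents with $f_1(i)=0$ are exactly those with feature vector $00$ or $01$, so the tight quota $\ell_{f_1,0}=u_{f_1,0}=k/2$ yields $k_{00}(K)+k_{01}(K)=k/2$. Symmetrically, the agents with $f_2(i)=0$ are exactly those with vector $00$ or $10$, so $\ell_{f_2,0}=u_{f_2,0}=k/2$ yields $k_{00}(K)+k_{10}(K)=k/2$. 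Subtracting the two equations gives $k_{01}(K)=k_{10}(K)$ for every $K \in \mathcal{K}$. (Note only these two of the four tight quotas are needed; the panel-size constraint is not used.)

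\textbf{Step 2 (passing to probabilities).} Let $\pivec \in \Pi(\inst)$ be induced by a panel distribution $\mathbf{d} \in \Delta(\mathcal{K})$. Swapping the order of summation in $\pi_i(\mathbf{d}) = \sum_{K \ni i} d_K$ gives, for each $w$, $\sum_{i : w_i = w}\pi_i(\mathbf{d}) = \sum_{K \in \mathcal{K}} d_K\, k_w(K)$ — the expected number of group-$w$ agents on the drawn panel. Applying the identity $k_{01}(K)=k_{10}(K)$ termwise yields $\sum_{i : w_i = 01}\pi_i(\mathbf{d}) = \sum_{i : w_i = 10}\pi_i(\mathbf{d})$. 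Finally, since we work throughout with anonymous assignments, every agent with vector $01$ gets $\textsf{p}_{01}$ and every agent with vector $10$ gets $\textsf{p}_{10}$; as $n_{01}=1$ and $n_{10}=n/2-1$, the last display reads $\textsf{p}_{01} = (n/2-1)\,\textsf{p}_{10}$, as claimed.

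\textbf{Main obstacle.} There is no substantive difficulty here — the argument is a short counting/linearity-of-expectation computation. The only things to get right are (i) correctly identifying which feature vectors contribute to each quota constraint (``$f_1=0$'' picks out $00$ and $01$, not $00$ and $10$), and (ii) being explicit that the identity is first established panel-by-panel and only then averaged, so that no assumption on the structure of $\mathbf{d}$ is needed beyond that it is supported on $\mathcal{K}$.
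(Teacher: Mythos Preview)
Your proof is correct and takes essentially the same approach as the paper: establish the panel-wise identity $k_{01}(K)=k_{10}(K)$ from the tight quotas, then average over the panel distribution and divide by group sizes using anonymity. You are simply more explicit than the paper in Step~1 (showing the subtraction of the two quota equations rather than asserting the identity), but the structure is identical.
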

\begin{proof}
    Consider any valid panel $K \in \mathcal{K}$ for the instance $\inst$. Note that $K$ must have the \textit{same number} of agents of type 01 and type 10 in order to maintain equality of the total number of 0/1 values for each feature. For a given vector $w \in \mathcal{W}$, we can write $\vecprobs_w(\pivec) = \frac{1}{n_w}\sum_{i \in [n] \colon w_i = w} \pivec_i = \frac{1}{n_w}\sum_{i \in [n] \colon w_i = w} \sum_{K \in \mathcal{K} \colon i \in K} d_K$, as this is just looking at the vector-indexed probability as the average of marginals of agents with that probability vector. Rewriting, we get that $\vecprobs_w(\pivec) = \frac{1}{n_w} \sum_{K \in \mathcal{K}} |\{i \in K : w_i = w\}| \cdot d_K$. Plugging in for Example \ref{ex:trade-off} gives that $\vecprobs_{01} = \frac{1}{1} \sum_{K \in \mathcal{K}} |\{i \in K : w_i = 01\}| \cdot d_K = (n/2 - 1) \cdot \frac{1}{n/2 - 1} \sum_{K \in \mathcal{K}}|\{i \in K : w_i = 10\}| \cdot d_K = (n/2-1)\vecprobs_{10}$, where we use that $|\{i \in K : w_i = 01\}| = |\{i \in K : w_i = 10\}|$ for all $K \in \mathcal{K}$.
\end{proof}

\subsection{Proof of \Cref{thm:LB-tradeoff}} \label{app:LB-tradeoff}
 
\begin{proof} 
    Fix $k, n_{min}, c$ in the ranges specified in the statement. Our truthful instance $\inst = ([n],\boldsymbol{w}, k,\boldsymbol{\ell,u})$ has three binary features $f_1, f_2, f_3$ and $\mathcal{W}_{\inst} = \{000, 110, 111\}$. Let the pool composition be
    \begin{center}
        $n_{000} = n_{110} = (n - n_{min})/2$, \qquad $n_{111} = n_{min}$,
    \end{center} Let the quotas require perfect balance on the first two features: $\boldsymbol{\ell}_{f_1, 0} = \boldsymbol{\ell}_{f_1, 1} = \boldsymbol{u}_{f_1, 0} = \boldsymbol{u}_{f_1, 1} = k/2$ and the same holds for $f_2$. On the third feature, let $\boldsymbol{\ell}_{f_3, 0} = \boldsymbol{u}_{f_3, 0} = k-2$ while $\boldsymbol{\ell}_{f_3, 1} = \boldsymbol{u}_{f_3, 1} = 2$. 
    
    Fix any $C$ be of size $c$ and defined to contain $c$ agents who all have the same truthful vector, $111$. To construct $\boldsymbol{\tilde{w}}$, let there be two unique coalition members $i, j \in C$ that report different vectors from the rest of the coalition: agent $i$ reports $\tilde{w}_i = 111$ (truthfully), and agent $j$ misreports $\tilde{w}_j = 010$. All remaining agents $q \in C \setminus \{i,j\}$ misreport $\tilde{w}_q = 100$. Let $\tilde{n}_w:=|\{i \in [n] | \tilde{w}_i = w\}$ represent the number of agents in the manipulated pool with each vector $w\in \mathcal{W}$. Then, our manipulated pool composition is as follows:
    \begin{center}$\tilde{n}_{000} = \tilde{n}_{110} = (n - n_{min})/2, \qquad \tilde{n}_{111} = n_{min}-c+1, \qquad \tilde{n}_{100} = c - 2, \qquad \tilde{n}_{010} = 1.$\end{center}

    Let $\tilde{K}$ denote the set of valid panels in instance $\manipinst$. Observe that $\tilde{K}$ contains exactly two valid panel types (where both can be realized so long as $k \geq 6$): 
    \begin{itemize}
        \item \textbf{Type 1} contains 2 agents with vector $111$, $k/2-1$ agents with vector $000$, and $k/2-1$ agents with vector $110$. 
        \item \textbf{Type 2} contains 2 agents with vector $111$, $1$ agent with vector $010$, $1$ agent with vector $100$, $k/2-2$ agents with vectors $000$, and $k/2-2$ agents with vector $110$.
    \end{itemize}
    Now, fix some $\mathbf{d} \in \Delta(\tilde{\mathcal{K}})$ with associated probability allocation $\pivec \in \Pi(\manipinst)$ and vector-indexed probabilities $\boldsymbol{\textsf{p}}$. First, because all valid panels contain exactly 2 agents with vector $111$, it must be that $\textsf{p}_{111} = 2/(n_{min} - c + 1)$. Now deriving the other vector-indexed probabilities, let $d_1,d_2$ be the probabilities placed by $\pdist$ on panels of types 1 and 2, respectively. By definition of the panel types,
\begin{align} 
    \vecprobs_{000} = \vecprobs_{110} = d_1 \frac{k/2-1}{(n-n_{min})/2} + d_2 \frac{k/2-2}{(n-n_{min})/2}, \qquad \vecprobs_{100} = d_2 \frac{1}{c-2}, \qquad \vecprobs_{010} = d_2. 
\end{align}
Noting that $d_1 = (1-d_2)$ and simplifying, we get that
\begin{align} 
    \vecprobs_{000} = \vecprobs_{110} = \frac{k/2 - 1 - d_2}{(n-n_{min})/2}, \quad \vecprobs_{100} = d_2 \frac{1}{c-2}, \quad \vecprobs_{010} = d_2, \quad  \vecprobs_{111} = \frac{2}{n_{min} - c + 1}.  \label{selprobs_inst2}
\end{align}
It follows that $\textsf{p}_{010} = (c-2)\textsf{p}_{110}$. To conclude the proof, note that if $\min(\tilde{\pivec}) \geq z$ and consequently 
$\textsf{p}_{010} \geq z$, then $\max(\tilde{\pivec}) \geq (c-2)\textsf{p}_{010} \geq (c-2)z$. Given also the value of \textsf{p}$_{111}$, we conclude that $\min(\tilde{\pivec}) \geq z \implies \max(\tilde{\pivec}) \geq \max\{2/(n_{min} - c + 1),(c-2)z\})$, as needed.
\end{proof}

\subsection{Notation and preliminaries for proofs of \Cref{lem:feasibility-1/n} and \Cref{thm:UB-tradeoff}}
In this proof, we will work exclusively with feature-vector indexed objects, which treat individuals with the same feature vector as interchangeable (this is without loss of generality because, by \Cref{prop:anon}, all objectives we consider are anonymous). To begin, we will define these objects, which collapse all individuals of the same feature vector.\\[-0.5em] 

\noindent \textit{\textbf{Pool and panel compositions}}: For panel $K$, we let its panel \textit{composition} $\panelcomp(K) \in [0,1]^{|\mathcal{W}|}$ describe the frequencies of each feature vector on a panel, with $w$-th entry 
\[\panelcomp_w(K) = \frac{|\{i : i\in K \land w_i = w\}|}{|K|} \ \  \text{and} \ \  \panelcomp(K):= (\panelcomp_w(K) | w \in \mathcal{W}).\] 
We say that $\panelcomp$ \textit{contains} vector $w$ iff $\panelcomp_w > 0$.\\[-0.5em] 

We define a \textit{\textbf{pool composition}} $\poolcomp(N) \in [0,1]^{|\mathcal{W}|}$ analogously, so the pool composition of $N$ is given by \[\poolcomp(N):= (\poolcomp_w(N) | w \in \mathcal{W}) \ \  \text{where} \ \ \poolcomp_w(N) = \frac{|\{i : i\in N \land w_i = w\}|}{|N|}.\] When $N$ or $K$ is clear from context, or when referring to an arbitrary pool or panel composition, we will simply use $\panelcomp$ or $\poolcomp$ respectively. \\[-0.5em]

\noindent Let the \textbf{set of \textit{valid} panel compositions} be
\[\mathfrak{K}(\mathcal{K}):=\{\panelcomp(K) | K \in \mathcal{K}\}.\]
When $\mathcal{K}$ is clear, we will shorten this to $\mathfrak{K}$.\\[-0.5em] 

\noindent Then, a \textbf{\textit{panel composition distribution}} is then any distribution over the set of valid panel compositions; that is, $\textsf{d} \in \Delta(\mathfrak{K})$.\\[-0.5em]  


\noindent \textbf{\textit{Vector-indexed total probabilities}:} Finally, for a given panel composition distribution $\textsf{d}$ we define the \textit{total probabilities} given to each vector $\textsf{t}(\textsf{d}) \in [0,k]^{|\mathcal{W}|}$ as \[\textsf{t}_w(\textsf{d}) := \sum_{\panelcomp \in \mathfrak{K}} \textsf{d}_{\panelcomp} \cdot k \cdot \panelcomp_{w}\qquad \text{and} \qquad \textsf{t}(\textsf{d}) = (\textsf{t}_w(\textsf{d}) | w \in \mathcal{W}).\] 
Notice that we can just as easily define these totals for $\vecprobs$ as $\textsf{t}_w(\vecprobs) = N_w \cdot \vecprobs_w$ \emdash abusing notation, we will allow this.

Before proceeding,
we prove the following two lemmas, which show how to reconstruct a panel distribution from a panel composition distribution while preserving the vector-indexed total probabilities and vice versa.
\begin{lemma} \label{lem:comp-to-dist}
   Fix a panel composition distribution $\textsf{d}$. We will now show how to construct a corresponding panel distribution $\pdist$ such that 
  $\pivec(\pdist)$ is anonymous with 
   \[\pi_i(\mathbf{d}) = \frac{\textsf{t}_w(\textsf{d})}{N_w} \quad \text{for all }i : w_i = w, \text{ all }w \in \mathcal{W}.\]
\end{lemma}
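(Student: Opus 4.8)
The plan is to build $\pdist$ by ``disaggregating'' $\textsf{d}$ one composition at a time: for each valid panel composition $\panelcomp$ in the support of $\textsf{d}$, I would spread its mass $\textsf{d}_\panelcomp$ \emph{uniformly} over all valid panels $K$ with $\panelcomp(K) = \panelcomp$. Symmetry of this uniform spreading is exactly what will force $\pivec(\pdist)$ to be anonymous and to hit the prescribed marginals at once. Concretely, for $K \in \mathcal{K}$ set $d_K := \textsf{d}_{\panelcomp(K)} / |\mathcal{K}_{\panelcomp(K)}|$, where $\mathcal{K}_\panelcomp := \{K \in \mathcal{K} : \panelcomp(K) = \panelcomp\}$; note $\mathcal{K}_{\panelcomp(K)} \ni K$ so the denominator is nonzero. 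Since the sets $\mathcal{K}_\panelcomp$ (as $\panelcomp$ ranges over $\mathfrak{K}$) partition $\mathcal{K}$ and $\sum_{K \in \mathcal{K}_\panelcomp} d_K = \textsf{d}_\panelcomp$, summing gives $\sum_{K \in \mathcal{K}} d_K = \sum_{\panelcomp \in \mathfrak{K}} \textsf{d}_\panelcomp = 1$, so $\pdist \in \Delta(\mathcal{K})$.

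The one fact that needs a genuine (if short) argument is a structural description of $\mathcal{K}_\panelcomp$: I claim it equals the set of \emph{all} $K \subseteq [n]$ containing exactly $k\panelcomp_w$ agents with feature vector $w$ for every $w \in \mathcal{W}$, and hence factors as a product, over $w$, of the $k\panelcomp_w$-subsets of the $N_w$ agents with vector $w$, so $|\mathcal{K}_\panelcomp| = \prod_{w \in \mathcal{W}}\binom{N_w}{k\panelcomp_w} \geq 1$ (the inequality because $\panelcomp \in \mathfrak{K}$ is witnessed by at least one valid panel). The key observation is that validity of a panel depends only on its composition: $|K| = \sum_w k\panelcomp_w(K) = k$ holds automatically, and for each $(f,v) \in FV$ the quota count $|\{i \in K : f(i) = v\}|$ equals the sum of $k\panelcomp_w(K)$ over those feature vectors $w$ whose value for $f$ is $v$, which is again a function of $\panelcomp(K)$ alone. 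Hence swapping any two agents sharing a feature vector sends valid panels to valid panels, so every panel whose composition lies in $\mathfrak{K}$ is itself valid, and $\mathcal{K}_\panelcomp$ is as described.

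With this in hand the computation is routine bookkeeping. Fix $i$ with $w_i = w$ and a composition $\panelcomp$ in the support of $\textsf{d}$. By the product structure, the number of $K \in \mathcal{K}_\panelcomp$ with $i \in K$ is $\binom{N_w - 1}{k\panelcomp_w - 1}\prod_{w' \neq w}\binom{N_{w'}}{k\panelcomp_{w'}}$, i.e.\ a $\binom{N_w-1}{k\panelcomp_w-1}/\binom{N_w}{k\panelcomp_w} = k\panelcomp_w / N_w$ fraction of $\mathcal{K}_\panelcomp$ (for $\panelcomp_w = 0$ this fraction is $0$, consistent with $i$ lying on none of these panels; here $\binom{N_w}{0} = 1$). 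Summing the contributions across all compositions, $\pi_i(\pdist) = \sum_{K \ni i} d_K = \sum_{\panelcomp \in \mathfrak{K}} \textsf{d}_\panelcomp \cdot \tfrac{k\panelcomp_w}{N_w} = \tfrac{1}{N_w}\sum_{\panelcomp \in \mathfrak{K}} \textsf{d}_\panelcomp \, k \panelcomp_w = \tfrac{\textsf{t}_w(\textsf{d})}{N_w}$, using the definition $\textsf{t}_w(\textsf{d}) = \sum_{\panelcomp \in \mathfrak{K}} \textsf{d}_\panelcomp \cdot k \cdot \panelcomp_w$. Since this value depends only on $w_i$, $\pivec(\pdist)$ is anonymous, which finishes the argument.

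I would flag the structural claim about $\mathcal{K}_\panelcomp$ (validity is a composition-level property, so $\mathcal{K}_\panelcomp$ is a full product of subsets) as the only real content; everything after it is manipulation of binomial coefficients. The only corner cases to watch are feature vectors $w$ with $\panelcomp_w = 0$ and the basic sanity checks that $k\panelcomp_w$ is always a nonnegative integer with $\sum_w k\panelcomp_w = k$ — both immediate from the definition $\panelcomp_w(K) = |\{i \in K : w_i = w\}|/k$.
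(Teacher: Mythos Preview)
Your proof is correct and takes a genuinely different route from the paper's. The paper constructs $\pdist$ by a round-robin procedure: for each composition $\panelcomp$ in the support of $\textsf{d}$, it takes $L = \mathrm{lcm}(N_w : \panelcomp_w > 0)$ many panels of composition $\panelcomp$, fills the $k\panelcomp_w$ seats for each vector $w$ by cycling through the $N_w$ agents of that vector, and places mass $\textsf{d}_\panelcomp / L$ on each resulting panel; the LCM guarantees each agent of vector $w$ lands on exactly $L \cdot k\panelcomp_w / N_w$ of these panels, so $\pi_i = \textsf{t}_w(\textsf{d})/N_w$ follows by direct count. Your construction instead spreads $\textsf{d}_\panelcomp$ \emph{uniformly over all} valid panels of composition $\panelcomp$ and extracts the marginal $k\panelcomp_w/N_w$ from the binomial identity $\binom{N_w-1}{k\panelcomp_w-1}/\binom{N_w}{k\panelcomp_w}$, resting on the (correctly isolated) structural observation that validity depends only on composition. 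Your argument is cleaner and makes the symmetry doing the work explicit; the paper's version has the minor practical advantage of producing a distribution with much smaller support (at most $L$ panels per composition rather than $\prod_w \binom{N_w}{k\panelcomp_w}$), which matters if one cares about algorithmic construction, though the lemma as stated is purely existential.
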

\begin{proof}        
   Fix $\textsf{d}$. We will construct $\pdist$ via the following algorithm. 
   
   Initialize $\pdist \leftarrow \mathbf{0}^{|\mathcal{K}|}$. 

    For all panel compositions $\panelcomp \in \mathfrak{K}$ such that $\textsf{d}_\panelcomp > 0$, do the following: 
    \begin{quote}
        Let $W_{\panelcomp}:=\{w : \panelcomp_w > 0\}$ be the set of all feature vectors contained by $\panelcomp$. Then, let $L$ be the least common multiple of $N_w | w \in W_{\panelcomp}$, i.e., the number of people in the pool with each such vector $w$. Now create $L$ panels $K_1^{(\panelcomp)} \dots K_L^{(\panelcomp)}$, where all these panels contain $k \cdot \panelcomp_w$ seats reserved for people of vector $w$, for each $w \in W_\panelcomp$. Populate the seats reserved for vector $w$ on each panel with individuals with vector $w$ round-robin style until all panels of individuals are constructed. Because $L$ is a multiple of $N_w$ for all $w$, each $i$ with vector $w$ will be placed on the same number of panels, and will be placed on a total of $L \cdot k \cdot \panelcomp_w / N_w$ panels. Also, note that because $\panelcomp$ was a valid panel composition, $K_1^{(\panelcomp)} \dots K_L^{(\panelcomp)}$ must be valid panels.

        Now, for each panel $K_j \in \{K_1^{(\panelcomp)} \dots K_L^{(\panelcomp)}\}$,
        $d_{K_j^{(\panelcomp)}} \leftarrow d_{K_j^{(\panelcomp)}} + \textsf{d}_\panelcomp/L$.
    \end{quote}
    Now, it just remains to prove that for all $i$ with $w_i = w$, we have that $\pi_i(\pdist) = t_w(\textsf{d})/N_w$, for all $w \in \mathcal{W}$. Fix such a $w$ and corresponding $i$ with $w_i$. Then, based on the algorithm above,
    \[\pi_i(\pdist) = \sum_{\panelcomp : \textsf{d}_{\panelcomp} > 0} L \cdot k \cdot \panelcomp_w / N_w \cdot \mathsf{d}_\panelcomp / L = \sum_{\panelcomp \in \mathfrak{K}} k \cdot \panelcomp_w \cdot \mathsf{d}_\panelcomp / N_w = \frac{\textsf{t}_w(\textsf{d})}{N_w}. \qedhere \]
\end{proof}

\begin{lemma}\label{lem:dist-to-comp}
Given a panel distribution $\pdist$, we will show how to construct a corresponding panel composition distribution $\textsf{d}$ such that 
   \[ \textsf{t}_w(\textsf{d}) = \sum_{i \in [n] \colon w_i = w} \pi_i(\mathbf{d}) \quad \text{for all } w \in \mathcal{W}.\]
\end{lemma}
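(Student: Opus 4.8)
The natural construction is the \emph{pushforward} of $\pdist$ through the composition map $K \mapsto \panelcomp(K)$: for each valid panel composition $\panelcomp \in \mathfrak{K}$, set
\[
\textsf{d}_{\panelcomp} \;:=\; \sum_{K \in \mathcal{K} \,:\, \panelcomp(K) = \panelcomp} d_K .
\]
First I would check that $\textsf{d} \in \Delta(\mathfrak{K})$: each $\textsf{d}_{\panelcomp}$ is a sum of nonnegative numbers hence nonnegative, and since the fibers $\{K \in \mathcal{K} : \panelcomp(K) = \panelcomp\}$ over $\panelcomp \in \mathfrak{K}$ partition $\mathcal{K}$, we have $\sum_{\panelcomp \in \mathfrak{K}} \textsf{d}_{\panelcomp} = \sum_{K \in \mathcal{K}} d_K = 1$. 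So $\textsf{d}$ is a legitimate panel composition distribution.

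Next I would compute $\textsf{t}_w(\textsf{d})$ and unwind the definitions. The one identity doing all the work is that for any valid panel $K$ (recall $|K| = k$),
\[
k \cdot \panelcomp_w(K) \;=\; k \cdot \frac{|\{i \in K : w_i = w\}|}{|K|} \;=\; |\{i \in K : w_i = w\}|,
\]
i.e.\ the ``$k \cdot \panelcomp_w$'' appearing in the definition of $\textsf{t}_w$ is exactly the count of vector-$w$ agents on $K$. Plugging the definition of $\textsf{d}_{\panelcomp}$ into $\textsf{t}_w(\textsf{d}) = \sum_{\panelcomp \in \mathfrak{K}} \textsf{d}_{\panelcomp}\, k\, \panelcomp_w$ and using that the fibers partition $\mathcal{K}$, I would collapse the double sum over compositions and their fibers into a single sum over $K \in \mathcal{K}$, obtaining $\textsf{t}_w(\textsf{d}) = \sum_{K \in \mathcal{K}} d_K \,|\{i \in K : w_i = w\}|$. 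Finally, swapping the order of summation (a finite sum, so no issue) gives
\[
\sum_{K \in \mathcal{K}} d_K \,|\{i \in K : w_i = w\}| \;=\; \sum_{i : w_i = w} \ \sum_{K \in \mathcal{K} : i \in K} d_K \;=\; \sum_{i : w_i = w} \pi_i(\pdist),
\]
which is precisely the claimed equality.

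There is no real obstacle here: the statement is essentially the assertion that pushing a panel distribution forward onto its compositions preserves vector-indexed total mass, which is a bookkeeping fact. The only points needing a word of care are (i) that the composition map's fibers genuinely partition $\mathcal{K}$ (so the defined $\textsf{d}$ sums to $1$ and the double-to-single-sum collapse is valid), and (ii) the $|K| = k$ normalization identity above, which is what converts panel-composition fractions back into agent counts. I would present the argument in the same style as the proof of \Cref{lem:comp-to-dist}, but it is shorter since no round-robin construction is needed.
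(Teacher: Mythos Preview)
Your proposal is correct and is essentially identical to the paper's own proof: the paper also defines $\textsf{d}$ as the pushforward of $\pdist$ along $K \mapsto \panelcomp(K)$, then unwinds $\textsf{t}_w(\textsf{d})$ via the identity $k\,\panelcomp_w(K)=|\{i\in K:w_i=w\}|$ and a swap of summation order to reach $\sum_{i:w_i=w}\pi_i(\pdist)$. If anything, your write-up is slightly more explicit about why the fibers partition $\mathcal{K}$ and hence why $\textsf{d}$ is a valid distribution.
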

\begin{proof}
    Fix our panel distribution $\pdist$. We will essentially just abstract it into a panel composition distribution.
    Initialize $\textsf{d} \leftarrow \mathbf{0}^{|\mathfrak{K}|}$. 

    For all panels $K \in \mathcal{K}$ such that $\pdist_K > 0$, update $\textsf{d}$ as follows: $\textsf{d}_{\panelcomp(K)} \leftarrow \textsf{d}_{\panelcomp(K)} + \pdist_K$. This is clearly a valid distribution because all entries are non-negative and sum to 1 because we simply distribute the probability mass of $\pdist$ across panel compositions. 

    Fix some $w \in \mathcal{W}$. Based on the algorithm above, we have that:\begin{align*}
        t_w(\mathsf{d}) &= \sum_{\textsf{K} \in \mathfrak{K}} \textsf{d}_{\textsf{K}} \cdot k \cdot \panelcomp_w = \sum_{\textsf{K} \in \mathfrak{K}} \sum_{K \in \mathcal{K} \colon \textsf{K}(K) = \textsf{K}} \pdist_K \cdot k \cdot \textsf{K}_w = \sum_{K \in \mathcal{K}} \pdist_K \cdot |\{i \colon i \in K \land w_i = w\}|\\
        &= \sum_{K \in \mathcal{K}} \sum_{i \colon i \in K \land w_i = w} \pdist_K = \sum_{i \in [n] \colon w_i = w} \sum_{K \in \mathcal{K} \colon i \in K} \pdist_K = \sum_{i \in [n] \colon w_i = w} \pi_i(\pdist)
    \end{align*}
\end{proof}

\subsection{Proof of \Cref{lem:feasibility-1/n}} \label{app:feasibility-1/n}
\begin{proof}
    Fix an instance $\inst$ satisfying \Cref{ass:inclusion}. We will construct $\pdist$ by constructing a panel \textit{composition distribution} $\textsf{d}$, and then transforming it into a panel distribution via \Cref{lem:comp-to-dist}. 
    By \Cref{ass:inclusion}, for each $w \in \mathcal{W}_{\inst}$ there must exist some panel composition $\panelcomp \in \mathfrak{K}$ such that $\panelcomp_w > 0$. Let $\panelcomp^{(w)}$ be the associated panel for each vector $w \in \mathcal{W}_{\inst}$ (these panel compositions need not be unique). Then, define $h(\panelcomp):= \sum_{w : \panelcomp^{(w)} = \panelcomp} n_w$ as the number of agents in the pool whose ``associated'' panel composition is $\panelcomp$. Now, define our panel composition distribution such that
    \[\textsf{d}_{\panelcomp} = \begin{cases}
        h(\panelcomp)/n \qquad &\text{for all }\panelcomp \in \bigcup_{w \in \mathcal{W}_{\inst}} \panelcomp^{(w)}\\
        0 &\text{else}.
    \end{cases}\]
    
    First, note that this is a valid distribution: its probabilities are trivially non-negative, and
    \[\sum_{\panelcomp \in \bigcup_{w \in \mathcal{W}_{\inst}} \panelcomp^{(w)}} h(\panelcomp)/n = \sum_{\panelcomp \in \bigcup_{w \in \mathcal{W}_{\inst}} \panelcomp^{(w)}} \sum_{w : \panelcomp^{(w)} = \panelcomp} n_w/n = \sum_{w \in \mathcal{W}_{\inst}} n_w/n = 1.\]
    By the above, for all feature vectors in the pool $w \in \mathcal{W}_{\inst}$, the total probability given to that vector group is bounded as 
    \[n_w/n \leq h(\panelcomp^{(w)}) \leq t_w(\textsf{d}) \leq k.\]

    Finally, we apply \Cref{lem:comp-to-dist} to transform $\textsf{d}$ into our panel distribution $\pdist$ such that $\pi_i(\pdist) = t_w(\textsf{d})/n_w$; it follows that for this $\pdist$,
    \[1/n \leq \pi_i(\pdist) \leq k/n_{min}(\inst) \in O(1/n_{min}(\inst)) \qquad \text{for all }i \in [n],\]
    as needed.
\end{proof}



\subsection{Proof of \Cref{thm:UB-tradeoff}} \label{app:impact-ub}
   \begin{proof} 
        Fix $\inst,C,\boldsymbol{\tilde{w}}$ as specified in the statement. By the assumption that $c$ respects \ref{eq:restriction1}, we know that $|C|=c \leq \max\{n_{min}(\inst) -k,0\}$. If $c = 0$ based on this restriction, then the claim is trivially true due to \Cref{lem:feasibility-1/n}. Thus, we henceforth assume that $\inst$ is such that $c > 0$. 
        
         Let $\mathcal{K}$ be the set of valid panels in $\mathcal{I}$ and $\tilde{\mathcal{K}}$ be the set of valid panels in $\manipinst$; likewise, let $\mathfrak{K}$ be the set of valid panel \textit{compositions} in $\inst$, and let $\tilde{\mathfrak{K}}$ be the set of valid panel \textit{compositions} in $\manipinst$. Finally, fix any $z \in (0,1/n]$. \\[-0.5em]

        \noindent \textbf{Approach.} Fix the panel distribution $\pdist \in \Delta(\mathcal{K})$ that implies selection probability assignment $\pivec \in \Pi(\inst)$ giving all agents selection probability in $[\Omega(1/n),O(1/n_{min}(\inst))]$, which we know to exist by \Cref{lem:feasibility-1/n}. Our approach will be to construct a panel distribution $\tilde{\pdist} \in \Delta(\mathcal{\tilde{K}})$ from our original panel distribution $\pdist$ with the desired properties. We will do this construction in \textit{panel composition space.} We begin with Claim 1, which characterizes the space of valid panel compositions in instance $\inst$ versus $\tilde{\inst}$.\\[-0.5em]

        \noindent \textbf{\textit{Claim 1:}} $\mathfrak{K} \subseteq \tilde{\mathfrak{K}}$  (the set of valid compositions only grows after $C$ misreports).\\[-0.5em]
        
        \noindent \textit{Proof of Claim 1.} Recall that $\mathcal{W}_{\inst}$ describes the unique feature vectors in the pool in $\inst$. By \ref{eq:restriction1}, we have that $c \leq n_{min}(\inst) -k \leq n_w(\inst) - k$ for all $w \in \mathcal{W}_{\inst}$. This implies that $n_w(\manipinst) \geq n_w(\inst) - c \geq k$ for each $w \in \mathcal{W}_{\inst}$.
        In other words, there are at least $k$ agents $i$ such that $\tilde{w}_i = w$ for all vectors present in the original instance, 
        meaning that
        \[\panelcomp \in \mathfrak{K} \implies \panelcomp \in \tilde{\mathfrak{K}}.\]  \textit{(End proof of Claim 1)}.\\[-0.5em]

        While the set of feasible panel compositions could not have \textit{shrunk} due to $C$ misreporting, it could certainly have grown, as members of $C$ may have reported vectors not present in $N$. We now partition $C$ into three mutually exclusive and exhaustive subsets:
        \begin{itemize}
            \item $C_1 \subseteq C$ contains all $i$ whose feature vector $\tilde{w}_i \notin \mathcal{W}_{\inst}$ \textit{and} is \textit{not} contained on any panel composition in $\tilde{\mathfrak{K}}$. 
            \item $C_2 \subseteq C$ contains all $i$ whose feature vector $\tilde{w}_i \notin \mathcal{W}_{\inst}$ (and therefore is not contained on any panel composition $\panelcomp \in \mathfrak{K}$), but \textit{is} contained on some panel composition $\panelcomp \in \tilde{\mathfrak{K}} \setminus \mathfrak{K}$
            \item $[n] \setminus C_1 \setminus C_2$, which contains only agents with vectors that were originally present in the original instance (i.e., appeared in $\boldsymbol{w}$), since $\mathcal{W}_{\manipinst} = \mathcal{W}_{\inst} \cup \mathcal{W}_{C_1} \cup W_{C_2}$ and all sets on the right-hand side are mutually exclusive.
        \end{itemize}        
        \noindent \textbf{Handling $\boldsymbol{C_1}$.} By our handling of structural exclusion in \Cref{sec:exclusion}, we know that when given instance $\manipinst$, the selection algorithm will ignore agents in $C_1$, meaning that our effective pool size in $\manipinst$ is $n - |C_1|$. Observe that it is still the case that $n_w(\manipinst) \geq k$ for all $w \in \mathcal{W}_{\inst}$ as before, because the number of agents $i$ with any $\tilde{w}_i \in \mathcal{W}_{\inst}$ is unaffected by dropping agents from the manipulated pool with $\tilde{w}_i \notin \mathcal{W}_{\inst}$.\\[-0.5em]

        \noindent \textbf{Construction of valid panels / panel compositions containing new vectors.} For each $i \in C_2$, identify a panel $K^{(i)} \in \tilde{\mathcal{K}}$ such that $i \in K^{(i)}$ (these panels needs not be unique). 
        Let $Z$ represent the maximum total number seats reserved for any single vector across all these panels (counting duplicates with their multiplicity):
        \[Z:=\max_{w \in W_{\manipinst}} \ \sum_{i \in C_2} k \cdot \panelcomp_w(K^{(i)}).\] 
        Note that $Z \geq 1$, because each agent $i \in C_2$ is given at least one seat on one panel $K^{(i)}$. Also note that $Z \leq k|C_2|$, as we sum over $|C_2|$ panels that can allot at most $k$ seats to any vector.

        Now, let $g \colon \bigcup_{i \in C_2} \{\panelcomp\left(K^{(i)}\right)\} \rightarrow \mathbb{N}$ map any given panel composition we have identified above to the number of agents in $C_2$ whose chosen panel has that composition. Formally, it is defined as $g(\panelcomp) = |\{i \in C_2 \colon \panelcomp(K^{(i)}) = \panelcomp\}|$. Note that grouping common panel compositions across the $K^{(i)}$ forms a partition of $|C_2|$ (in which agents with common panel composition are grouped together). Because $g(\panelcomp)$ represents the number of agents in the coalition with associated panel composition $\panelcomp$, it follows that
        \[\sum_{\panelcomp \in \bigcup_{i \in C_2} \{\panelcomp(K^{(i)})\}} g(\panelcomp) = |C_2| \qquad \text{and} \qquad g(\panelcomp) \leq |C_2| \ \ \text{for all } \panelcomp.\]


  We have that $Z \geq g(\panelcomp)$ for all $\panelcomp$ because if there are $g(\panelcomp)$ many copies of the same panel composition in the representative panels then there are at least $g(\panelcomp)$ seats reserved for any given vector on this panel composition.\\[-0.5em]
 
 \noindent \textbf{Construction of $\tilde{\textsf{d}}$.} Let $\textsf{d}$ be the panel composition distribution corresponding to $\pdist$, transformed via \Cref{lem:dist-to-comp}. We will now construct a new panel composition distribution $\tilde{\textsf{d}}$ from $\textsf{d}$. In this construction, we will add our newly feasible panel compositions constructed above to the support and redistribute some probability mass over them. 
Define panel composition distribution $\tilde{\textsf{d}}$ as follows:
        \[\tilde{\textsf{d}}_{\panelcomp}:=
        \begin{cases}
            \textsf{d}_{\panelcomp} \cdot \left(1-|C_2|z\right)  & \text{if } \panelcomp \in \mathfrak{K}\\
            z\,g(\panelcomp) &\text{if } \panelcomp \in \bigcup_{i \in C_2} \{\panelcomp(K^{(i)})\}\\
           0 &\text{else}
        \end{cases} \qquad \text{for all }\panelcomp \in \tilde{\mathfrak{K}}.\]

        \noindent \textbf{\textit{Claim 2:}} $\tilde{\textsf{d}}$ is a well-defined distribution.\\[-0.5em]

        \noindent \textit{Proof of Claim 2.} First, note that for every $\panelcomp \in \tilde{\mathfrak{K}}$, $\tilde{\textsf{d}}_\panelcomp$ is set to a single value. This is because the cases are by definition mutually exclusive: if $\panelcomp \in \mathfrak{K}$, it cannot be among the panels compositions $\panelcomp(K^{(i)}) \colon i \in C_2$ by definition. Now, we argue that all entries are in $[0,1]$. First, using that $g(\panelcomp) \leq |C_2| \leq n_{min}$ and $z \leq 1/n$, 
        \begin{align*}
            0 \leq z\, g(\panelcomp) \leq z|C_2| \leq z n_{min} \leq n_{min}/n \leq 1.
        \end{align*}


        Again using that $|C_2| \leq c \leq n_{min}$ and $z \leq 1/n$, 
        \begin{align*}
            1 \geq \textsf{d}_{\panelcomp} \geq \textsf{d}_{\panelcomp} (1- z|C_2|) \geq \textsf{d}_{\panelcomp} (1-zc) \geq \textsf{d}_{\panelcomp} (1 - n_{min}/n) \geq 0.
        \end{align*}
             Finally, all probabilities in this distribution sum to 1: 
             \begin{align*}
                \sum_{\panelcomp \in \tilde{\mathfrak{K}}} \tilde{\textsf{d}}_{\panelcomp} &= \sum_{\panelcomp \in \bigcup_{i \in C_2} \panelcomp(K^{(i)})} z\, g(\panelcomp)  + \sum_{\panelcomp \in \tilde{\mathfrak{K}}\setminus \bigcup_{i \in C_2} \panelcomp(K^{(i)})} \textsf{d}_{\panelcomp} \cdot \left(1-z|C_2|\right)\\
                &= z \sum_{\panelcomp \in \bigcup_{i \in C_2} \panelcomp(K^{(i)})} g(\panelcomp) +  \left(1-z|C_2|\right) \sum_{\panelcomp \in \tilde{\mathfrak{K}}\setminus \bigcup_{i \in C_2}\panelcomp(K^{(i)})} \textsf{d}_{\panelcomp}\\
                &= z|C_2| + \left(1-z|C_2|\right) \cdot 1\\
                &= 1
            \end{align*}
        \textit{(End proof of Claim 2).}\\[-0.5em]

        \noindent \textbf{Bounding $\textsf{t}_w(\tilde{\textsf{d}})$ for all $w \in \mathcal{W}_{C_2}$.} We begin by looking at $w \in \mathcal{W}_{C_2}$. We first make some observations about the $g(\panelcomp)$, and in particular their relationship to sets of agents: 
        \begin{enumerate}
            \item $n_w(\manipinst) \leq \sum_{\panelcomp \in \bigcup_{i \in C_2} \{\panelcomp(K^{(i)})\} \land \panelcomp_w > 0} g(\panelcomp)$ for all $w \in \mathcal{W}_{C_2}$. To see this, note that we can partition agents in $C_2$ according to the panel composition of $K^{(i)}$, the panel we identified to include them. $g(\panelcomp)$ is then exactly the number of agents who chose $\panelcomp$. Adding up over all panel compositions including vector $w$ will necessarily add 1 per person with vector $w$, since for each such person there is at least one panel composition containing them whose composition group they belong to. 
            \item By definition of $g(\panelcomp)$,  $g(\panelcomp) z = \sum_{i \in C_2: \panelcomp(K^{(i)}) = \panelcomp} z$.
        \end{enumerate}
        Now, we lower bound $t_w(\tilde{\textsf{d}})$ for all $w \in \mathcal{W}_{C_2}$. The first inequality comes from applying observation (1) above, noting that when $\panelcomp_w > 0$, $k \panelcomp_w \geq 1$. The final step uses that $Z \leq k|C_2| \leq kc$.
        \begin{align*}
     t_w(\tilde{\textsf{d}}) &=  \sum_{\panelcomp \in \bigcup_{i \in C_2}\{\panelcomp(K^{(i)})\}} z\,g(\panelcomp) \cdot k \cdot \panelcomp_w \geq z\, n_w(\manipinst)  \qquad \text{for all }w \in \mathcal{W}_{C_2}.
        \end{align*}

        Now, to upper bound $\textsf{t}_w(\tilde{\textsf{d}})$ for all $w \in \mathcal{W}_{C_2}$, we apply observation (2) above.
        \begin{align*}
            t_w(\tilde{\textsf{d}}) =  \sum_{\panelcomp \in \bigcup_{i \in C_2}\{\panelcomp(K^{(i)})\}} z\,g(\panelcomp) \cdot k \cdot \panelcomp_w &= z\sum_{\panelcomp \in \bigcup_{i \in C_2}\{\panelcomp(K^{(i)})\}}  \sum_{i \in C_2: \panelcomp(K^{(i)}) = \panelcomp} k \cdot \panelcomp_w
            \intertext{We can condense the sums: the first is over all panel compositions, and the second is over all $i \in C_2$ whose $K^{(i)}$ fits that composition; therefore, this is just}
            &= z\sum_{i \in C_2} k \cdot \panelcomp_w(K^{(i)})
            \intertext{This sum is by definition at most $Z$:}
            &\leq z Z
            \intertext{And finally using that $Z \leq kc$,}
            &\leq zkc.
        \end{align*}
    
    \noindent We conclude that for all $w \in \mathcal{W}_{C_2}$,
    \begin{equation}\label{eq:totals-boundsC2}
    \textsf{t}_w(\tilde{\textsf{d}}) \in \left[zn_w(\manipinst), \ zkc \right].
    \end{equation}

\noindent \textbf{Bounding $\textsf{t}_w(\tilde{\textsf{d}})$ for all $w \in \mathcal{W}_{\inst}$.} Now, we deduce the following bounds on $\textsf{t}_w(\tilde{\textsf{d}})$ for all $w \in \mathcal{W}_{\inst}$. The lower bound corresponds to the case where $w$ occurs on no panel compositions in $\bigcup_{\tilde{w} \in \mathcal{W}_{C_2}} \{\panelcomp^{(\tilde{w})}\}$.
Beforehand, recall our assumption that $|\mathcal{W}_{\inst}| \geq 2$, which implies directly that $n_{min}(\inst) \leq n/2$. Then, using that $z \in (0,1/n]$ and $c \leq n_{min} \leq n/2$, 
\begin{equation} \label{eq:totalprobeq}
      \textsf{t}_w(\tilde{\textsf{d}}) \geq (1-|C_2|z) \, \textsf{t}_w(\textsf{d}) \geq (1-n_{min}/n)\, \textsf{t}_w(\textsf{d}) \geq \textsf{t}_w(\textsf{d})/2 \geq n_w(\inst)/(2n) \geq  n_w(\manipinst)/(4n) .
\end{equation}
Up to the final two steps, we deduce that the total probability given to group $w$ does not decrease in order from $\textsf{d}$ to  $\tilde{\textsf{d}}$. In the penultimate step, we apply a lower bound on $\textsf{t}_w(\textsf{d})$ that comes from the fact that by \Cref{lem:feasibility-1/n}, all \textit{agents} received probability at least $1/n$ from $\textsf{d}$, which means that $\textsf{t}_w(\textsf{d}) \geq n_w(\inst)/n$. In the last step, we use that $n_w(\manipinst) \leq n_w(\inst) + c \leq 2n_w(\inst)$.
     %
The upper bound corresponds to the case where this vector occurs on all panel compositions to the maximum possible extent as captured in $Z$. Using that $t_w(\textsf{d})\leq k$, $g(\textsf{K}) \leq Z$, and $Z \leq ck$, 
\begin{align*}
 \textsf{t}_w(\tilde{\textsf{d}})\leq t_w(\textsf{d}) + zg(\textsf{K})n_w(\manipinst) \leq  \textsf{t}_w(\textsf{d}) + z Z  \, n_w(\manipinst) \leq k + z ck \, n_w(\manipinst).
    \end{align*}

We conclude that
\begin{equation} \label{eq:boundsN}
    \textsf{t}_w(\tilde{\textsf{d}}) \in \left[n_w(\manipinst)/(4n),k+kzcn_w(\manipinst)\right] \qquad \text{for all } w \in \mathcal{W}_{\inst}.
\end{equation}

\noindent \textbf{Transforming $\tilde{\textsf{d}}$ into our final panel distribution $\tilde{\pdist}$.} Now, we apply \Cref{lem:comp-to-dist} to transform our panel composition distribution $\tilde{\textsf{d}}$ into a corresponding panel distribution $\tilde{\pdist}$. This lemma shows that for the $\tilde{\pivec}$ implied by $\tilde{\pdist}$, it holds that
        \[\tilde{\pi}_i = \frac{\textsf{t}_w(\tilde{\textsf{d}})}{n_w(\manipinst)} \qquad \text{for all }i \in [n] : \tilde{w}_i = w, \text{all }w \in \mathcal{W}.\]

    \noindent First, combining \Cref{eq:totals-boundsC2} and that $n_w(\manipinst) \geq 1$ for all $w \in C_2$, we get that
    \begin{align*}
        \tilde{\pi}_i &\in \left[z, kzc\right] \subseteq [\Omega(z),O(cz)] \qquad &&\text{for all $i : \tilde{w}_i \in \mathcal{W}_{C_2}$.}
        \intertext{Likewise, combining \Cref{eq:boundsN} and that $n_w(\manipinst) \geq n_w(\inst) - c \geq n_{min}-c$ for all $w \in \mathcal{W}_{\inst}$,}
        \tilde{\pi}_i &\in \left[\frac{1}{4n}, \ \frac{k}{n_{min}-c} + kzc \right] \subseteq \left[\Omega(1/n), \max\{cz,1/(n_{min}-c)\}\right]\qquad &&\text{for all }i : \tilde{w}_i \in \mathcal{W}_{\inst}.
    \end{align*}
    Taking the union of both these ranges to bound the probabilities of all agents, we conclude the result:
    \[\tilde{\pivec} \in \left[\Omega\left(z\right),O\left(\max\{cz,1/(n_{min}-c)\}\right)\right]^n. \qedhere\]
    \end{proof}

\newpage

\section{Supplemental Materials for Section \ref{sec:gold-analysis}}

\subsection{Comments on \Cref{thm:manip-gold} upper bounds on \textsf{manip}$_{ext}$ and \textsf{manip}$_{comp}$} \label{app:tightness}

First, our upper bound on \textsf{manip}$_{ext}$ appears to be tight. To see this, suppose our truthful instance is the instance in \Cref{ex:small-group}. Let there be some coalition $C$ of size $c \in \kappa n_{min}$ for some constant $\kappa > 0$ (which is entirely possible under \ref{eq:restriction1}). Suppose every coalition member has truthful vector $0$ but misreports vector $1$, thereby joining the smallest group. They have then multiplied the group size by $(1+\kappa)$, thereby decreasing their members' probabilities from $1/n_{min}$ to $1/(n_{min} \cdot (1+\kappa))$ --- an additive decrease of the order $1/n_{min}$.

Moving onto \textsf{manip}$_{comp}$, one may notice something concerning about our upper bound: if $c \in \Omega(n^{2/3})$, this bound is \textit{constant} and could be as large as $k$, even regardless of $n_{min}$. Conceivably, then, a coalition could misappropriate the entire panel. We now unpack when this may or may not be of practical concern. 

First, it in fact may be that in practice, the number of agents willing to misreport grows as $\Omega(n^{2/3})$. It would actually make sense if the number of manipulators grew \textit{linearly} in $n$, as this should be true in expectation: if there is a constant fraction of the population willing to manipulate, they should compose that same constant fraction of the pool in expectation. While there may be many manipulators, it seems unlikely that such a large number of agents would \textit{collude} --- a key feature of our lower bounds when Problem 2 is the main issue. An approach to pursuing bounds that account for this overly pessimistic aspect of our model is discussed in \Cref{sec:discussion}.

Importantly, however, when $n_{min}$ is small and thus Problem 1 is the main issue, our upper bound's dependence on $1/n_{min}$ may actually be somewhat tight, even when $c$ is small and agents do not collude. To see this, consider again the instance in Example 1; if $c$ agents misreport as discussed above (which they could easily do independently, as this is just an example of the manipulation heuristic \textit{MU} from \Cref{sec:empirics}), all those agents will then receive selection probability $1/(n_{min}+c)$, and garner $c/(n_{min} + c)$ total expected seats originally allocated for agents with the vector 1. This negative finding is consistent with a theme of the paper: when $n_{min}$ is small, there is little any algorithm can do about manipulation in the worst case.


\subsection{Proof of \Cref{thm:lb}} \label{app:lb}
\begin{proof}
    Fix $k, n_{min}, c$ in the ranges specified in the statement. Consider the same truthful instance $\inst$ as in the proof of \Cref{thm:LB-tradeoff} (\Cref{app:LB-tradeoff}). We will construct the coalition almost identically --- indeed, it will result in \textit{almost} an identical manipulated instance $\inst^{\to_C \boldsymbol{\tilde{w}}}$ except for a net shift of two agents --- but agents will move in slightly different ways in order to maximize the manipulators' gains in probability.

    Let $C$ be defined to contain $c-2$ agents who have the truthful vector $111$, an agent $i \in C$ with vector $000$, and an agent $i' \in C$ with vector $110$. To construct $\boldsymbol{\tilde{w}}$, let $\tilde{w}_i = 111$, $\tilde{w}_{i'} = 010$. Now, for two agents $j,j' \in C$ such that $w_{j} = w_{j'} = 111$, let $\tilde{w}_j = 000$ and $\tilde{w}_{j'} = 110$. Let all remaining agents $q \in C \setminus \{i,i',j,j'\}$ misreport $\tilde{w}_q = 100$. The manipulated pool is almost exactly as it was in the proof of \Cref{thm:LB-tradeoff}, except that there are two more agents with 111 and two fewer agents with 100, where 
    \begin{center}$\tilde{n}_{000} = \tilde{n}_{110} = (n - n_{min})/2, \qquad \tilde{n}_{111} = n_{min}-c+3, \qquad \tilde{n}_{100} = c - 4, \qquad \tilde{n}_{010} = 1.$\end{center} 
    
     Then, starting from where that proof left off but accounting for our slight shifts in population, fix any $\tilde{\pivec} \in \Pi(\manipinst)$; assuming $\tilde{\pivec}$ is anonymous, by \Cref{selprobs_inst2}, we know that it must imply vector-indexed selection probabilities as follows (defining $d_1,d_2$ the same way as in the proof of \Cref{app:LB-tradeoff}):
    \begin{align*} 
    \vecprobs_{000} = \vecprobs_{110} = \frac{k/2 - 1 - d_2}{(n-n_{min})/2}, \quad \vecprobs_{100} = d_2 \frac{1}{c-4}, \quad \vecprobs_{010} = d_2, \quad  \vecprobs_{111} = \frac{2}{n_{min} - c + 3}.  
\end{align*}
To lower-bound how much is gained by the manipulation constructed here, we must upper bound the probability received by manipulators received in the original instance $\inst$. We will study two manipulators in particular, $i$ and $i'$, because they have misreported the most lucrative vectors. By the above probabilities, they receive the following probabilities post-manipulation:
\[\tilde{\pi}_i = \textsf{p}_{111} = 2/(n_{min}-c+3) \qquad \text{and} \qquad \tilde{\pi}_{i'} = \textsf{p}_{010} =  d_2.\]
Next, we upper bound $i,i'$'s probabilities in $\inst$. Fix any $\pivec \in \Pi(\inst)$. Observe that both $i,i'$  belong to the vector groups $000$ or $110$, both of which are of size $(n - n_{min})/2$. These groups can be given at most $k/2-1$ seats on any valid panel, meaning that
\[\pi_i \leq 2(k/2-1)/(n - n_{min}) \qquad \text{and} \qquad \pi_{i'} \leq 2(k/2-1)/(n - n_{min}).\]

Now, to conclude the proof, fix any $z \in (0,1/n]$, and fix any algorithm $\algo$ such that \textsf{manip-fairness}$(\inst,\textsc{A},c) \in \Omega(z)$. Note that we have argued so far about generic $\pivec \in \Pi(\inst)$ and $\tilde{\pivec} \in \Pi(\inst)$ so when we now set $\pivec = \pivec^{\algo}(\inst)$ and $\tilde{\pivec} = \pivec^{\algo}(\manipinst)$, all claims above apply to these particular probability allocations. 

We deduce that $\textsf{manip-fairness}(\inst,\textsc{A},c) \in \Omega(z) \implies \min(\tilde{\pivec}) \in \Omega(z) \implies \textsf{p}_{100} = d_2/(c-4)\in \Omega(z) \iff d_2 \in \Omega(cz).$
If $d_2 \in \Omega(cz)$, then $i$ has gained probability 
\begin{equation} \label{eq:probs-i}
    \tilde{\pi}_i- \pi_i = 2/(n_{min}-c+3) - (k-2)/(n - n_{min}) \in \Omega\left(1/(n_{min}-c)\right).
\end{equation}
where the difference is non-negative given that $k \geq 2$ and $n_{min} \leq n/k$ as is by construction.
Similarly $i'$ has gained 
\begin{equation} \label{eq:probs-i'}
    \tilde{\pi}_{i'} - \pi_{i'} = \max\{0,d_2 - (k-2)/(n - n_{min})\} \in \Omega(cz).
\end{equation}
We simplify this term to $\Omega(cz)$ because if $d_2 \in o(k/n)$ or $d_2 - (k-2)/(n - n_{min})\leq 0$, then the other term in the ultimate bound will dominate it, per the reasoning above. 

Finally, by \Cref{eq:probs-i,eq:probs-i'}, we conclude the claim:
\[\mint(\inst,\textsc{A},c) \geq \max_{j \in [n]} (\tilde{\pi}_j - \pi_j) \geq \max\{\tilde{\pi}_{i'} - \pi_{i'}, \ \tilde{\pi}_i- \pi_i\} \in \Omega(\max\{cz,1/(n_{min}-c)\}. \qedhere\]
\end{proof}

\subsection{Proof of \Cref{prop:general-opt-gold}} \label{app:general-opt-gold}
\begin{proof}
    Let $\gamma^* =  z  \cdot \max\{1/(n_{min}-c),cz\} \cdot (n/k)^2$ as in the statement. 
    Just as in the proof of \Cref{thm:manip-gold}, this claim follows directly from proving the following claim: \textit{Fix any $\inst$, any coalition $C \subseteq [n]$ with $|C| = c \leq n_{min}(\inst) - k$, any \ $\boldsymbol{\tilde{w}} \in \boldsymbol{\mathcal{W}}_{\inst,C}$, and any $z \in (0,1/n]$. Then,}
    \[\pivec^{\textsc{Goldilocks}_{\gamma^*}(\inst^{\to_C \boldsymbol{\tilde{w}}})} \in [\Omega(z),O(\max\{cz,1/(n_{min}-c\})]\]
    We will in turn prove this by proving a much more general fact about \textit{Goldilocks}$_\gamma$: it can recover 
    \textit{any achievable simultaneous setting of the maximum and minimum selection probabilities}. Formally, fixing any $\pivec \in \Pi(\inst)$, we will show that 
    \begin{equation}\label{eq:claim}
    \pivec^{\textsc{Goldilocks}_{\gamma^*}}(\inst) \in \left[\min(\pivec)/2, 2\max(\pivec) \right]^n. 
    \end{equation} The original claim then follows directly from \Cref{thm:UB-tradeoff}, which positively bounds the achievable maximum and minimum probability in any instance $\inst^{\to_C \tilde{\boldsymbol{w}}}$ subject to the conditions in the statement. 
    
    Now to show \Cref{eq:claim}, by definition of \textit{Goldilocks}$_\gamma$ and $\gamma^*$ and the optimality of $\pivec^*$,  
    \begin{center}
        $ Goldilocks_{\gamma^*}(\pivec^*) \leq Golidlocks_{\gamma^*}(\pivec) \leq \frac{\max(\pivec)}{k/n} + \gamma^* \,\frac{k/n}{\min(\pivec)} = 2\,\frac{\max(\pivec)}{k/n}.$
    \end{center}
We apply this bound to each term of $Goldilocks_{\gamma^*}(\pivec^*)$ separately to conclude the result.
\begin{align*}
    \max(\pivec^*)/(k/n) \leq 2\max(\pivec)/(k/n) &\iff \max(\pivec^*) \leq 2\max(\pivec) \\
    \min(\pivec)\cdot \max(\pivec) \cdot (n/k)^2 \cdot (k/n)/\min(\pivec^*) \leq 2\max(\pivec)/(k/n) &\iff \min(\pivec^*) \geq \min(\pivec)/2. \qedhere
\end{align*}
\end{proof}

\subsection{Proof of \Cref{thm:lb-maximin-nash}} \label{app:lb-maximin-nash}
\begin{proof}
    Fixing $k,n_{min}$ and $c$ as in the statement, we construct our truthful instance $\inst$, $C$, $\boldsymbol{\tilde{w}}$, and manipulated instance $\manipinst$, exactly as in the proof of \Cref{thm:lb} (\Cref{app:lb}). 
%
    %
    Then, starting from where that proof left off, fix any $\tilde{\pivec} \in \Pi(\manipinst)$; assuming $\tilde{\pivec}$ is anonymous, by \Cref{selprobs_inst2}, we know that it must imply vector-indexed selection probabilities as follows (defining $d_1,d_2$ the same way as in the proof of \Cref{thm:lb}):
    \begin{align*} 
    \vecprobs_{000} = \vecprobs_{110} = \frac{k/2 - 1 - d_2}{(n-n_{min})/2}, \quad \vecprobs_{100} = d_2 \frac{1}{c-4}, \quad \vecprobs_{010} = d_2, \quad  \vecprobs_{111} = \frac{2}{n_{min} - c + 3}.  
\end{align*}
Note that because this applies to all $\tilde{\pivec} \in \Pi(\manipinst)$, these constraints must apply to $\pivec^{\textsc{Leximin}(\manipinst)}$ and $\pivec^{\textsc{Nash}(\manipinst)}$. 
The question is then just how each algorithm will tune $d_1,d_2$. We derive this now, showing that all these algorithms will drive $d_2$ quickly upwards as $c$ grows in order to increase the minimum probability.\\[-0.5em]

\noindent \textsc{Leximin}.
\textsc{Leximin} maximizes the minimum selection probability (and then the next lowest, and so forth) subject to the constraints in \Cref{selprobs_inst2} as well as the constraint that $d_2 \in [0,1]$. In its first round, \textsc{Leximin} will optimize the following objective, corresponding to maximizing the minimum:
\[\min\left\{\vecprobs_{000},\vecprobs_{110},\vecprobs_{100},\vecprobs_{010},\vecprobs_{111}\right\} = \min\left\{\frac{k/2-1-d_2}{(n-n_{min})/2}, \ d_2 \frac{1}{c-4}, \ d_2, 
\ \frac{2}{n_{min} - c + 3}\right\}.\]
We can effectively ignore $\textsf{p}_{111}$, because it does not depend on $d_1,d_2$; thus, if in the first round this probability is the minimum probability (i.e., all other probabilities can be made higher), \text{Leximin} will just advance to the next round and maximize the next lowest minimum probability. Therefore, we can without loss of generality rewrite the objective dropping this term as
\[\min\left\{\vecprobs_{000},\vecprobs_{110},\vecprobs_{100},\vecprobs_{010}\right\} = \min\left\{\frac{k/2-1-d_2}{(n-n_{min})/2}, \ d_2 \frac{1}{c-4}, \ d_2\right\}.\]
$\vecprobs_{010} \geq \vecprobs_{100}$ for all $d_2 \in [0,1]$, so $\vecprobs_{010}$ cannot be the minimum probability. Thus, the minimum term in this objective must be between the first two. By observation, it holds that at the first two terms of this minimum are equal when $d_2 = \frac{(k-2)(c-4)}{n-n_{min}+2(c-4)}$, in which case $\textsf{p}_{000} = \textsf{p}_{110} =\textsf{p}_{100} = \frac{k-2}{n-n_{min}+2(c-4)}.$ Observe that either increasing or decreasing $d_2$ from this value will make the objective value worse: $\textsf{p}_{000} = \textsf{p}_{110}$ are decreasing in $d_2$, so if we \textit{increase} $d_2$ from here, $\textsf{p}_{000}$ will decrease and the minimum will decrease, making the objective value worse; $\textsf{p}_{100}$ is increasing in $d_2$, so if we \textit{decrease} $d_2$ from here, $\textsf{p}_{100}$ will decrease, again making the objective value worse. We conclude that the maximin-optimal solution sets $d_2 = \min\left\{\frac{(k-2)(c-4)}{n-n_{min}+2(c-4)},1\right\}$, meaning that $\pivec^{\textsc{Leximin}(\manipinst)}$ implies vector-indexed selection probabilities such that
\[\vecprobs_{010} = \min\left\{\frac{(k-2)(c-4)}{n-n_{min}+2(c-4)},1\right\}.\]

To lower-bound how much is gained by the manipulation constructed here, we must upper bound the probability received by manipulators received in the original instance $\inst$. We will be interested in two manipulators in particular, $i$ and $i'$, because they have misreported the most lucrative vectors, thus receiving lower-bounded probabilities post-manipulation:
\[\pi^{\leximin(\inst^{\to_C \boldsymbol{\tilde{w}}})}_i = \textsf{p}_{111} = \frac{2}{n_{min}-c+3} \ \ \text{and} \ \ \pi^{\leximin(\inst^{\to_C \boldsymbol{\tilde{w}}})}_{i'} = \textsf{p}_{010} =  \min\left\{\frac{(k-2)(c-4)}{n-n_{min}+2(c-4)},1\right\}.\]
To upper bound these agents' probabilities in $\inst$, we note that $w_i = 000$ and and $w_{i'} = 110$, meaning that both belong to groups of size $(n-n_{min})/2$. These groups can be given at most $k/2-1$ seats on any valid panel, meaning that
\[\pi^{\leximin}(\inst)_i \leq 2(k/2-1)/(n-n_{min}), \qquad \text{and} \qquad \pi^{\leximin}(\inst)_{i'} \leq 2(k/2-1)/(n-n_{min}).\] 
Therefore, $i'$ has gained 
\[\pi^{\leximin(\inst^{\to_C \boldsymbol{\tilde{w}}})}_{i'} - \pi^{\leximin}(\inst)_{i'} = \min\left\{\frac{(k-2)(c-4)}{n-n_{min}+2(c-4)},1\right\} -  \frac{k-2}{n-n_{min}} \in \Omega(\min\{c/n,1\}).\]
By a few rearrangements of inequalities, the difference above is nonnegative and of the order specified for $n_{min} \leq n/2, k \geq 6$, and $c \geq 6$, as is true by construction. Note that $c \in O(n)$ trivially, which means that $\Omega(\min\{c/n,1\})$ can be rewritten as simply $\Omega(c/n).$

Similarly, $i$ has gained probability 
\[\pi^{\leximin(\inst^{\to_C \boldsymbol{\tilde{w}}})}_i - \pi^{\leximin}(\inst)_i \geq  \frac{2}{n_{min}-c+3} - \frac{k-2}{n-n_{min}} \in \Omega\left(\frac{1}{n_{min}-c}\right).\]
We omit the technicality that this difference could be negative; in the corner case that it is, this term will simply be dominated by the other term in the ultimate lower bound, which is non-negative by construction as shown above.

This concludes the proof, implying that 
    $\mint(\inst,\textsc{Leximin},c) \in \Omega(\max\left\{1/(n_{min}-c),c/n\right\}.$



\vspace{1em}
\noindent \textsc{Nash}. We analyze this algorithm in much the same way. We will equivalently optimize the logarithm of $\nash$, defined as $\sum_{i \in [n]}\log(\pi_i).$ The same constraints hold, but we are instead optimizing the \textit{product} of probabilities. Thus, \textsc{Nash} optimizes the following objective, noting the symmetry between $\vecprobs_{000}$ and $\vecprobs_{110}$:
\[ (n-n_{min})\log(\vecprobs_{000})+ (c-4)\log(\vecprobs_{100}) + \log(\vecprobs_{010}) + (n_{min}-c+3)\log(\vecprobs_{111})\]
which, by \Cref{selprobs_inst2} is equal to 
\begin{equation} \label{nash-obj}
    (n-n_{min})\log\left(\frac{k/2 - 1 - d_2}{(n-n_{min})/2}\right) + (c-4) \log\left(d_2 \frac{1}{c-4}\right) + \log (d_2) + \log\left(\frac{2}{n_{min}-c+3}\right).
\end{equation}
Differentiating \Cref{nash-obj} with respect to $d_2$, we get 
\begin{align*}
    \frac{\partial \, \text{\Cref{nash-obj}}}{\partial \, d_2} = \frac{c-3}{d_2} + \frac{2 (n - n_{min})}{2 + 2 d_2 - k} = -\frac{2d_2(n-n_{min}+c-3) - (c-3)(k-2)}{d_2(k-2-2d_2)}
\end{align*}
This derivative is 0 at $d_2 = \frac{(c-3)(k-2)}{2(n-n_{min}+c-3)}$. By the concavity of the original objective function, this value of $d_2$ must be its unique global optimizer. Now, letting $\textsf{p}$ be the vector-indexed probabilities induced by $\pivec^{\textsc{Nash}(\manipinst)}$, 
\[\vecprobs_{010} = \min\left\{\frac{(c-3)(k-2)}{2(n-n_{min}+c-3)},1\right\} \in \Omega(\min\{c/n,1\}).\]
By the same reasoning as used to conclude our analysis of \textsc{Leximin}, we conclude that
\[\mint(\inst,\textsc{Nash},c) \in \Omega(\max\left\{1/(n_{min}-c),c/n\right\}. \qedhere\]
\end{proof}

\subsection{Proof of \Cref{thm:lb-minimax}} \label{app:minimax}
\begin{proof}
We defer this proof to its strict generalization in \Cref{app:linear}, where the relevant case of that proof is $\gamma = 0$.
\end{proof}

\subsection{Lower bound for $\boldsymbol{\max(\pivec) - \gamma \cdot \min(\pivec)}$} \label{app:linear}
We henceforth call this objective \textsc{Linear}$_\gamma \coloneqq \max(\pivec) - \gamma \cdot \min(\pivec)$. Because this proof is so complex, we analyze the specific (and more interesting) sub-case where Problem 2 is the main issue, i.e., $n_{min}(\inst), n_{min}(\inst) - c \in \Theta(n)$. These lower bounds will therefore not be tight in their dependency on $n_{min}$. Because Problem 2 is where algorithms are differentiated, this proof still allows for useful comparison of algorithms. For comparability, we will ensure the coalitions we use respect \ref{eq:restriction1}.

This result shows a theoretical separation between \textsc{Linear}$_\gamma$ and \textsc{Goldilocks}$_\gamma$. To see this, note that while \textsc{Goldilocks}$_\gamma$ can strike any trade-off for appropriately chosen $\gamma$ (\Cref{prop:general-opt-gold}), there are certain, very important trade-offs that \textsc{Linear}$_\gamma$ cannot strike. For example, per \Cref{thm:linear}, there is \textit{no setting} of $\gamma$ that can strike the trade-off with $z = 1/(n\sqrt{c})$, as is achieved by \textsc{Goldilocks}$_1$ (a desirable trade-off, as it places the multiplicative probability gap due to Problem 2 symmetrically over $k/n$).
\begin{proposition}[\textbf{Lower Bound}] \label{thm:linear}
        For all $\gamma \in [0,1)$, there exists $\inst$ satisfying \Cref{ass:inclusion} and $c \leq n_{min}(\inst) - k$ such that
        \begin{center}
            \upshape{\textsf{fairness}}$(\inst,\linear_\gamma) = 0$.
        \end{center}
        For all $\gamma \in [1,n/(3k)-1)$, there exists an instance $\inst'$ satisfying \Cref{ass:inclusion} in which
        \begin{center}
            \upshape{\textsf{manip-fairness}}$(\inst',\linear_\gamma,n/6-k) \in O(1/n^2)$.
        \end{center}
        For all $\gamma \in [n/(3k)-1,\infty)$, there exists an instance $\inst'$ satisfying \Cref{ass:inclusion} such that 
        \begin{center}
            \upshape{\textsf{manip}}$(\inst',\linear_\gamma,n/6-k) \in \Omega(1)$.
        \end{center}
\end{proposition}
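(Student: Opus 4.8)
The plan is to dispatch the three parameter regimes with two explicit instance families, in each case collapsing the $\linear_\gamma$-optimization to a one-variable problem. For $\gamma\in[0,1)$ I would reuse the two-feature instance already used (in deferred form) for \Cref{thm:lb-minimax}: features $f_1,f_2$, pool $n_{00}=n_{11}=n/3$, $n_{01}=n_{10}=n/6$, quotas $\ell_{f_1,0}=u_{f_1,0}=2k/3$ and $\ell_{f_2,0}=u_{f_2,0}=k/3$. As derived there, every valid panel satisfies $k_{01}=k_{10}+k/3$ and $k_{00}=k_{11}=k/3-k_{10}$, so by anonymity (\Cref{ass:inclusion} gives feasibility, and $\linear_\gamma$ is anonymous) every relevant $\pivec\in\Pi(\inst)$ is described by one scalar $\bar y:=\sum_y d_y\,y\in[0,k/3]$, with $\textsf{p}_{10}=6\bar y/n$, $\textsf{p}_{01}=6\bar y/n+2k/n$, $\textsf{p}_{00}=\textsf{p}_{11}=(k-3\bar y)/n$. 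Then $\max(\pivec)=\textsf{p}_{01}$ always and $\min(\pivec)=\textsf{p}_{10}$ for $\bar y\le k/9$. On $[0,k/9]$ one gets $\linear_\gamma(\pivec)=2k/n+(6\bar y/n)(1-\gamma)$, strictly increasing when $\gamma<1$, and a one-line check shows the objective on $[k/9,k/3]$ is bounded below by its value at $\bar y=0$. Hence the $\linear_\gamma$-optimum is the unique $\bar y=0$, where $\textsf{p}_{10}=0$, so $\textsf{fairness}(\inst,\linear_\gamma)=0$ (and $\gamma=0$ recovers \Cref{thm:lb-minimax}).

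For $\gamma\ge 1$ I would use a three-feature instance together with a manipulating coalition exactly as in the proofs of \Cref{thm:lb} and \Cref{thm:lb-maximin-nash}: $\mathcal{W}_{\inst}=\{000,110,111\}$ with $n_{111}=n_{min}\in\Theta(n)$, balance quotas on $f_1,f_2$ and a $(k-2,2)$ split on $f_3$; a coalition $C$ of size $c=n/6-k$ (chosen with $n_{min}\ge n/6$ so \ref{eq:restriction1} holds, and $n_{min}-c\in\Theta(n)$) in which one member $j$ defects to the fresh vector $010$ and the rest defect to $100$. The manipulated instance $\manipinst$ has exactly two valid panel types, so every anonymous $\tilde\pivec$ is parametrized by the mass $d_2\in[0,1]$ on the ``trade-off'' type, with $\textsf{p}_{010}=d_2$, $\textsf{p}_{100}=d_2/c'$ where $c'=\Theta(n)$, $\textsf{p}_{111}=2/(n_{min}-c+O(1))\in\Theta(1/n)$, and $\textsf{p}_{000}=\textsf{p}_{110}=(k/2-1-d_2)/((n-n_{min})/2)\in\Theta(1/n)$. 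Checking which group attains the extremes shows $\min(\tilde\pivec)=\textsf{p}_{100}=d_2/c'$ near the optimum and $\max(\tilde\pivec)=\max\{d_2,\Theta(1/n)\}$, so $\linear_\gamma(\tilde\pivec)$ is piecewise linear in $d_2$: strictly decreasing on $[0,d_2^\star]$ (where $d_2^\star\in\Theta(1/n)$ is the breakpoint at which the quota floor on the max stops binding) and of slope $1-\gamma/c'$ on $[d_2^\star,1]$; the quota constants are picked so that $c'=n/(3k)-1$. If $\gamma<n/(3k)-1$ the second piece is increasing, so the unique minimizer is $d_2=d_2^\star\in\Theta(1/n)$, giving $\textsf{fairness}(\manipinst,\linear_\gamma)=\min(\tilde\pivec)=d_2^\star/c'\in\Theta(1/n^2)$; since $\textsf{manip-fairness}$ is a minimum over coalitions, $\textsf{manip-fairness}(\inst',\linear_\gamma,n/6-k)\in O(1/n^2)$ (and \Cref{thm:UB-tradeoff} with $z=1/n$ confirms a $\min=\Omega(1/n)$ solution exists, so this is a genuine deficiency of $\linear_\gamma$). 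If $\gamma\ge n/(3k)-1$ the objective is non-increasing on $[d_2^\star,1]$, so the optimizer is $d_2=1$ (or, should the minimum group switch from $\textsf{p}_{100}$ to $\textsf{p}_{000}$ first, the switch point, which is still $\Theta(1)$); there the lone defector $j$, who had $\pi_j(\inst)=2/n_{min}\in\Theta(1/n)$, receives $\textsf{p}_{010}=\Omega(1)$, so $\mint(\inst',\linear_\gamma,n/6-k)\in\Omega(1)$.

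The main obstacle is the construction in the $\gamma\ge 1$ case: the three-feature instance must simultaneously be a legal truthful instance with no structural exclusion and no inherent Problem 1 or 2 (so the weakness is attributable to $\linear_\gamma$, not the instance), admit a coalition of size exactly $n/6-k$ obeying \ref{eq:restriction1}, and have quota values making the linked-fate coefficient $c'$ equal exactly $n/(3k)-1$ so the regime transition lands where claimed — these three desiderata interact and pin down the instance. The secondary, fiddly step is the case analysis verifying that, over the relevant range of $d_2$, the maximum and minimum are realized by the groups claimed (in particular $\textsf{p}_{000}$ never drops below $\textsf{p}_{100}$ before a $\Theta(1)$-sized $d_2$, and $\textsf{p}_{010}$ dominates once $d_2>d_2^\star$), together with handling the exact boundary $\gamma=n/(3k)-1$, where the second linear piece is flat and one must argue the realized probabilities still place a group at $\Omega(1)$.
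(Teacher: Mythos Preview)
Your Case~1 construction and analysis for $\gamma\in[0,1)$ are correct and essentially identical to the paper's.

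For $\gamma\ge 1$, however, there is a real gap. The statement asserts that the regime transition occurs exactly at $\gamma=n/(3k)-1$, and this threshold must arise from the linked-fate ratio $c'$ in your manipulated instance. In your three-feature construction (borrowed from \Cref{thm:lb}), one coalition member defects to $010$ and the remaining $c-1$ defect to $100$; the resulting ratio is $\textsf{p}_{010}/\textsf{p}_{100}=c-1=n/6-k-1\in\Theta(n)$, not $n/(3k)-1\in\Theta(n/k)$. Your sentence ``the quota constants are picked so that $c'=n/(3k)-1$'' cannot be made to work: in that construction $c'$ is the size of the $100$ group, which is fixed by the coalition's behavior, not by any quota parameter. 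So for non-constant $k$ your instance gives the wrong transition point, and neither the $O(1/n^2)$ fairness bound nor the $\Omega(1)$ manipulation bound lands in the stated $\gamma$-range.

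The paper's construction for $\gamma\ge 1$ is simpler and hits the threshold exactly: it keeps two binary features and the same pool composition as Case~1, but with \emph{balanced} quotas $k/2$ on both features; the coalition of size $n/6-k$ consists of agents with true vector $01$ who all misreport $10$. After manipulation $\tilde n_{01}=k$ and $\tilde n_{10}=n/3-k$, so $\textsf{p}_{01}=d_2/k$ and $\textsf{p}_{10}=d_2/(n/3-k)$, and the slope of $\linear_\gamma$ in $d_2$ is $1/k-\gamma/(n/3-k)$, whose sign flips precisely at $\gamma=(n/3-k)/k=n/(3k)-1$. The key device you are missing is to engineer the small linked-fate group to have size $k$ (not $1$), which is what makes the threshold $k$-dependent.
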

\begin{proof}
    
\textbf{Truthful instance.} All truthful instances we consider in this proof will have two binary features: $F = (f_1, f_2)$ with $V_{f_1} = V_{f_2} = \{0, 1\}$, so $\mathcal{W}=\{00,11,01,10\}$. Our truthful pool $[n]$ will have the following composition: $n_{00} =n_{11} = n/3$, $n_{01} = n_{10} =n/6$. Our truthful instance will have quotas that depend on the case:
\begin{itemize}
\itemsep0em
    \item $0\leq \gamma < 1$: $\inst$ will have quotas $\ell_{f_1, 0} = u_{f_1, 0} = 2k/3$ and $\ell_{f_2, 0} = u_{f_2, 0} = k/3$. 
    \item $\gamma \geq 1$: $\inst'$ will have quotas $\ell_{f_1, 0} = u_{f_1, 0} = k/2$ and $\ell_{f_2, 0} = u_{f_2, 0} = k/2$. 
\end{itemize}

\textbf{Coalitions.} What coalitions deviate from our truthful instance also depends on the case.
\begin{itemize}
    \item $0\leq \gamma < 1$: There is no coalition; in this case, we directly analyze $\inst$, because we are analyzing the outcome of \textsf{fairness}, which is measured in the absence of any manipulation.
    \item $\gamma \geq 1$: In $\inst'$, we let $C \subseteq [n]$ be of size $n/6 - k$, where for all $i \in C,  w_i = 01$. Let all $n/6-k$ agents $i \in C$ misreport $\tilde{w}_i=10$.
\end{itemize}

\textbf{Manipulated Instances.} The pools resulting from these coalitional manipulations are
     \begin{itemize}
    \item $0\leq \gamma < 1$: Not applicable
    \item $\gamma \geq 1$: In the resulting instance, $\tilde{n}_{00}(\inst'^{\to_C\boldsymbol{\tilde{w}}})=\tilde{n}_{11}(\inst'^{\to_C\boldsymbol{\tilde{w}}})= n/3$, $\tilde{n}_{10}(\inst'^{\to_C\boldsymbol{\tilde{w}}}) = n/3 -k$, and $\tilde{n}_{01}(\inst'^{\to_C\boldsymbol{\tilde{w}}}) = k$. 
\end{itemize}

Now, we handle each case separately. For convenience, we will first analyze the Case 1 instance in the relaxation of our setting studied in \citet{flanigan2024manipulation}, where they study the same panel selection task but permit agents to be \textit{divisible}. We call this setting the \textit{continuous setting}. Formally speaking, in instance $\mathcal{I} = (N,k,\boldsymbol{\ell}, \boldsymbol{u})$ such that $\boldsymbol{\ell}, \boldsymbol{u})$, the set of feasible selection probability assignments over which $\mathcal{E}(\pivec)$ could be optimized was
\[P(\inst) = \left\{\pivec : \pivec \in [0,1]^n \ \land \  \sum_{i \in [n]} \pi_i = k \ \land \ \sum_{i \in [n]: f(i) = v} \pi_i = ku_{f,v} \ \forall f,v\in FV\right\}.\]
We analogously define $P^\mathcal{E}(\inst):=\text{arg}\inf_{\pi \in P(\inst)} \mathcal{E}(\inst)$ as the set of $\mathcal{E}$-optimal selection probability assignments over the feasible space $P(\inst)$. Now we proceed with giving constructions in the continuous setting. At the end, we will prove a general method for translating lower bounds in the continuous setting to our setting.

\subsubsection{\textbf{Case 1:} $0\leq \gamma < 1$.} 

\begin{claim}
    $\inst$ satisfies \Cref{ass:inclusion}.
\end{claim}
\begin{proof}
    Consider all panels containing $k/2$ agents with vector $01$, and $k/6$ agents of each remaining vector. Panels of this composition satisfy the quotas, and all agents can be contained on such a panel.
\end{proof}

\begin{claim}
    For all $\gamma \in [0,1)$, $\min(\pivec^{\linear_\gamma}) = 0.$
\end{claim}
\begin{proof}
    Note that we only need one quota constraint per feature because the constraint for one value implies the constraint for the other. Our constraints are then: \begin{align}
    \vecprobs_{01} \cdot \frac{n}{6} + \vecprobs_{10} \cdot \frac{n}{6} + \vecprobs_{00} \cdot \frac{n}{3} + \vecprobs_{11} \cdot \frac{n}{3} &= k \label{case0:c0}\\
         \vecprobs_{01} \cdot \frac{n}{6} + \vecprobs_{00} \cdot \frac{n}{3} &= \frac{2k}{3} \label{case0:c1}\\
                 \vecprobs_{01} \cdot \frac{n}{6} + \vecprobs_{11} \cdot \frac{n}{3} &= \frac{2k}{3} \label{case0:c2}
    \end{align}
             \textbf{Showing that $\vecprobs_{10} =0$ in the optimizer.} By constraints \ref{case0:c1} and \ref{case0:c2} we see that $\vecprobs_{00} = \vecprobs_{11} = k/n - \vecprobs_{10}/2$. Plugging this back into constraint \ref{case0:c0}, we can solve for $\vecprobs_{01}$ and get that
             \[\vecprobs_{01} = \frac{2k}{n} + \vecprobs_{10} \qquad \text{and} \qquad \vecprobs_{00} = \vecprobs_{11} = \frac{k}{n} - \frac{\vecprobs_{10}}{2}.\]
             \textit{Reducing the box constraints to constraints on $\vecprobs_{10}$}: if $\vecprobs_{10}$ is close enough to 0 (or 0), clearly all probabilities are in $[0,1]$.
             
             Now, observe that $\vecprobs_{01}$ must be larger than both $\vecprobs_{10}$ and $\vecprobs_{00}$, so it is the maximum marginal. Then, our objective is the following:
             \[\max\left\{\frac{2k}{n} + \vecprobs_{10} - \gamma \vecprobs_{10}, \ \frac{2k}{n} + \vecprobs_{10} - \gamma \left(\frac{k}{n} - \frac{\vecprobs_{10}}{2}\right)\right\}.\]
            where the two terms in the maximum account for either $\vecprobs_{10}$ or $\vecprobs_{00} = \vecprobs_{11}$ being the minimum marginal probability. By the fact that $0 \leq \gamma < 1$, both of these terms are increasing in $\vecprobs_{10}$. Thus, this is minimized when $\vecprobs_{10} =0$.\\

            We claim that this instance can be translated to our panel distribution setting. Fix the same $\inst$ and require without loss of generality that that $n$ is divisible by 6 and $k$ is divisible by $3$. We take the same definition of $N, \ell,$ and $u$. First we observe that any $\pi \in \Pi(\inst)$ is also in $P(\inst)$. Intuitively this is because  $P(\inst)$ is defined by a relaxation of the constraints defining $\Pi(\inst)$. This was shown formally in Appendix A.2 of \citet{flanigan2024manipulation}. From above, we know that $\boldsymbol{\pi^*} \in P^{\mathcal{E}}(I)$ is of the following form: $\textsf{p}_{10}(\boldsymbol{\pi^*}) = 0, \textsf{p}_{01}(\boldsymbol{\pi^*}) = \frac{2k}{n}, \textsf{p}_{00}(\boldsymbol{\pi^*}) = \textsf{p}_{11}(\boldsymbol{\pi^*}) = \frac{k}{n}$. We first construct a panel distribution, $\pdist$, to assign the same total probability to each feature vector group as $\boldsymbol{\pi^*}$. Let $K$ be a panel populated with $k/3$ agents of type $01$, $k/3$ agents of type $11$ and, $k/3$ agents of type $00$, and let $d_K = 1$. By \Cref{claim:anon_feas}, we know that we can construct a new \textit{anonymous} panel distribution $\pdist'$ with the same total probability assigned to each feature vector. Let $\boldsymbol{\pi'} = \boldsymbol{\pi}(\pdist')$. Therefore, we have that $\textsf{p}_w(\boldsymbol{\pi'}) = \frac{\sum_{i \colon w_i = w} \pi_i(\pdist)}{N_{w_i}}$ for all $i \in [n]$. Solving this gives us:\begin{align*}
                &\textsf{p}_{10}(\boldsymbol{\pi'}) = \frac{\sum_{i \colon w_i = 10} \pi_i(\pdist)}{n_{w_i}} = 0 \quad &&\textsf{p}_{01}(\boldsymbol{\pi'}) = \frac{\sum_{i \colon w_i = 01} \pi_i(\pdist)}{n_{w_i}} = \frac{k/3}{n/6} = \frac{2k}{n}\\
                &\textsf{p}_{00}(\boldsymbol{\pi'}) = \frac{\sum_{i \colon w_i = 00} \pi_i(\pdist)}{n_{w_i}} = \frac{k/3}{n/3} = \frac{k}{n}
                \quad &&\textsf{p}_{11}(\boldsymbol{\pi'}) = \frac{\sum_{i \colon w_i = 11} \pi_i(\pdist)}{n_{w_i}} = \frac{k/3}{n/3} = \frac{k}{n}\\
            \end{align*}
            Therefore, we can observe that $\textsf{p}(\boldsymbol{\pi'}) = \textsf{p}(\boldsymbol{\pi^*})$. Hence, because both $\boldsymbol{\pi'}$ and $\boldsymbol{\pi^*}$ are anonymous and on the same instance, we get that $\boldsymbol{\pi'} = \boldsymbol{\pi^*}$. As $\Pi(\inst) \subseteq P(\inst)$, and $P^{\mathcal{E}}(\inst) = \{\boldsymbol{\pi^*}\}$, we know that $\Pi^{\mathcal{E}}(\inst) = \{\boldsymbol{\pi^*}\}$ as well --- there cannot be any other optimal marginals, otherwise they would be in $P^{\mathcal{E}}(\inst)$ as well. Therefore, we also know that in the panel setting, optimizing $\linear_{\gamma}$ will set $\textsf{p}_{10} = 0$. 
\end{proof}
\end{proof}

\subsubsection{\textbf{Cases 2 and 3:} $1\leq \gamma < n/3-1$ and $\gamma \geq n/3-1$.} 
\begin{claim}
    $\inst'$ satisfies \Cref{ass:inclusion}.
\end{claim}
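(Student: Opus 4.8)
The plan is to verify \Cref{ass:inclusion} directly, i.e., to show that EX$(\inst') = \emptyset$ by exhibiting, for every agent $i \in [n]$, a valid panel that contains $i$. Since the quotas and the pool of $\inst'$ are anonymous (they depend on an agent only through its feature vector), it is enough to produce valid panels whose support over feature vectors covers all of $\mathcal{W}_{\inst'} = \{00, 11, 01, 10\}$, and to check that the pool has enough agents of each vector to fill the seats those panels require.

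First I would characterize the valid panels. Writing $k_w$ for the number of agents with vector $w$ on a size-$k$ panel and taking $k$ even (which we may assume without loss of generality, as with the divisibility assumptions in the Case~1 construction), the quotas $\ell_{f_1,0}=u_{f_1,0}=k/2$ and $\ell_{f_2,0}=u_{f_2,0}=k/2$ become $k_{00}+k_{01}=k/2$ and $k_{00}+k_{10}=k/2$; together with $k_{00}+k_{11}+k_{01}+k_{10}=k$ these force $k_{01}=k_{10}$ and $k_{00}=k_{11}=k/2-k_{10}$, leaving a single free parameter $k_{10}\in\{0,\dots,k/2\}$. Then I would instantiate two extreme choices: $k_{10}=0$ gives a valid panel consisting of $k/2$ agents with vector $00$ and $k/2$ with vector $11$, while $k_{10}=k/2$ gives a valid panel consisting of $k/2$ agents with vector $01$ and $k/2$ with vector $10$. (If one additionally assumes $4\mid k$, a single panel with $k/4$ agents of each vector works and covers all four types at once, mirroring the Case~1 proof.)

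It then remains to confirm that these compositions are realizable from the pool. In Cases~2 and~3 the coalition has size $n/6-k$, so $k\le n/6$, and hence $k/2\le n/12\le\min\{n_{01},n_{10}\}=n/6\le\min\{n_{00},n_{11}\}=n/3$; thus every feature vector has at least $k/2$ representatives in the pool, so any given agent with vector $00$ or $11$ can be placed on the first panel and any given agent with vector $01$ or $10$ on the second. Every agent therefore lies on some valid panel, so EX$(\inst')=\emptyset$. I do not anticipate a genuine obstacle here; the only point needing care is the bound $k\le n/6$, which is already implicit in the coalition size $n/6-k$ being nonnegative.
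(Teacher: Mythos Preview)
Your proposal is correct and follows essentially the same approach as the paper: exhibit valid panel compositions covering every feature vector in $\mathcal{W}_{\inst'}$ and check the pool is large enough to realize them. The paper does this with a single composition ($k/3$ of $00$, $k/3$ of $11$, $k/6$ of $01$, $k/6$ of $10$), whereas you use two extreme compositions (or the $k/4$-each variant); both are trivial instantiations of the same idea, and your explicit check that $k\le n/6$ suffices for pool feasibility is a detail the paper leaves implicit.
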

\begin{proof}
    Observe that we can place all vectors on a valid panel: consider all panels containing $k/3$ agents with $00$, $k/3$ agents with $11$, $k/6$ agents with $01$, and $k/6$ agents with $10$. Panels of this composition satisfy the quotas, and all agents can be contained on such a panel. 
\end{proof}

\begin{claim}
    For all $\gamma \in [1,n/3-1)$, 
    \[\textsf{p}_{10}\left(\pivec^{\linear_{\gamma}}(\inst'^{\to_C\boldsymbol{\tilde{w}}})\right ) = \frac{9k}{2(n^2-9)}\]
    and for all $\gamma \geq n/3-1$,
    \[\textsf{p}_{01}\left(\pivec^{\linear_{\gamma}}(\inst'^{\to_C\boldsymbol{\tilde{w}}})\right ) = 1.\]
\end{claim}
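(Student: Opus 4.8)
The plan is to reduce the optimization of $\linear_\gamma$ over $\Pi(\inst'^{\to_C\boldsymbol{\tilde{w}}})$ to a one-dimensional, piecewise-linear minimization in a single scalar, solve it by a case split on $\gamma$, and read off the required vector-indexed probability at the minimizer. First I would describe the feasible region. Writing $k_w$ for the number of vector-$w$ agents on a panel of $\inst'^{\to_C\boldsymbol{\tilde{w}}}$, the quotas $\ell_{f_1,0}=u_{f_1,0}=\ell_{f_2,0}=u_{f_2,0}=k/2$ force $k_{00}+k_{01}=k/2$ and $k_{00}+k_{10}=k/2$, hence $k_{01}=k_{10}$ and $k_{00}=k_{11}=k/2-k_{01}$; thus every valid panel is determined by the single integer $y:=k_{01}\in\{0,\dots,k/2\}$, and (since $n>6k$, which is forced by $|C|=n/6-k\ge 1$) a valid panel of each such type exists. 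By \Cref{prop:anon} I restrict to anonymous assignments; arguing exactly as in \Cref{app:trade-off}, for $\pivec=\pivec(\mathbf{d})\in\Pi(\inst'^{\to_C\boldsymbol{\tilde{w}}})$ and $\bar y:=\sum_{K}d_K\,y(K)\in[0,k/2]$,
\[
\vecprobs_{00}=\vecprobs_{11}=\frac{k/2-\bar y}{n/3},\qquad \vecprobs_{01}=\frac{\bar y}{k},\qquad \vecprobs_{10}=\frac{\bar y}{n/3-k},
\]
and conversely every $\bar y\in[0,k/2]$ is realizable. So minimizing $\linear_\gamma$ over $\Pi(\inst'^{\to_C\boldsymbol{\tilde{w}}})$ is exactly minimizing $\linear_\gamma$ as a function of $\bar y$ over $[0,k/2]$.

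Next I would locate the breakpoints. Since $\vecprobs_{01}/\vecprobs_{10}=(n-3k)/(3k)\ge 1$, $\vecprobs_{01}$ always weakly dominates $\vecprobs_{10}$; and $\vecprobs_{00}$ is decreasing in $\bar y$ while $\vecprobs_{01},\vecprobs_{10}$ are increasing. Hence the maximum equals $\vecprobs_{00}$ for $\bar y\le\bar y_1$ and $\vecprobs_{01}$ afterward, where $\bar y_1/k=(k/2-\bar y_1)/(n/3)$ gives $\bar y_1=\tfrac{3k^2}{2(n+3k)}$; the minimum equals $\vecprobs_{10}$ for $\bar y\le\bar y_2$ and $\vecprobs_{00}$ afterward, where $\bar y_2=\tfrac{k(n-3k)}{2(2n-3k)}$; and a short computation gives $\bar y_1<\bar y_2\iff n>6k$, which holds. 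On the three resulting intervals $\linear_\gamma$ is linear with slopes $-\tfrac{3}{n}-\tfrac{3\gamma}{n-3k}$ on $[0,\bar y_1]$, $\tfrac{1}{k}-\tfrac{3\gamma}{n-3k}$ on $[\bar y_1,\bar y_2]$, and $\tfrac{1}{k}+\tfrac{3\gamma}{n}$ on $[\bar y_2,k/2]$; the first is always negative, the third always positive, and the middle one is strictly positive iff $\gamma<\tfrac{n}{3k}-1$ and nonpositive iff $\gamma\ge\tfrac{n}{3k}-1$.

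This yields the case split. For $\gamma\in[1,\tfrac{n}{3k}-1)$, $\linear_\gamma$ strictly decreases then strictly increases, so the unique minimizer is $\bar y^*=\bar y_1$ and $\vecprobs_{10}=\bar y_1/(n/3-k)=\tfrac{9k^2}{2(n^2-9k^2)}\in O(1/n^2)$; since this is the minimum probability, it bounds $\textsf{manip-fairness}$. For $\gamma\ge\tfrac{n}{3k}-1$, $\linear_\gamma$ is non-increasing on $[0,\bar y_2]$ and increasing afterward, so $\bar y^*=\bar y_2$ and $\vecprobs_{01}=\bar y_2/k=\tfrac{n-3k}{2(2n-3k)}\in\Omega(1)$; together with the fact that in the truthful instance $\inst'$ the optimum of $\linear_\gamma$ is the uniform assignment $\tfrac{k}{n}\mathbf{1}^n$ (a one-line analogue of the same one-parameter argument on $\inst'$), this makes the $f_2=1$ quota group gain $\Omega(1)$ expected seats, bounding $\textsf{manip}_{comp}$. (The displayed values $\tfrac{9k}{2(n^2-9)}$, $1$ and the threshold $n/3-1$ appear to be transcription slips for $\tfrac{9k^2}{2(n^2-9k^2)}$, $\tfrac{n-3k}{2(2n-3k)}$, and $\tfrac{n}{3k}-1$, consistently with \Cref{thm:linear}.)

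The main obstacle is the piecewise-linear bookkeeping in the last step: I must confirm that these three segments are the only ones --- i.e. that $\vecprobs_{01}\ge\vecprobs_{10}$ holds throughout, that $\vecprobs_{10}$ is the minimum on all of $[0,\bar y_2]$, and that $\vecprobs_{00}=\vecprobs_{11}$ can be treated as a single coordinate --- all of which follow from $n>6k$ and $\bar y_1<\bar y_2$ --- and I must handle the knife-edge $\gamma=\tfrac{n}{3k}-1$, where the middle segment is flat and every point of $[\bar y_1,\bar y_2]$ is optimal; there the claim should be read as pertaining to the optimum returned by $\linear_\gamma$, and only the $\Omega(1)$ lower bound on $\vecprobs_{01}$ (attained at $\bar y_2$) is needed downstream. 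A routine final computation of the per-agent probabilities before and after then converts these two values into the stated $O(1/n^2)$ and $\Omega(1)$ bounds of the proposition.
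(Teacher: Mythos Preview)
Your proposal is correct and follows the same one-parameter piecewise-linear reduction as the paper, splitting cases on the sign of the middle-segment slope $\tfrac{1}{k}-\tfrac{3\gamma}{n-3k}$. Your parametrization by $\bar y=\mathbb{E}[k_{01}]\in[0,k/2]$ is in fact more careful than the paper's (which asserts only two valid panel types and thus works with $d_2\in[0,1]$, undercounting the feasible range when $k>2$), and your flagging of the displayed constants as transcription slips for $\tfrac{9k^2}{2(n^2-9k^2)}$, $\tfrac{n-3k}{2(2n-3k)}$, and the threshold $\tfrac{n}{3k}-1$ agrees with what the paper itself derives inside its proof.
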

\begin{proof}
    Take the instance $\inst'^{\to_C\boldsymbol{\tilde{w}}}$, and let $\tilde{\mathcal{K}}$ be its associated set of valid panels. Observe that in $\tilde{\mathcal{K}}$, there are two valid panel types:
    \begin{itemize}
        \item \textit{Type 1:} Contains $k/2$ agents with vector 00 and $k/2$ agents with vector 11.
        \item \textit{Type 2:} Contains $k/2-1$ agents with vector 00, $k/2-1$ agents with vector 11, 1 agent with vector 01, and 1 agent with vector 10. 
    \end{itemize}
    
    Fix any $\pdist \in \Delta(\tilde{\mathcal{K}})$, and let $d_1,d_2$ represent the total probability $\pdist$ places on panels of Types 1 and 2, respectively. Then, by simply dividing the expected panel seats given to agents with each vector $w$ divided by the total number of pool members with vector $w$, the resulting selection probabilities (assumed to be anonymous) are:
\begin{align} 
    \vecprobs_{00} = \vecprobs_{11} = d_1 \frac{k/2}{n/3} + d_2 \frac{k/2-1}{n/3}, \qquad \vecprobs_{10} = d_2 \frac{1}{n/3-k}, \qquad \vecprobs_{01} = d_2/k. 
\end{align}
Using that $d_1 + d_2 = 1$ and simplifying, we get that
    \begin{align} 
    \vecprobs_{00} = \vecprobs_{11} =   \frac{k/2-d_2}{n/3}, \qquad \vecprobs_{10} = d_2 \frac{1}{n/3-k}, \qquad \vecprobs_{01} = d_2/k.
\end{align}
Now, we make some observations: 
\begin{itemize}
    \item $\textsf{p}_{01} \geq \textsf{p}_{10}$ for all $d_2 \in [0,1]$, so long as $k \leq n/6$ (which we now set it to be)
    \item $\textsf{p}_{01} \geq \textsf{p}_{00} \iff d_2 \geq \frac{3 k^2}{6 + 2 n}$
    \item $\textsf{p}_{00} \geq \textsf{p}_{10} \iff d_2 \leq \frac{k(n-3k)}{4 n-6k}$
\end{itemize} 
$\linear_\gamma$ is in terms of the maximum and minimum. This gives us three cases for the values of the minimum and maximum probability:

\textbf{Case 1:} $\textsf{p}_{00} \geq \textsf{p}_{01} \geq \textsf{p}_{10}$, which occurs when $d_2 \leq \frac{3 k^2}{6 + 2 n}$. Here,
\begin{align*}
    \linear_\gamma = \textsf{p}_{00} - \textsf{p}_{10} = \frac{k/2-d_2}{n/3} - \gamma \frac{d_2}{n/3-k}.
\end{align*}
By the derivative, observation tells us that for all $\gamma \geq 1$, this objective is decreasing in $d_2$, meaning it is optimized when $d_2$ is maximized over the relevant domain. Then, the optimal solution over this domain is $d_2 = \frac{3 k^2}{6 + 2 n}$, at which point $\textsf{p}_{00} = \textsf{p}_{01}$, which means that this case at the optimizer over this domain is interchangeable with Case 3.

\textbf{Case 2:} $\textsf{p}_{01} \geq \textsf{p}_{10} \geq \textsf{p}_{00}$, which occurs when $d_2 \geq \frac{k(n-3k)}{4 n-6k}$. Here,
\begin{align*}
    \linear_\gamma = \textsf{p}_{01} - \textsf{p}_{00} = d_2/k - \gamma \frac{k/2-d_2}{n/3}
\end{align*}
By the derivative, observation tells us that for all $\gamma$, this objective is increasing in $d_2$, meaning it is optimized when $d_2$ is minimized over the relevant domain. Then, $d_2 = \frac{k(n-3k)}{4 n-6k}$. Again, at at this point $\textsf{p}_{00} = \textsf{p}_{10}$, which means that this case at the optimizer over this domain is interchangeable with Case 3.

\textbf{Case 3:} $\textsf{p}_{01} \geq \textsf{p}_{00} \geq \textsf{p}_{10}$, which occurs when $\frac{3 k}{6 + 2 n} \leq d_2 \leq \frac{k(n-3)}{4 n-6)}.$ Here,
\begin{align*}
    \linear_\gamma = \textsf{p}_{01} - \textsf{p}_{10} = d_2/k - \gamma \frac{d_2}{n/3-k}.
\end{align*}
Examining the derivative, when $\gamma < n/(3k)-1$, this objective function is increasing in $d_2$, meaning that it is minimized by minimizing $d_2$ over the relevant domain; therefore, at the $\linear_\gamma$ optimizer, $d_2 = \frac{3 k^2}{6 + 2 n}$. It follows that 
\[\textsf{p}_{10} = \frac{3 k^2}{(6 + 2 n)(n/3-k)} = \in O(1/n^2).\]

When $\gamma \geq n/(3k)-1$, this objective function is (weakly) decreasing in $d_2$, meaning that it is minimized by maximizing $d_2$ over the relevant domain; therefore, $d_2 = \min\left\{\frac{k(n-3k)}{4 n-6k},1\right\}.$ It follows that
\[\textsf{p}_{01} = \min\left\{\frac{k(n-3k)}{2k(2n-3k)},1\right\} \in \Omega(1). \qedhere\]
\end{proof}

\newpage
\section{Supplemental Materials for Section \ref{sec:transparency}}

\subsection{Proof of \Cref{prop:lbrounding}} \label{app:lbrounding}
 \begin{proposition} \label{prop:lbrounding}
        There exists an instance $\mathcal{I}$ such that for all conditionally equitable algorithms \textsc{E} and for all $\bar{\pivec} \in \overline{\Pi}_m(\inst)$, 
        $\min\left(\pivec^{\textsc{E}}(\inst)\right) - \min\left(\overline{\pivec}\right) \geq \sqrt{k}/m.$
    \end{proposition}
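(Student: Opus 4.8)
The plan is to exhibit a single instance in which the unique conditionally-equitable probability vector has a relatively large minimum probability, while any $m$-uniform lottery is forced to round some agent's probability downward by at least $\sqrt{k}/m$. The natural candidate is an instance where the quotas force a ``small group'' (in the spirit of Example~\ref{ex:small-group}) to receive exactly its fair share $k/n$ of probability, spread over a group whose size is chosen so that $k/n$ is \emph{not} a multiple of $1/m$ and, in fact, the nearest achievable multiple below it is a full $\sqrt{k}/m$ away. Concretely, I would take one feature with binary values, set the quotas so that a valid panel contains exactly $\sqrt{k}$ agents with value $1$ (and $k-\sqrt{k}$ with value $0$), and let the pool contain exactly $n_1$ agents of type $1$, with $n_1$ and $n$ chosen so that every one of those $n_1$ agents is \emph{structurally required} to appear on the panel with total expected seats $\sqrt{k}$ divided among them. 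Because $k/n\mathbf{1}^n$ will lie in $\Pi(\inst)$ by construction (choose $n_1 = n \sqrt{k}/k = n/\sqrt{k}$, so each type-$1$ agent's forced probability is exactly $\sqrt{k}/n_1 = k/n$), conditional equitability forces $\pivec^{\textsc{E}}(\inst) = k/n\mathbf{1}^n$, so $\min(\pivec^{\textsc{E}}(\inst)) = k/n$.

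The key steps, in order: (1) Write down the instance precisely and verify $\mathcal{K}\neq\emptyset$ and that $k/n\mathbf{1}^n\in\Pi(\inst)$ (a uniform-like randomization over all valid panels, using that every agent lies on some valid panel and averaging as in Lemma~\ref{lem:feasibility-1/n}); this also gives $EX(\inst)=\emptyset$. (2) Invoke conditional equitability: since $k/n\mathbf{1}^n\in\Pi(\inst)$ it lies in $\Pi^{\textsc{E}}(\inst)$, and by the rigidity of the quotas (every valid panel has exactly $\sqrt{k}$ type-$1$ agents, so $\sum_{i: f(i)=1}\pi_i = \sqrt{k}$ for \emph{every} $\mathbf{d}$) combined with anonymity, the type-$1$ coordinates are pinned to $\sqrt{k}/n_1 = k/n$; hence $\min(\pivec^{\textsc{E}}(\inst)) \geq k/n$ (in fact all its coordinates can be taken $\geq k/n$, and $\min = k/n$ suffices). (3) Take any $\bar{\pivec}\in\overline\Pi_m(\inst)$. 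Because the constraint $\sum_{i: f(i)=1}\bar\pi_i = \sqrt{k}$ still holds and all $\bar\pi_i$ are multiples of $1/m$, the $n_1$ type-$1$ probabilities are nonnegative multiples of $1/m$ summing to $\sqrt{k}$; by pigeonhole their minimum is at most $\sqrt{k}/n_1 = k/n$, and since it is a multiple of $1/m$ while $k/n$ might already be one, I need the arithmetic to force a genuine gap. This is where the instance parameters must be tuned: choose $n$ (equivalently $m$, $k$) so that $k/n$ sits strictly between two consecutive multiples of $1/m$ with the nearest one below being $k/n - \sqrt{k}/m$ — or, more robustly, argue that the \emph{minimum} over $n_1$ values summing to $\sqrt{k}$ in steps of $1/m$ must drop below the average by at least $\sqrt{k}/m$ whenever $\sqrt{k}/n_1$ is not itself a multiple of $1/m$ and $n_1$ is chosen appropriately; the cleanest route is to make $n_1$ just large enough that $\lfloor m\sqrt{k}/n_1\rfloor/m \leq k/n - \sqrt{k}/m$, i.e. force at least one type-$1$ coordinate down to a value at most $k/n - \sqrt{k}/m$.

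The main obstacle, and the step requiring real care, is step (3): making the discrepancy exactly $\sqrt{k}/m$ rather than merely $\Omega(1/m)$. The trick is that we have $n_1$ coordinates (nonneg.\ multiples of $1/m$) summing to $\sqrt{k}$, and we want their minimum to be forced down by $\sqrt{k}/m$ from the average $k/n$. If all $n_1$ coordinates exceeded $k/n - \sqrt{k}/m$, i.e.\ each were $\geq k/n - \sqrt{k}/m + 1/m$ (the next multiple up), their sum would be $\geq n_1(k/n) - n_1(\sqrt{k}-1)/m = \sqrt{k} - n_1(\sqrt{k}-1)/m$; to get a contradiction with the sum being exactly $\sqrt{k}$ I would need $n_1(\sqrt{k}-1)/m < 0$, which fails — so the naive bound only gives that \emph{some} coordinate is at most $k/n$. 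The fix is to choose the group size and $m,k$ so that $k/n$ lands strictly between consecutive $1/m$-grid points with the grid point immediately below being exactly $k/n - \sqrt k/m$; then any type-$1$ coordinate, being a multiple of $1/m$ and (for at least one of them, by the averaging/pigeonhole argument applied to a sum that cannot be met if all are at the grid point $\geq k/n$) not able to all equal a value $\geq k/n$, must be $\leq k/n - \sqrt k/m$. I would verify this is simultaneously satisfiable with steps (1)–(2) by picking, e.g., $k$ a perfect square, $m$ a multiple of $\sqrt k$, and $n$ so that $k/n = (am+1)/(m\sqrt k)$... — the exact Diophantine choice is the routine-but-fiddly part, and I would present it as an explicit numeric family (parametrized by a free scaling integer) rather than asserting existence abstractly.
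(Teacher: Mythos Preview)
Your reduction to conditional equitability (steps (1)--(2)) is exactly right and matches the paper: the paper also uses an instance with $k/n\mathbf{1}^n\in\Pi(\inst)$, so that any conditionally equitable $\textsc{E}$ has $\min(\pivec^{\textsc{E}}(\inst))=k/n$, and the whole claim reduces to showing that every $\bar\pivec\in\overline\Pi_m(\inst)$ has $\min(\bar\pivec)\le k/n-\sqrt{k}/m$.

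The gap is in step~(3). Your single-feature construction cannot force a $\sqrt{k}/m$ drop. You correctly observe that the naive pigeonhole only yields that some type-$1$ coordinate is $\le k/n$, not $\le k/n-\sqrt{k}/m$. Your proposed fix --- choosing parameters so that ``the grid point immediately below $k/n$ is exactly $k/n-\sqrt{k}/m$'' --- is impossible: consecutive multiples of $1/m$ are exactly $1/m$ apart, so the nearest grid point below $k/n$ is at distance strictly less than $1/m$, never $\sqrt{k}/m$ (for $k>1$). No Diophantine tuning of $n,k,m$ can widen the grid spacing. With one tight quota you get $n_1$ multiples of $1/m$ summing to $\sqrt{k}$, and such a system always admits a solution with all coordinates within $1/m$ of the average whenever $m\sqrt{k}/n_1$ is close to an integer; the best you can extract from this structure is an $O(1/m)$ loss, not $\sqrt{k}/m$.

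The paper does not build its own instance here; it invokes the lower-bound construction of \citet{flanigan2021transparent}, which is substantially more intricate than a single binary feature. That construction uses multiple features whose constraints interact so that the $1/m$-rounding errors \emph{accumulate} across roughly $\sqrt{k}$ independent directions onto a single agent --- a discrepancy-theoretic phenomenon, not a one-dimensional pigeonhole. The paper's only new content is the conditional-equitability observation you already have; the hard combinatorics you are trying to reinvent in step~(3) is precisely what is being cited.
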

    \begin{proof} We defer the construction of the instance to \citet{flanigan2021transparent}, about which they show that for all $\bar{\pivec} \in \overline{\Pi}_m(\inst)$, $\textit{Maximin}(\pivec) - \textit{Maximin}(\overline{\pivec}) \geq \sqrt{k}/m$. We simply generalize their result to all conditionally equitable objectives with the following simple observation. In their construction, the original instance $\inst$ is such that $k/n\mathbf{1}^n \in \Pi(\inst)$, and in the original panel distribution they consider in fact implies $\pivec = k/n\mathbf{1}^n$. By the definition of conditional equitability, $\pivec$ must be  maximally equal with respect to any conditionally equitable objective $\mathcal{E}$. 
    \end{proof}

\newpage
\section{Supplemental Materials for Section \ref{sec:empirics}}\label{app:empirics}

\subsection{Investigation of alternative $\gamma$ values} \label{sec:alternate-gammas}
In some instances, Problem 1 may necessitate that some agents receive very high probability, regardless of the probability received by other agents. This can make the $\max(\pivec)$ term of \textsc{Goldilocks} unduly large, meaning we may want to increase $\gamma$ to compensate. The easiest way to understand this is in the extreme case: if the quotas require someone to receive probability 1, we are better off setting $\gamma$ to be extremely large and prioritizing only low probabilities, since we cannot gain anything on the high end.

We thus investigate two instance-wise $\gamma$ values, both which use information that would be available in practice to approximately respond to how quotas and self-selection bias in a given instance necessitate practically significant probability gaps. We can see this as trying to approximate $\gamma^*$ in \Cref{prop:general-opt-gold}.

    
\noindent \textbf{$\gamma_1$: minimax/maximin-balanced.} While we \textit{a priori} don't know the best solution in any given instance, we can try to approximately balance the terms using our knowledge of $\min(\pivec^{\maximin}(\inst))$, the maximal minimum probability, and $\max(\pivec^{\minimax}(\inst))$, the minimal maximum probability. The bounds given by our algorithm depend on $\max\{\gamma d, d'\}$, where $k/(dn)$ is the minimum probability and $d'k/n$ is the maximum probability in the feasible instance. To roughly balance these terms relative to one another, we can set
    \[k/(dn) = \min(\pivec^{\maximin}(\inst)) \iff d = \frac{k}{n} \cdot \frac{1}{\min(\pivec^{\maximin}(\inst))}\]
    and \[d'k/n = \max(\pivec^{\minimax}(\inst)) \iff d' = \frac{n}{k} \max(\pivec^{\minimax}(\inst)),\] 
    thereby optimistically proceeding as though there exists an instance where we can achieve the maximal minimum probability and the minimal maximum probability simultaneously. Given that our bounds depend on $\max\{\gamma d, d'\}$, we set $\gamma$ so that $\gamma d$ and $d'$ are balanced:
    \begin{align*}
        \gamma d = d' \iff \gamma = \frac{k}{n} \cdot \frac{1}{ \min(\pivec^{\maximin}(\inst))} = \frac{n}{k} \max(\pivec^{\minimax}(\inst))\\
        \implies \gamma = \frac{n^2}{k^2} \cdot \max(\pivec^{\minimax}(\inst))\cdot \min(\pivec^{\maximin}(\inst)).
    \end{align*}
    Some observations about this method: As we approach the ability to perfectly equalize, $\gamma \to 1$. As $\max(\pivec^{\minimax}(\inst)) \to 1$ but $\min(\pivec^{\maximin}(\inst))$ is around $k/n$, this gets large, approaching order $n/k$ and prompting us to prioritize low probabilities, as desired. Likewise, if $\max(\pivec^{\minimax}(\inst))$ is around $k/n$ but $\min(\pivec^{\maximin}(\inst)) \to 0$, this approaches 0, prompting us to prioritize only the higher probabilities, as desired.\\[-0.5em] 
    
\noindent \textbf{$\gamma_2$  selection bias-balanced.} The weakness of method 2 is that we have to optimize minimax and maximin before we can optimize \textsc{Goldilocks}. We can maybe get around this by getting a coarse-grained approximation to the above approach, which estimates how much gap must exist in the selection probabilities to
    satisfy individual constraints. Building on \citet{flanigan2024manipulation}'s measure $\Delta_{p,k,N}$, we set $\phi_{f,v}([n] ) := |\{i|f (i) = v\}|/n$ and let \[k/(nd) = \min_{(f,v) \in FV} \frac{(\ell_{f,v}+u_{f,v})/2}{\phi_{f,v}([n])} \cdot k/n \qquad \text{and} \qquad d' k/n=\max_{(f,v) \in FV} \frac{(\ell_{f,v}+u_{f,v})/2}{\phi_{f,v}([n])} \cdot k/n.\]
    Computing $\gamma$ to balance terms as we did in Method 2, we get that
    \[\gamma d = d' \iff \gamma =  \min_{f,v \in FV} \frac{(\ell_{f,v}+u_{f,v})/2}{\phi_{f,v}([n])} \cdot \max_{(f,v) \in FV} \frac{(\ell_{f,v}+u_{f,v})/2}{\phi_{f,v}([n])}.\]
    We now show that this has the same desirable behavior as Method 2: first, notice as the self-selection bias goes away, both these terms approach 1 and we get $\gamma = 1$. If the self-selection bias requires very high probabilities for some feature-vector, making the $\max$ term very large, this will make $\gamma$ larger, prompting us to prioritize low probabilities. If the self-selection bias requires very low probabilities for some feature-vector, this will make $\gamma$ term smaller, prompting us to prioritize high probabilities. If they depart equally from 1 (multiplicatively), then the terms will cancel and $\gamma = 1$.

\begin{table}[h!] 
        \centering
        \begin{tabular}{c|ccccccc}
         $\inst$ & \textsc{minimax} & \textsc{minimax-TB} & \textsc{leximin} & \textsc{maximin-TB} & \textsc{GL($1$)} & \textsc{GL($\gamma_1$)} & \textsc{GL($\gamma_2$)} \\
         \hline
  1 &  (0.0, 1.0) &  (0.0, 1.0) & (1.0, 2.0) & (1.0, 1.98) &     (0.72, 1.1) &           (0.76, 1.16) &            (0.78, 1.2) \\
        2 &  (0.0, 1.0) & (0.0, 1.0) & (1.0, 1.33) & (1.0, 1.21) &      (0.9, 1.0) &           (0.98, 1.07) &           (0.98, 1.07) \\
        3 &  (0.0, 1.0) &  (1.0, 1.0) & (1.0, 1.0) &  (1.0, 1.0) &    (1.0, 1.0) &             (1.0, 1.0) &             (1.0, 1.0) \\
        4 &  (0.0, 1.0) &  (0.97, 1.0) & (1.0, 1.0) & (1.0, 1.0) &      (1.0, 1.0) &             (1.0, 1.0) &             (1.0, 1.0) \\
        5 &  (0.0, 1.0) & (0.5, 1.0) & (1.0, 1.17) & (1.0, 1.16) &     (0.95, 1.0) &            (0.95, 1.0) &            (0.95, 1.0) \\
        6 & (0.25, 1.0) & (0.31, 1.0) & (1.0, 1.11) & (1.0, 1.09) &     (1.0, 1.09) &            (1.0, 1.09) &            (1.0, 1.09) \\
        7 &  (0.0, 1.0) & (0.01, 1.0) &  (1.0, 3.5) & (1.0, 2.48) &     (0.7, 1.38) &           (0.78, 1.53) &           (0.83, 1.64) \\
        8 &  (0.0, 1.0) & (1.0, 1.0) & (0.98, 1.0) & (1.0, 1.0)    & (1.0, 1.0) &             (1.0, 1.0) &             (1.0, 1.0) \\
        9 &  (0.0, 1.0) & (1.0, 1.0) &  (1.0, 1.0) &  (1.0, 1.0) &    (1.0, 1.0) &             (1.0, 1.0) &             (1.0, 1.0) \\
        \end{tabular}
   \caption{We compare the performance of the two instance-specific gamma values described above against \textsc{minimax}, \textsc{leximin}, and \textsc{goldilocks} with a gamma value of 1 (\textsc{Goldilocks} is abbreviated as \textsc{GL}). Additionally we include \textsc{Minimax-TB} and \textsc{Maximin-TB}, variants of minimax (and maximin) that tie-break solutions towards maximizing the minimum (minimizing the maximum).}
   \label{tab:alternate-gammas}
    \end{table}

\subsection{Instances} \label{app:instances}
\Cref{tab:instances} gives the values of $n$, $k$, and $|\mathcal{W}_N|$ associated with our 9 instances.
\begin{table}[h!] 
        \centering
        \begin{tabular}{cccccccccc}
        & \multicolumn{9}{c}{\textbf{Instances}}\\
        & 1 & 2 & 3 & 4 & 5 & 6 & 7 &8 & 9 \\
     $n$ & 239 & 312 & 161 & 250 & 404 & 70 & 321 & 1727 & 825\\
     $k$ & 30 & 35 & 44 & 20 & 40 & 24 & 30 & 110 & 75 \\
     $|\mathcal{W}_N|$ & 202 & 182 & 92 & 92 & 108 & 25 & 294 & 762 & 554
        \end{tabular}
   \caption{$k$, $n$, and $|\mathcal{W}_N|$ values across all 9 instances we analyze.} 
        \label{tab:instances}
    \end{table}

\subsection{Description of \textsc{Legacy}} \label{app:legacy}
The \textit{Legacy} algorithm is a greedy heuristic that populates the panel person by person, in each of its $k$ steps uniformly randomizing over all remaining pool members (not yet placed on the panel) who have value $v'$ for feature $f'$, where this feature-value is defined by the following ratio:

\[f',v':= \text{arg}\max_{f,v\in FV}\frac{\ell_{f,v} - \text{\# people already selected for the panel with $f,v$}}{\# \text{ people left in the pool with }f,v}.\]
Intuitively, this is computing how desperate we are for quota $f,v$: the top is how many more people we need to fill the quota, and the bottom is how many we have left. If this is large, then the quota is more desperate. The algorithm proceeds this way until either a valid panel is created, or it is impossible to satisfy the quotas with the remaining pool members, at which case it starts over. A more detailed description of how this algorithm handles corner cases can be found in Appendix 11 of \cite{flanigan2021fair}; these details are not pertinent to our results.

\subsection{Implementation of algorithmic framework} \label{app:algo-implementation}

We provide our code for implementing the framework for all $\mathcal{E}$ we optimize at \url{https://github.com/Cbaharav/fair_manipulation-robust_transparent_sortition}. We give approximate runtimes for optimizing the objectives we study below. These runtimes were obtained on a 13-inch MacBook Pro (2020) with an Apple M1 chip. For clarity, we select a representative run of a smaller instance (Instance 1) as well as a representative run of a large instance (Instance 8) with all of our equality objectives.
\begin{table}[h!] 
        \centering
        \begin{tabular}{c| c c c c c }
       Instance & \textsc{Minimax} & \textsc{Maximin} & \textsc{Leximin} & \textsc{Nash} & \textsc{Goldilocks} \\
       \hline
     1 & 10.05 & 10.68 & 32.35 & 28.59 & 122.94\\
     8 & 35.67 & 33.16 & 358.66 & 857.19 & 264.57\\
     \end{tabular}
   \caption{Times (seconds) of a representative run of all of the various objective-optimizing algorithms on Instances 1 and 8.} 
        \label{tab:instances}
    \end{table}
To calculate optimal distributions under various equality objectives, we used pre-existing implementations of $\maximin, \leximin, \nash$, and $\textsc{Legacy}$ from publicly available code \cite{flanigan2021fair, flanigan2021transparent}. We implemented $\minimax$ and $\gold$ using the algorithmic framework provided by \cite{flanigan2021fair}. Implementing \textsc{Minimax} is straightforward, as its implementation is almost exactly the same as \textsc{Maximin}. To implement \textsc{Goldilocks}, we formulate it as an SOCP, and use an auxiliary constraint (as permitted within the framework) to enforce that selection probabilities are anonymous within a tolerance of 0.01.\\ 

$\gold$ Primal program:
\begin{align*}
    -\max \quad & -(n/k) \cdot x - y/(n/k)\\
    \text{s.t.}\quad & \pi_i = \sum_{P \in \mathcal{K}: i \in P} q_P &&\text{for all }i \in [n]\\
    & \left\|\left(\begin{matrix}
        2\sqrt{\gamma}\\
        \pi_i - y
    \end{matrix}\right)\right\|_2\leq \pi_i + y  &&\text{for all }i \in [n]\\
    & \pi_i \leq x &&\text{for all }i \in [n]\\
    &|\pi_i - \pi_j| \leq 0.01 &&\text{for all } i,j \in [n]\colon w_i=w_j\\
    &\sum_{P \in \mathcal{K}} q_P = 1\\
    &q_P \geq 0 &&\text{for all }P \in \mathcal{K}
\end{align*}
At any optimal solution, we clearly have $x = \max(\mathbf{\pivec})$. We claim that the constraints on $y$ ensure that at any optimal solution, $y = \gamma/\min(\mathbf{\pivec})$. First note that if we instead wrote $y \geq \gamma/\pi_i$ for all $i \in [n]$ this would clearly follow, as the objective is minimizing $y$. Now as we can see below, this constraint is equivalent to our SOCP constraint. 
\begin{align*}
    y \geq \gamma / \pi_i &\iff 4\pi_i y \geq 4\gamma\\
    &\iff \pi_i^2 + 2\pi_i y + y^2 = (\pi_i + y)^2 \geq 4\gamma + \pi_i^2 - 2\pi_i y + y^2\\
    &\iff \pi_i + y \geq \sqrt{4\gamma + \pi_i^2 - 2\pi_i y + y^2}\\
    &\iff \pi_i + y \geq \left\|\left(\begin{matrix}
        2\sqrt{\gamma}\\
        \pi_i - y
    \end{matrix}\right)\right\|_2
\end{align*}

In order to discuss these constraints more succinctly, we rewrite our auxiliary constraints as:\begin{align*}
    A_j \coloneqq  \sqrt{4\gamma + (\pi_j - y)^2} -y - \pi_j &\leq 0 &&\forall j \in [n]\\
    B_j \coloneqq \pi_j - x &\leq 0 &&\forall j \in [n]
\end{align*}
and we denote their dual variables as $\mu_{A_j}$ and $\mu_{B_j}$ respectively. As in the framework, our stopping condition is defined as \begin{align*}
    \sum_{i \in P'} \eta_i &\leq \sum_{i \in P^*} \eta_i + \varepsilon_{GL}\\
    \intertext{ where }
    \eta_i &\gets \frac{\partial}{\partial \pi_i } (-(n/k) \cdot x - y/(n/k)) - \sum_{j=1}^n \mu_{A_j} \frac{\partial A_j}{\partial \pi_i} - \sum_{j=1}^n \mu_{B_j} \frac{\partial B_j}{\partial \pi_i}
\end{align*} where $P'$ is the maximizing panel not currently in the support of the panel distribution for the sum of $\eta_i \colon i \in P'$, and $P^*$ is the maximizing panel currently in the support of the panel distribution for the corresponding sum. The stopping condition as defined in the framework has $\varepsilon_{GL} = 0$, but for computational constraints we set $\varepsilon_{GL} = 1$. We experimented with thresholds down to $\varepsilon_{G:} = 0.1$ and found the degradation of the solution with $\varepsilon_{GL} = 1$ to be relatively insignificant. We now derive the explicit form of $\eta_i$.

\noindent The individual partials are:\begin{align*}
    \frac{\partial A_i}{\partial \pi_i} &= \frac{1}{2}(4\gamma + (\pi_i - y)^2)^{-1/2} \cdot 2(\pi_i - y) \cdot \left(1- \frac{\partial y}{\partial \pi_i}\right) - \frac{\partial y}{\partial \pi_i} - 1\\
    &= (4\gamma + (\pi_i - y)^2)^{-1/2} \cdot (\pi_i - y) \cdot \left(1 + \frac{\gamma}{\pi_i^2} \cdot \mathbb{I}(\pi_i = \gamma/y)\right) + \left(\frac{\gamma}{\pi_i^2} \cdot \mathbb{I}(\pi_i = \gamma/y)\right) - 1\\
    &= \left(\frac{\gamma}{\pi_i^2} \cdot \mathbb{I}(\pi_i = \gamma/y)\right) \cdot \left((4\gamma + (\pi_i - y)^2)^{-1/2} \cdot (\pi_i - y) + 1\right) + (4\gamma + (\pi_i - y)^2)^{-1/2} - 1\\
    \frac{\partial B_i}{\partial \pi_i} &= 1 - \frac{\partial x}{\partial \pi_i}\\
    &= 1 - \mathbb{I}(\pi_i = x)
\end{align*}

\noindent For $i \neq j$:\begin{align*}
    \frac{\partial A_j}{\partial \pi_i} &= \frac{1}{2}(4\gamma + (\pi_j - y)^2)^{-1/2} \cdot 2(\pi_j - y) \cdot \left(- \frac{\partial y}{\partial \pi_i}\right) - \frac{\partial y}{\partial \pi_i}\\
    &= (4\gamma + (\pi_j - y)^2)^{-1/2} \cdot (\pi_j - y) \cdot \left(\frac{\gamma}{\pi_i^2} \cdot \mathbb{I}(\pi_i = \gamma/y)\right) + \left(\frac{\gamma}{\pi_i^2} \cdot \mathbb{I}(\pi_i = \gamma/y)\right) \\
    &= \left(\frac{\gamma}{\pi_i^2} \cdot \mathbb{I}(\pi_i = \gamma/y)\right) \left((4\gamma + (\pi_j - y)^2)^{-1/2} \cdot (\pi_j - y) + 1\right)\\
    \frac{\partial B_j}{\pi_i} &= -\mathbb{I}(\pi_i = x)
\end{align*}

\noindent So altogether we have that: \begin{align*}
    \sum_{j=1}^n \mu_{A_j} \frac{\partial A_j}{\partial \pi_i} &= \left(\frac{\gamma}{\pi_i^2} \cdot \mathbb{I}(\pi_i = \gamma/y) \sum_{j\in[n], j \neq i} \mu_{A_j} \left((4\gamma + (\pi_j - y)^2)^{-1/2} \cdot (\pi_j - y) + 1\right)\right) \\
    &\quad + \mu_{A_i} \left((4\gamma + (\pi_i - y)^2)^{-1/2} - 1\right)\\
    &= \frac{-\gamma}{\pi_i^2} \cdot \mathbb{I}(\pi_i = \gamma/y) \cdot \left(w + \mu_{A_i} \left((4\gamma + (\pi_i - y)^2)^{-1/2} \cdot (\pi_i - y) + 1\right)\right)\\
    &\quad + \mu_{A_i} \left((4\gamma + (\pi_i - y)^2)^{-1/2} - 1\right)
\end{align*}
where $w := -\sum_{j=1}^n \mu_{A_j} \left((4\gamma + (\pi_j - y)^2)^{-1/2} \cdot (\pi_j - y) + 1\right)$. For the partials for the $B$ constraints:\begin{align*}
    \sum_{j=1}^n \mu_{B_j} \frac{\partial B_j}{\partial \pi_i} &= \mu_{B_i}(1 - \mathbb{I}(\pi_i = x))-\mathbb{I}(\pi_i = x)\sum_{j \in [n], j \neq i} \mu_{B_j}\\
    &= \mu_{B_i}(1 - \mathbb{I}(\pi_i = x))-\mathbb{I}(\pi_i = x) \cdot (q - \mu_{B_i})\\
    &= \mu_{B_i} - \mathbb{I}(\pi_i = x)q
\end{align*}
where $q := \sum_{j=1}^n \mu_{B_j}$.\\

\noindent For precision issues, we replace $\mathbb{I}(\pi_i = \gamma/y) \mapsto \pi_i = \min(\pivec)$ and $\mathbb{I}(\pi_i = x) \mapsto \pi_i = \max(\pivec)$ in the code implementation. Then putting it altogether we have: \begin{align*}
    \eta_i^* &\gets \frac{\partial}{\partial \pi_i } (-(n/k) \cdot x - y/(n/k)) - \sum_{j=1}^n \mu_{A_j} \frac{\partial A_j}{\partial \pi_i} - \sum_{j=1}^n \mu_{B_j} \frac{\partial B_j}{\partial \pi_i}\\
    &= -(n/k) \cdot \mathbb{I}(\pi_i = \max(\pivec)) + (k/n) \cdot (\gamma/\pi_i^2) \cdot \mathbb{I}(\pi_i = \min(\pivec)) \\
    &- \left(\sum_{j=1}^n \mu_{A_j} \frac{\partial A_j}{\partial \pi_i} + \sum_{j=1}^n \mu_{B_j} \frac{\partial B_j}{\partial \pi_i}\right)
\end{align*}

\subsection{Feature Dropping Methods \& Results for Additional Instances} \label{app:feature-drop}

In $\inst$, we define the selection bias of feature $f$ exactly as in \cite{flanigan2024manipulation}:
\[\Delta^f_{N,k,\boldsymbol{\ell},\boldsymbol{u}}:= \max_{v \in V_f} \frac{(\ell_{f,v}+u_{f,v})/2}{\eta_{f,v}(N)}-\min_{v \in V_f} \frac{(\ell_{f,v}+u_{f,v})/2}{\eta_{f,v}(N)}\]
where $\eta_{f,v}(N)$ represents the fraction of people in the pool $N$ with value $v$ for feature $f$.

Then, we order the features in decreasing order of $\Delta^f_{N,k,\boldsymbol{\ell},\boldsymbol{u}}$ as follows
\[\Delta^{f_1}_{N,k,\boldsymbol{\ell},\boldsymbol{u}} \geq \Delta^{f_2}_{N,k,\boldsymbol{\ell},\boldsymbol{u}} \geq \dots \geq \Delta^{f_{|F|}}_{N,k,\boldsymbol{\ell},\boldsymbol{u}}\] 
And in \Cref{fig:feature-drop}, we drop features $f_1$ (1 feature dropped), then $f_1$ and $f_2$ (2 features dropped), then $f_1,f_2,f_3$ (3 features dropped), and so forth. Dropping a feature, formally speaking, means that we are dropping their associated quota constraints; so after we have dropped $y$ features, we are imposing quotas 
\[\boldsymbol{\ell}':=(\ell_{f,v} |  v \in V_f, f \in F \setminus \{f_1,\dots,f_y\}), \quad \text{and} \quad \boldsymbol{u}':=(u_{f,v} |  v \in V_f, f \in F \setminus \{f_1,\dots,f_y\}).\]

\noindent \textbf{Results for additional instances.} In \Cref{fig:feature-drop-extras}, we provide the analog to \Cref{fig:feature-drop} for the remaining 6 instances omitted from the body.
\begin{figure}[h!]
    \centering
\includegraphics[width=\textwidth]{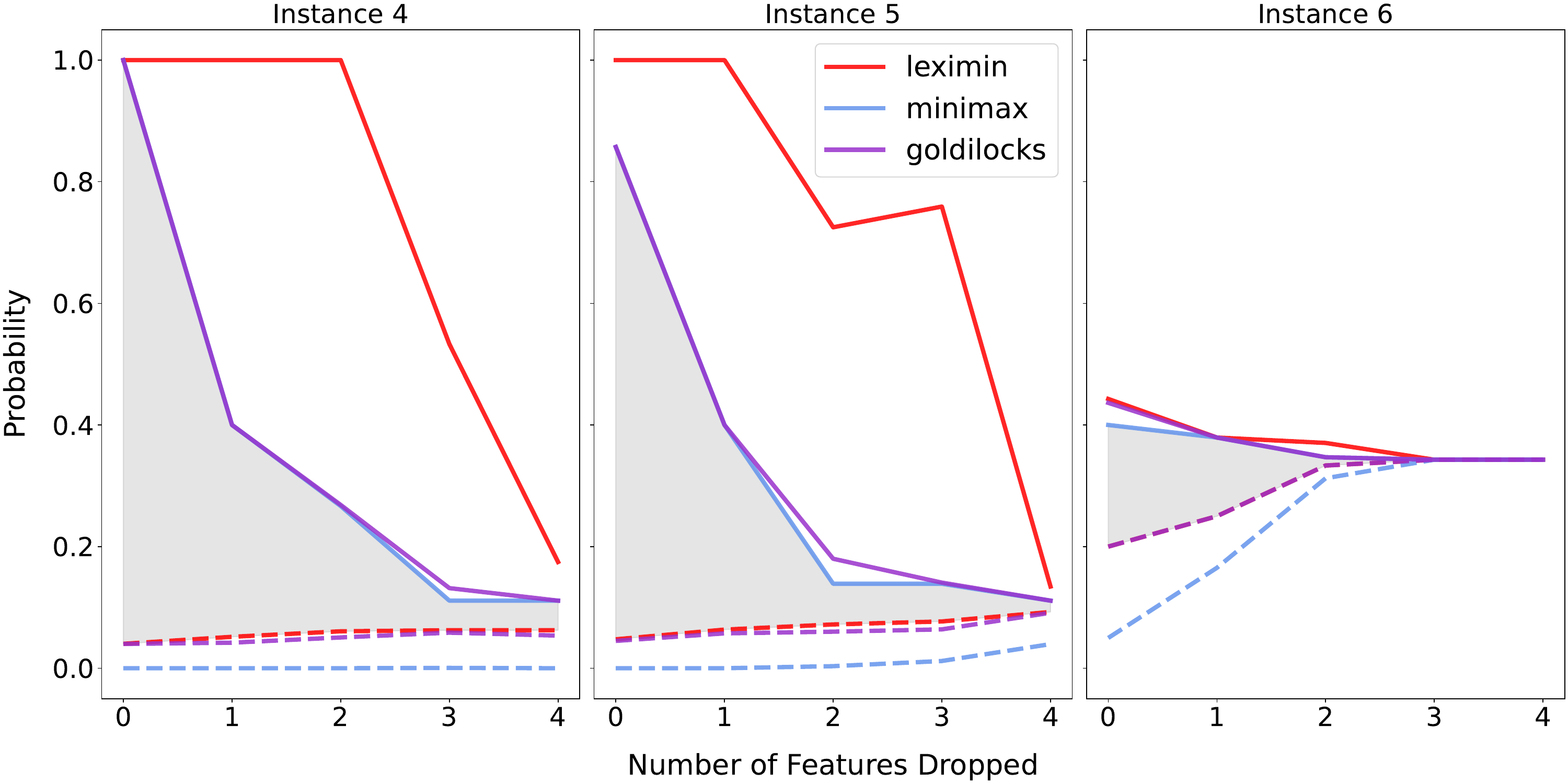}\\
\includegraphics[width=\textwidth]{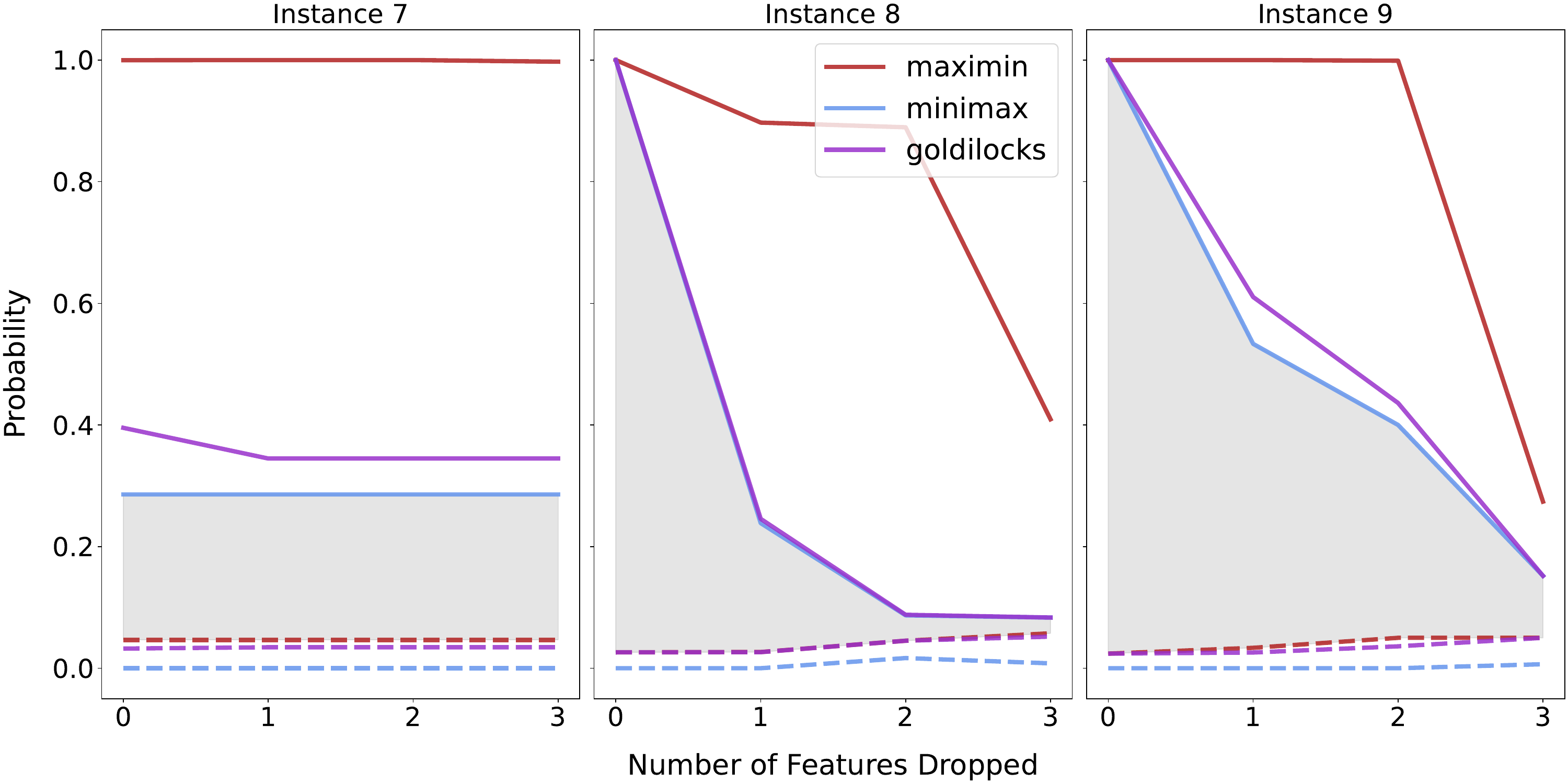}
    \caption{In instances 7-9, we use $\maximin$ instead of $\leximin$ to indicate the optimal minimum marginal probability because of computational costs due to the size of these instances. We additionally drop only 3 features instead of 4 because instance 9 only has 4 features.}
    \label{fig:feature-drop-extras}
\end{figure}

\subsection{Manipulation Robustness Experimental Methods} \label{app:manip-robust}

In our manipulability experiments, we used the high level structure implemented in \cite{flanigan2024manipulation}, but modified it to be in the \textit{panel} distribution setting as opposed to the continuous setting. We now formally describe several aspects of our experimental design.\\[-0.5em]

\noindent \textbf{Simulating the growth of the pool via \textit{pool copies}.} On the horizontal axis of our plots, we vary the number of \textit{pool copies}. In an instance $\inst = ([n], \boldsymbol{w}, k,\boldsymbol{\ell},\boldsymbol{u})$, 1 pool copy means the pool is $[n]$; 2 pool copies means that the pool is $[2n]$ and the vector of feature-vectors is duplicated (that is, we duplicate each agent in the original pool once, and leave all else about the instance the same).\\[-0.5em]

\noindent \textbf{The \textit{Most Underrepresented (MU)} strategy}
Fix an $\inst = ([n],\boldsymbol{w}, k,\boldsymbol{\ell},\boldsymbol{u})$. For every feature $f$, let the most underrepresented value be $v^*_f$, defined as
\[v^*_f:=\text{arg}\max_{v \in V_f} \frac{(\ell_{f,v}+u_{f,v})/2}{\phi_{f,v}([n])},\]
with $\phi_{f,v}([n])$ defined the same way as above. Then, when an agent $i \in [n]$ employs the \textit{MU} strategy, they misreport the vector
\[w^{MU}:=(v^*_f | f \in F).\]

\noindent \textbf{Computing the worst-case \textit{MU} manipulator.}
Fix an $\inst = ([n],\boldsymbol{w}, k,\boldsymbol{\ell},\boldsymbol{u})$ and a maximally equal algorithm \textsc{E}. Define $\boldsymbol{\tilde{w}}$ to be identical to $\boldsymbol{w}$ except that $\tilde{w}_i = w^{MU}$. Let $\tilde{\inst}_i = ([n], \boldsymbol{\tilde{w}},k,\boldsymbol{\ell},\boldsymbol{u})$ be the instance in which $i$ has employed the \textit{MU} manipulation strategy, and all other agents are truthful. Then, we run the following algorithm (pseudocode here) to compute the most any \textit{MU} manipulator in the instance can gain.
\begin{itemize}
    \item $\text{max-gain} \leftarrow 0$
    \item $\text{compute }\pivec^{\textsc{E}}(\inst)$
    \item \text{for all $i \in N$:}
    \begin{itemize}
        \item compute $\pivec^{\textsc{E}}(\tilde{\inst}_i )$
        \item if $\pi_i^{\textsc{E}}(\tilde{\inst}_i ) - \pi_i^{\textsc{E}}(\inst) > $max-gain, set max-gain to this larger difference.
    \end{itemize}
    \item return max-gain
\end{itemize}

\subsection{Transparency Experimental Methods} \label{app:transparency}

We model our transparency experiments after the experiments done by \citet{flanigan2021transparent}. The two rounding procedures that we utilize in this paper are \textit{ILP} and \textit{Pipage}.\\ 

\noindent\textbf{Theoretical Bounds.} In order to get theoretical upper bounds on the change in any individual's marginal probabilities as a result of rounding, we utilize the results from \citet{flanigan2021transparent}. Theorem 3.2 gives us an upper bound of $b_1 \coloneqq k/m$, while Theorem 3.3 gives a bound of:\begin{align*}
    b_2 \coloneqq \frac{\sqrt{\frac{1}{2}(1+ \frac{\ln 2}{\ln |\mathcal{W}_\inst|})} \cdot \sqrt{|\mathcal{W}_\inst| \ln(|\mathcal{W}_\inst|)} + 1}{m}
\end{align*}
You can find instance-specific values of $n, k$ and $|\mathcal{W}_\inst|$ in \Cref{app:algo-implementation}. For our experiments, we set the number of panels $m$ to $1000$.

Then, for a given instance $\inst$, we derived our theoretical bound on the minimum probability as $\min(\boldsymbol{\pi}^{\gold_1}(\inst)) - \min(b_1, b_2)$ and the theoretical bound on the maximum probability as $\max(\boldsymbol{\pi}^{\gold_1}(\inst)) + \min(b_1, b_2)$.\\




\noindent\textbf{Pipage.} We ran the pipage algorithm implemented by \citet{flanigan2021transparent} for 1000 independent repetitions for each of our instances. We stored the minimum and maximum marginals from each repetitions and computed the average minimum and average maximum marginal. Additionally, we computed the standard deviation of minimum and maximum marginals across these repetitions. We ultimately found that the spread of the data was very low --- standard deviation of minimum and maximum marginals across repetitions did not exceed 0.0015 across all of our instances, and was typically much lower.

\end{document}